\documentclass[twoside]{article}

\usepackage[preprint]{aistats2027}
\usepackage{graphicx}
\usepackage{booktabs}
\usepackage{subcaption}
\usepackage{notations}  
\usepackage{natbib}
\usepackage{placeins}
\usepackage{hyperref}
\usepackage[dvipsnames]{xcolor}
\usepackage[table]{xcolor}
\usepackage{wasysym}   
\usepackage{float}   
\usepackage{makecell}
\usepackage[normalem]{ulem}
\usepackage{comment}
\usepackage{thm-restate}
\usepackage{tabularx}
\usepackage{wrapfig}
\usepackage{paralist}
\usepackage{multirow}
\usepackage{makecell}
\usepackage[capitalize,noabbrev]{cleveref}

\theoremstyle{plain}
\newtheorem{theorem}{Theorem}[section]

\newtheorem{lemma}[theorem]{Lemma}

\theoremstyle{definition}

\newtheorem{assumption}[theorem]{Assumption}

\theoremstyle{remark}

\crefname{example}{Example}{Examples}
\crefname{theorem}{Theorem}{Theorems}
\crefname{lemma}{Lemma}{Lemmas}
\Crefname{lemma}{Lemma}{Lemmas}
\crefname{remark}{Remark}{Remarks}
\crefname{proposition}{Proposition}{Propositions} 
\crefname{corollary}{Corollary}{Corollaries}
\crefname{definition}{Definition}{Definitions}
\crefname{assumption}{Assumption}{Assumptions}

\begin{document}

\twocolumn[

\runningtitle{Causal Discovery for Linear Models with Latent Confounding via Identifiable Representation Learning}

\aistatstitle{Causal Discovery for Linear Models with Latent Confounding \\via Multi-domain Identifiable Representation Learning}

\aistatsauthor{ Danru Xu \And Aapo Hyvärinen \And  Sara Magliacane}

\aistatsaddress{ University of Amsterdam \And  University of Helsinki \And Saarland University\\ University of Amsterdam} ]

\begin{abstract}
  Causal discovery with latent confounding is particularly challenging, but a common setting in real-world data.  In this work, we focus on linear causal models with latent confounding, represented by acyclic directed mixed graphs (ADMGs). 
  Existing methods that fully identify ADMGs typically assume non-Gaussianity or graphical constraints, such as bow-freeness.
  In contrast, we propose Linear ADMG Learning (LinADMG), a framework for learning  ADMGs for linear causal models through identifiable representation learning. Rather than relying on non-Gaussianity or bow-freeness, LinADMG leverages multi-domain settings to identify the noise variables. We show that the identifiability of the noise variables is sufficient for recovering all direct causal relations, and that, under an additional latent-confounding no-cancellation condition, LinADMG can identify the whole ADMG. We implement LinADMG with four identifiable representation learning approaches from the literature and validate the methods on simulated and semisynthetic datasets.
\end{abstract}


\section{Introduction}
\label{sec:introduction}

\begin{figure*}[t]
    \centering
    \includegraphics[width=1.0\textwidth]{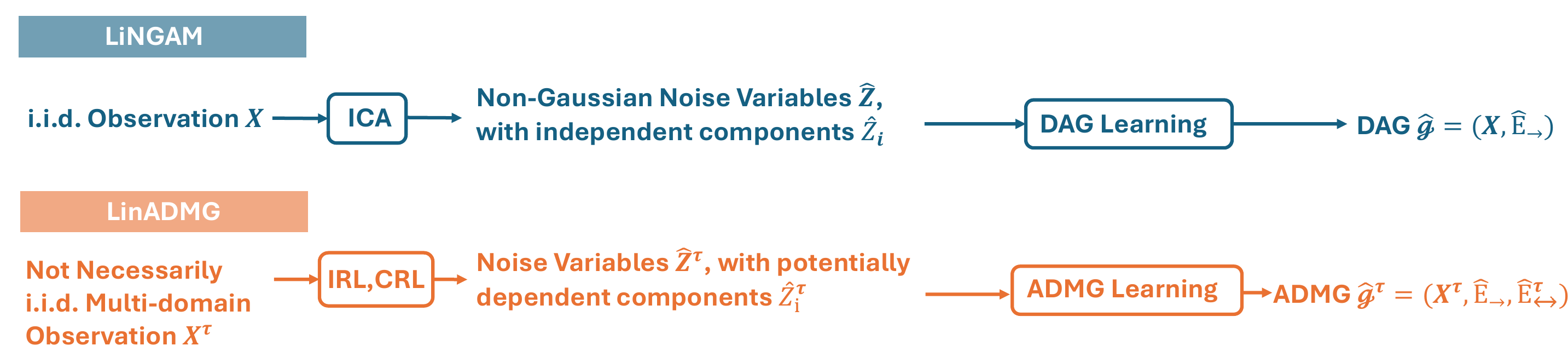}
    \caption{Overview of LiNGAM and LinADMG (our method) pipelines. LiNGAM applies ICA to the observed data $\Xb$ to estimate independent non-Gaussian noise variable $\widehat{\Zb}$, from which it infers the directed edge set $\widehat{E}_{\rightarrow}$ of a DAG. LinADMG replaces ICA with any appropriate IRL/CRL on a multi-domain dataset $\Xb^{\tau}$ with domain index $\tau$ to estimate dependent noise variable $\widehat{\Zb}^{\tau}$ without specific parametric assumptions on the noise variables, and thereby enabling the additional recovery of bidirected edges $\widehat{E}^{\tau}_{\leftrightarrow}$ representing latent confounding.}
    \label{fig:linadmg_intro}
\end{figure*}

Causal discovery aims to recover causal relationships from observational data \citep{spirtes2013causal, pearl2003causality,peters2017elements}. However, a common challenge in real-world data is the existence of latent confounding: an unobserved common cause of observed variables. Classical constraint-based methods allowing latent confounding, most notably FCI \citep{spirtes2013causal}, return only a Markov equivalence class rather than a unique causal structure.

To pursue a stronger identification goal for the causal structure with latent confounding, we first revisit the source-separation
perspective underlying LiNGAM \citep{shimizu2006linear}. In a linear structural causal model without latent confounding, the observed variables can be regarded as linear mixtures of the noise variables. By assuming mutually independent and non-Gaussian noise variables, independent component analysis (ICA) \citep{Hyva00NN} can be used to recover the unobserved noise variables and the corresponding mixing transformation up to permutation and component-wise scaling. The remaining ambiguities can be resolved using the acyclicity property of the causal graph, thereby fully recovering the underlying directed acyclic graph (DAG).

Latent confounding, however, fundamentally changes this picture. Suppose the underlying causal model is a DAG over both observed and latent variables. Marginalizing the latent variables generally destroys the DAG representation over the observed variables alone: latent common causes induce confounding relationships in addition to direct causal effects. Acyclic directed mixed graphs (ADMGs) provide a way to represent this structure, with directed edges encoding direct causal relations and bidirected edges encoding latent confounding \citep{richardson2003markov}. At the same time, marginalization can induce statistical dependence among the noise terms associated with different observed variables, violating the independent-source assumption underlying standard ICA. As a result, learning an ADMG is substantially more challenging than learning a DAG.

Existing extensions of LiNGAM to latent-variable settings exploit non-Gaussian noise variables \citep{hoyer2008estimation,salehkaleybar2020learning,cai2023causal,schkoda2024causal}, while \citet{wang2023causal} establish identification of the full ADMG under additional structural assumptions, including bow-freeness.

In this work, we also aim to fully identify the ADMG, but from a different perspective. As shown in Fig.~\ref{fig:linadmg_intro}, inspired by LiNGAM, we propose a framework, LinADMG, that follows the same principle: first identify the latent noises and then use the acyclicity property of ADMGs to recover the causal graph. However, unlike LiNGAM, latent confounding makes the noises dependent, so ICA is no longer suitable.

Recent developments in identifiable representation learning (IRL), including causal representation learning (CRL) \citep{scholkopf2021toward}, provide promising alternatives to ICA for LinADMG. IRL studies conditions under which latent representations can be identified from observed data generated through an unknown mixing process. CRL can be viewed as a special form of IRL, where the recovered latent variables are additionally equipped with causal
semantics.
A key insight from IRL is that latent representations that are not identifiable from a single observational distribution may become identifiable when we can access multiple domains in which the representation distributions vary. Such variation may arise from mixture modeling \citep{kivva2022identifiability}, sparse masking \citep{xu2024sparsity}, hard interventions \citep{varici2024linear}, or other changes in the latent distribution \citep{xu2026identifying}.

We adopt the same principle in LinADMG. Specifically, we consider settings in which the direct causal relations among the observed variables remain invariant, while the noise distribution varies across domains. We summarize our contributions as follows:

\begin{compactitem}
    \item 
    We propose LinADMG, a general framework that views linear causal discovery with latent confounding, which first identifies a potentially dependent noise representation and then recovers the causal graph. This extends the source-separation perspective underlying LiNGAM beyond independent non-Gaussian noises to potentially dependent noises identifiable through distributional variability. 
    \item 
    We show that noise variable identifiability is sufficient to recover all direct causal relations. Under an additional latent-confounding no-cancellation condition, which captures a different aspect of the model than bow-freeness and faithfulness, LinADMG further identifies the complete bidirected structure and therefore the full ADMG.
    \item 
    We implement LinADMG with four representative identifiable representation-learning settings: Diagonal Gaussian Mixture Models \citep{kivva2022identifiability}, Unpaired Partial Observations \citep{xu2024sparsity}, Multi-domain Scalings \citep{xu2026identifying}, and Unknown Hard Interventions \citep{varici2024linear}. These are illustrative examples, and LinADMG applies to any method that identifies the latent variables.
    We conduct experiments on synthetic and semisynthetic datasets and show accurate recovery under the linear model. 
\end{compactitem}

\section{Preliminaries}
\label{sec: prob setting}
We consider linear structural causal models (SCMs) with latent confounding and do not consider selection bias \citep{spirtes1995}. Let $\Xb=(X_1,\ldots,X_n)^T\in\mathbb{R}^n$ denote the observed variables and $\Ub\in\mathbb{R}^m$ the latent variables. Following \citet{hoyer2008estimation}, we assume the underlying causal model is
\begin{equation}
\begin{pmatrix}
\Xb\\
\Ub
\end{pmatrix}
=
\begin{pmatrix}
B & B_1\\
\mathbf{0} & B_2
\end{pmatrix}
\begin{pmatrix}
\Xb\\
\Ub
\end{pmatrix}
+
\begin{pmatrix}
\Eb_1\\
\Eb_2
\end{pmatrix},
\label{eq:dag-sem}
\end{equation}
where $B\in\mathbb{R}^{n\times n}$ encodes direct causal effects among the observed variables, $B_1\in\mathbb{R}^{n\times m}$ encodes effects from the
latent to the observed variables, and $B_2\in\mathbb{R}^{m\times m}$ encodes
causal relations among the latent variables. We assume that all components in $\Eb_1$ and $\Eb_2$ are mutually independent. The zero lower-left block excludes causal effects from observed variables to latent variables, so the latent variables are not descendants of the observed variables.

Solving the structural equations for $\Ub$ and substituting them into the equations for $\Xb$ gives
\begin{equation}
\Xb=B\Xb+\Zb,
\qquad
\Zb:=B_1(I-B_2)^{-1}\Eb_2+\Eb_1.
\label{eq:base-sem}
\end{equation}
Although $\Eb_1$ and $\Eb_2$ are mutually independent, marginalizing the latent variables can potentially induce dependence among the components of the composite noise $\Zb:=\{Z_1,...,Z_n\}$.

We use an acyclic directed mixed graph (ADMG) \citep{richardson2003markov} to represent the causal structure among the observed variables induced by the SCM in Eq.~\ref{eq:dag-sem}. With a slight abuse of notation, we use $\Xb$ to denote the corresponding node set:
\begin{equation}
\label{equ:admg}
\Gcal=(\Xb,E_{\to},E_{\leftrightarrow}).
\end{equation}
A directed edge $X_i\to X_j\in E_{\to}$ is present when $B_{ji}\neq 0$, representing a direct causal relation from $X_i$ to $X_j$. We assume the directed subgraph $(\Xb,E_{\to})$ is acyclic. A bidirected edge $X_i\leftrightarrow X_j\in E_{\leftrightarrow}$ is present when there exists a latent variable $U_i\in\Ub$ that is a common ancestor of $X_i$ and $X_j$ through directed paths whose intermediate nodes are all latent. Thus, $E_{\to}$ encodes the direct causal relation within $\Xb$, while $E_{\leftrightarrow}$ captures the latent-confounding induced by $\Ub$. An ADMG is \emph{bow-free} if no pair of observed variables is connected by both a directed edge and a bidirected edge.

Our goal is to recover the ADMG
$\Gcal=(\Xb,E_{\rightarrow},E_{\leftrightarrow})$, including both the directed causal structure $E_{\rightarrow}$ and the latent-confounding structure
$E_{\leftrightarrow}$. Importantly, unlike latent variable LiNGAM (lvLiNGAM) \citep{hoyer2008estimation} and its variants \citep{salehkaleybar2020learning,cai2023causal,schkoda2024causal}, our work does \emph{not} necessarily require the exogenous variables $\Eb_1$ and $\Eb_2$ to be non-Gaussian. Moreover, while the Markov property follows from our SCM assumptions,
we do not require faithfulness, which assumes that every conditional independence in the observed distribution corresponds to an $m$-separation in the ADMG \citep{richardson2003markov, spirtes2001causation}.
Instead, we require that the noise variables $\Zb$ in Eq.~\ref{eq:base-sem} can be \emph{identified} by linear representation $W\Xb$ up to \emph{permutation and component-wise scaling} \citep{comon1994independent}, i.e., there exists a permutation matrix $P$ and an invertible diagonal matrix $D$ such
that $W\Xb=DP\Zb$, where $W\in \RR^{n\times n}$.

In general, however, even independent $\Zb$ are not identifiable from the observed variables $\Xb$
without additional assumptions 
\citep{comon1994independent,Hyva99NN,
locatello2019challenging}.
Moreover, if $\Zb$ are potentially dependent, this is even more challenging. To solve this, a common strategy from IRL/CRL is to use distributional variability/heterogeneity in the latent variables across domains. 
We adopt the same principle here. Specifically, we define $\Zb^{\tau}$ as the noise variables in domain $\tau\in[T]$ and allow the noise variables to follow various distributions $P^{\tau}_{\Zb}$. 
We assume the structural coefficient matrix $B$ is invariant. Then we have the domain-specific observation variable $\Xb^{\tau}$:
\begin{equation}
\label{equ: I-B}
\Xb^{\tau}
=
(I-B)^{-1}\Zb^{\tau},
\qquad
\Zb^{\tau}\sim P^{\tau}_{\Zb},
\qquad
\tau\in[T].
\end{equation}
In practice, such domains can correspond to different environments, experimental conditions, populations, or interventions that change the distribution of noises. 

Next, we present four representative identifiable settings adopted from prior IRL/CRL works as \emph{examples}:

\textbf{Multi-domain Scalings (MD-Scalings)} \citep{xu2026identifying}.
Let $\Zb$ follow any arbitrary distribution with finite second moment and define $\Zb^\tau=D^\tau\Zb$,
where $D^\tau=\operatorname{diag}(d^\tau_{1},..., d^\tau_{n})$ is domain-specific. Identifiability of $\Zb^{\tau}$ up to permutation and scaling can be achieved with sufficient variability of the scaling matrix $D^\tau$, formalized by the full-rank condition on the matrix whose rows contain all fourth-order monomials $\{d^\tau_{i_1}d^\tau_{i_2}d^\tau_{i_3}d^\tau_{i_4}\}_{i_1\leq i_2\leq i_3\leq i_4}$.

\textbf{Diagonal Gaussian Mixture Models (DiagGMMs)}\citep{kivva2022identifiability}.
Let $u^\tau$ denote the domain label. We generate
$\Zb^\tau\mid u^\tau
\sim
\mathcal{N}\!\left(
    \boldsymbol{\mu}(u^\tau),\,\Sigmab(u^\tau)
\right)$,
where $\Sigmab$ is a diagonal. The domain label is unobserved, so the samples from all domains are mixed, resulting in a Gaussian mixture distribution. Identifiability up to permutation and scaling holds when there exists a pair of domains $\tau\neq\tau'$ such that the variance ratios
$\operatorname{Var}(\Zb_i^\tau)/\operatorname{Var}(\Zb_i^{\tau'})$
are distinct across $i$.

\textbf{Unpaired Partial Observations (UP-Observations)} \citep{xu2024sparsity}.
Let $\Zb$ follow an arbitrary distribution and define $\Zb^\tau=M^\tau \Zb$, $M^\tau=\operatorname{diag}(m_1^\tau,\ldots,m_n^\tau)$, where $m_i^\tau\in\{0,1\}$ refers whether the $i$th component is observed. Thus, each sample uses only a subset of the latent variables, and the resulting representation $\Zb^\tau$ is sparse. The masking pattern $\tau$ is unobserved and the samples are unpaired/mixed across patterns. Let $S^\tau:=\{i:m_i^\tau=1\}$ denote the active support under pattern $\tau$. Identifying $\Zb^\tau$ up to permutation and scaling holds under the sufficient support-variability condition: $\forall i\in[n]$,
$\bigcup_{\tau\in[T]:i\notin S^{\tau}} S^{\tau} = [n]\setminus\{i\}$.

\textbf{Unknown Multi-node Hard Interventions (UMN-Hard Interventions)} \citep{varici2024linear}. Assume that the noise variables $\Zb$ follow an additive-noise model. In each interventional domain $\tau$, an unknown subset $I^\tau\subseteq[n]$ is targeted. A hard intervention on $i\in I^\tau$ removes the dependence of $Z_i$ on its parents and replaces its observational conditional distribution $p_i(z_i\mid z_{\mathrm{pa}(i)})$ with an interventional distribution $q_i^\tau(z_i)$ that is independent of its parents. Define the intervention-signature matrix $D_{\mathrm{int}}\in\{0,1\}^{n\times T}$ by
$[D_{\mathrm{int}}]_{i,\tau}=\mathbf{1}\{i\in I^\tau\}$.
The intervention targets are unknown, but the collection of interventions must be sufficiently diverse: $D_{\mathrm{int}}$ must have full row rank. Under this condition, $\Zb^{\tau}$ are identifiable up to permutation and scaling.

Importantly, our general framework requires no particular form of heterogeneity of $P^{\tau}_{\Zb}$. Any data-generating setting in which the noise variables are identifiable up to permutation and scaling, while keeping $B$ invariant across domains, is compatible with our framework. 
The latent-confounding structure may itself vary across domains when the mechanism generating $P^\tau_{\Zb}$ changes. For example, under UMN-Hard Interventions setting, interventions may change the causal structure among latent variables and then change the latent-confounding relations induced among the observed variables. Thus, we allow the bidirected edge set to be domain-specific and denote it by $E^\tau_{\leftrightarrow}$. Finally, whether the domain index $\tau$ is observed depends on the identification setting, and our framework does not generally require access to domain labels.

\section{Linear ADMG learning}
\label{sec:linadmg}

We now describe a general framework called Linear ADMG Learning (LinADMG). It consists of three main steps. First, using observations $\{\Xb^\tau\}_{\tau=1}^T$ generated according to $\Xb^\tau=A\Zb^\tau$ with the shared mixing matrix $A:=(I-B)^{-1}$, we apply an IRL/CRL method to estimate $A$ and the corresponding domain-specific noise variables $\Zb^\tau$. Second, following the postprocessing procedure of LiNGAM \citep{shimizu2006linear}, we use acyclicity to resolve the permutation and scaling ambiguities of the estimated mixing matrix and then recover the directed structural coefficients $B$. Third, we estimate the bidirected edges by testing dependence among the estimated noise variables.

\textbf{Step 1: Estimate the mixing matrix $A$.}
We apply a linear representation-learning method to the observed data across domains. When the domain labels are observed, the input is the collection $\{\Xb^\tau:\tau\in[T]\}$; when they are unobserved, samples from all domains
are pooled and treated as a mixture distribution. We assume that the representation learning method used is able to identify $\Zb^{\tau}$, which equivalently means identifying $A$ up to permutation and component-wise scaling, i.e.,
$\widehat A
=
A P D,
\label{eq:crl-ambiguity}
$
for some permutation matrix $P$ and invertible diagonal matrix $D$. 

\textbf{Steps 2: Recover the directed edge coefficients $B$ via LiNGAM postprocessing \citep{shimizu2006linear}.}
Since $B$ is a coefficient matrix of a DAG, we can recover it from $\widehat A$ using the LiNGAM
post-processing procedure \citep{shimizu2006linear}. 
Inverting $\widehat A$ gives $\widehat A^{-1}
= D^{-1}P^T A^{-1}=
D^{-1}P^T (I-B)$.
Using the fact that $B$ is acyclic, $P$ can be identified since there is a unique row permutation of $\widehat A^{-1}$ such that every diagonal entry is nonzero \citep{shimizu2006linear}. Once the permutation is chosen, the remaining scaling ambiguity $D$ can be removed because $I-B$ has a unit diagonal. $B$ is then estimated as  
\begin{equation}
\widehat B=I-PD\widehat A^{-1}.
\end{equation}

Accordingly, the estimated directed edge set is 
\begin{equation}
\widehat E_{\to}
:=\{X^{\tau}_j\to X^{\tau}_i:\widehat B_{ij}\neq 0\},
\label{eq:directed-edges}
\end{equation}
which is invariant across domain by assumption. Details of the post-processing procedure are in App.~\ref{app:lingam-postprocessing}.

\textbf{Step 3: Recover the bidirected edge set $E^{\tau}_{\leftrightarrow}$.}
Given $\widehat B$, we can estimate the noise variables. When the domain label $\tau$ is observed, this can be done separately for each domain:
\begin{equation}
\widehat\Zb^{\tau}
:=
(I-\widehat B)\Xb^{\tau}.
\label{eq:estimated-noise}
\end{equation}
For each pair of recovered noise variables, we add a bidirected edge if they are dependent in domain $\tau$: 
\begin{equation}
\widehat{\Zb}^{\tau}_i\not\!\perp\!\!\!\perp \widehat{\Zb}^{\tau}_j
\quad\Longrightarrow\quad
X^{\tau}_i\leftrightarrow X^{\tau}_j
\in
\widehat E^{\tau}_{\leftrightarrow}.
\label{eq:bidirected-edges}
\end{equation}
Empirically, we can test for dependence between each pair of recovered exogenous variables with an appropriate independence test given a significance $\alpha$.

We can also apply our method to settings where the domain index is unobserved, and we have a mixture of observations $\Xb$. In this case, we apply the test over $\widehat{\Zb}$.

\section{ADMG identification via LinADMG}
\label{sec: identi admg}
We next provide conditions under which the proposed framework, LinADMG, identifies the underlying ADMG. By identification of an ADMG, we mean exact recovery of both directed and bidirected edge sets. 

We proceed in two stages. First, in Lemma~\ref{lem:directed_identification}, we show that, without imposing any additional assumption, LinADMG can identify all directed edges but a subset of the bidirected edges, because in cases of fine-tuned parameters, the effect from latent confounding may be canceled and therefore cannot be detected by our method. The proof is in App.~\ref{app: proof_lemma_1}.

\begin{restatable}[Directed-edge identification and bidirected-edge subset recovery]{lemma}{directedIdentification}
\label{lem:directed_identification}
Consider the domain-specific ADMG $\Gcal^\tau = (\Xb^\tau,E_{\rightarrow},E^\tau_{\leftrightarrow})$
generated by the linear SCM in Sec.~\ref{sec: prob setting}. Following the Steps in LinADMG, we have $\widehat B=B$,
and the directed edge set is identified and no false-positive estimated bidirected edges, i.e., 
$\widehat E_{\rightarrow}=E_{\rightarrow}$, $\widehat E^\tau_{\leftrightarrow}
\subseteq
E^\tau_{\leftrightarrow}$.

\end{restatable}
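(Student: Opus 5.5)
The plan has three parts. First we show $\widehat B=B$ by running the LiNGAM post-processing argument on the Step 1 output. Then $\widehat E_{\rightarrow}=E_{\rightarrow}$ follows at once. Finally, the bidirected claim comes from identifying $\widehat\Zb^\tau$ with the true composite noise and comparing its dependence pattern with the latent ancestry encoded by $B_1(I-B_2)^{-1}$.

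\textbf{Recovering $B$.} By the identifiability assumption of Step 1 we have $\widehat A=APD$, hence $\widehat A^{-1}=D^{-1}P^T(I-B)$. Because $(\Xb,E_{\rightarrow})$ is acyclic, there is a simultaneous row/column permutation $Q$ making $Q(I-B)Q^T$ lower triangular with unit diagonal. This matrix has nonzero determinant, and its only nonzero-diagonal row arrangement is the identity. The key claim is that $I-B$ admits exactly one row permutation $\Pi$ with $\Pi(I-B)$ having an all-nonzero diagonal, namely $\Pi=I$. To prove it, suppose $\Pi\neq I$. Then $\Pi$ corresponds to a non-identity permutation $\sigma$ with $(I-B)_{\sigma(i) i}\neq 0$ for all $i$. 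Off the diagonal this says $B_{\sigma(i)i}\neq0$ whenever $\sigma(i)\neq i$. Any nontrivial cycle of $\sigma$ then gives a directed cycle $i\to\sigma(i)\to\sigma^2(i)\to\cdots\to i$ in $E_{\rightarrow}$, which contradicts acyclicity.

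Applying this to $\widehat A^{-1}$, the permutation selected in Step 2 must be exactly $P$, since left multiplication by $D^{-1}$ does not change the zero pattern. Rescaling each row so the diagonal equals one then removes $D^{-1}$, because $I-B$ has unit diagonal. This gives $PD\widehat A^{-1}=I-B$ and therefore $\widehat B=B$. The directed edge sets are defined by the supports of $\widehat B$ and $B$, so $\widehat E_{\rightarrow}=E_{\rightarrow}$.

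\textbf{No false bidirected edges.} Since $\widehat B=B$, we get $\widehat\Zb^\tau=(I-B)\Xb^\tau=\Zb^\tau$ exactly. Write $\Zb^\tau=M\Eb_2+\Eb_1$ with $M:=B_1(I-B_2)^{-1}$, using the domain-$\tau$ versions of the matrices and noises. Expand $(I-B_2)^{-1}=\sum_{k\ge0}B_2^k$, which is valid by nilpotency from acyclicity of the latent part. Then $M_{ik}\neq0$ only if there is a directed path $U_k\to\cdots\to X_i$ whose intermediate nodes are all latent. Now suppose $X_i\leftrightarrow X_j\notin E^\tau_{\leftrightarrow}$. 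Then no latent $U_k$ is such a common ancestor of both, so for each $k$ at least one of $M_{ik}$, $M_{jk}$ is zero. It follows that $Z_i=E_{1,i}+\sum_{k\in S_i}M_{ik}E_{2,k}$ and $Z_j=E_{1,j}+\sum_{k\in S_j}M_{jk}E_{2,k}$ with $S_i\cap S_j=\emptyset$. These are functions of disjoint sets of mutually independent variables, so $Z_i\perp\!\!\!\perp Z_j$. By the rule in Eq.~\ref{eq:bidirected-edges}, no edge is added, and hence $\widehat E^\tau_{\leftrightarrow}\subseteq E^\tau_{\leftrightarrow}$.

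The main obstacle I expect is bookkeeping in the last step. The path-support claim for $M$ must be justified carefully: each entry of $M$ is a sum over latent-only paths, so a zero entry is implied when no such path exists, although the converse may fail through cancellation, which is why only inclusion holds. The argument also has to hold under the various domain mechanisms, for instance interventions changing $B_2$ in a domain. I would handle this by stating the independence argument for an arbitrary domain-specific latent SCM that induces $\Zb^\tau$ with independent $\Eb_1^\tau$ and $\Eb_2^\tau$. For settings such as MD-Scalings, where $\Zb^\tau=D^\tau\Zb$, coordinatewise scaling preserves the independence pattern, so the same conclusion applies.
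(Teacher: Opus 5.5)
Your proposal is correct and follows the paper's proof: recover $B$ via the LiNGAM permutation-and-rescaling step, deduce $\widehat{\Zb}^\tau=\Zb^\tau$, and then use the support of $B_1(I-B_2)^{-1}$ to show that a missing bidirected edge forces independent composite noises. The differences are minor. You prove the cited LiNGAM permutation lemma directly, via the cycle decomposition of a non-identity permutation. You also argue the inclusion $\widehat E^\tau_{\leftrightarrow}\subseteq E^\tau_{\leftrightarrow}$ by contrapositive, with an explicit disjoint-support independence argument, whereas the paper argues by contradiction and only asserts that dependence requires a shared latent source.
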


To identify all the bidirected edges, we introduce a generic no-cancellation condition ensuring that latent confounding is detectable in the noise dependencies.

\begin{assumption}[Latent-confounding no-cancellation]
\label{assump: no-cancel}
For every domain $\tau$ and every pair of distinct noise variables $Z^{\tau}_i,Z^{\tau}_j$, $i\neq j$, 
\begin{equation}
X_i^{\tau}\leftrightarrow X_j^{\tau}\in E^\tau_{\leftrightarrow}
\quad\Longrightarrow\quad
Z_i^\tau\not\!\perp\!\!\!\perp Z_j^\tau.
\end{equation}

\end{assumption}

Asm.~\ref{assump: no-cancel} connects our latent confounding definition of bidirected edges to the error-dependence parameterization commonly used in linear structural equation models (SEM) with unobserved confounding. In correlated-error SEMs, \citet{bhattacharya2021differentiable} define a bidirected edge
$X_i\leftrightarrow X_j$ whenever the corresponding error covariance is nonzero, while \citet{wang2023causal} use bidirected edges to encode dependence between the corresponding error terms. In these formulations, the relation between a bidirected edge and dependence among the noise variables is therefore built directly into the model. 

Different from their definition, we define $E^\tau_{\leftrightarrow}$ through the latent projection of the underlying causal DAG: $X_i^{\tau}\leftrightarrow X_j^{\tau}$ represents latent confounding between the two observed variables. After the latent variables are marginalized, however, the latent effects can cancel at the level of the noise dependence, as shown in Fig~\ref{fig:cancel_assump}. (b), where the dependence between $Z_1^\tau$ and $Z_2^\tau$ vanishes. Asm.~\ref{assump: no-cancel} rules out such cancellations, and thus aligns the definition of bidirected edges used in our formulation with the noise-dependence parameterization used in prior work.

Previous work commonly assumes bow-freeness with faithfulness because disentangling the causal effect from directed causal relations and a common latent confounder is challenging. We provide two examples in Fig.~\ref{fig:cancel_assump} showing that these assumptions, compared with Asm.~\ref{assump: no-cancel}, capture different aspects of the model and do not imply one another.

\begin{figure}[t]
    \centering
\includegraphics[width=0.48\textwidth]{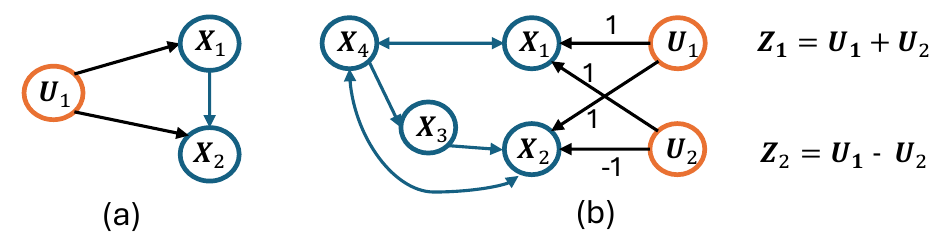}
\caption{Examples showing that Asm.~\ref{assump: no-cancel} and bow-freeness/faithfulness capture different properties.
(a) $X_1$ and $X_2$ share the latent parent $U_1$ while $X_1\to X_2$, resulting a violation of bow-freeness. Nevertheless, the latent confounding is detectable by the dependency of noise variables, so Asm.~\ref{assump: no-cancel} holds.
(b) Conversely, bow-freeness and faithfulness do not guarantee Asm.~\ref{assump: no-cancel}. Let $U_1$ and $U_2$ be independent Gaussian variables with equal variance. The resulting composite noise terms are $Z_1=U_1+U_2$ and $Z_2=U_1-U_2$, so
$\operatorname{Cov}(Z_1,Z_2)=0$ and hence $Z_1\perp\!\!\!\perp Z_2$, despite the existence of $X_1$ and $X_2$ as they share latent confounders.}
\label{fig:cancel_assump}
\end{figure}

In Thm.~\ref{thm:linadmg-recovery}, we show that under Asm.~\ref{assump: no-cancel}, LinADMG identifies the complete ADMG, with proof in App.~\ref{app: proof_thm}.

\begin{restatable}[Identifiability of LinADMG]{theorem}{linadMGrecovery}
\label{thm:linadmg-recovery}
Suppose that the conditions of Lemma~\ref{lem:directed_identification} hold and that Asm.~\ref{assump: no-cancel} is satisfied. Then, LinADMG identifies the complete domain-specific ADMG $\Gcal^\tau
=(\Xb^{\tau},E_{\rightarrow},E^\tau_{\leftrightarrow})$ 
for every domain $\tau$, i.e., $\widehat E_{\rightarrow}=E_{\rightarrow}$
and $\widehat E^\tau_{\leftrightarrow}= E^\tau_{\leftrightarrow}$.
\end{restatable}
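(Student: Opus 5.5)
The proof will combine Lemma~\ref{lem:directed_identification} with Asm.~\ref{assump: no-cancel} to supply the missing inclusion for the bidirected edges. Lemma~\ref{lem:directed_identification} already gives $\widehat B=B$ and $\widehat E_{\rightarrow}=E_{\rightarrow}$, together with $\widehat E^\tau_{\leftrightarrow}\subseteq E^\tau_{\leftrightarrow}$ for every domain $\tau$. It therefore suffices to show the reverse inclusion $E^\tau_{\leftrightarrow}\subseteq \widehat E^\tau_{\leftrightarrow}$ for each fixed $\tau$.

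\textbf{Step 1: the estimated noise equals the true noise.} Fix $\tau$. Since $\widehat B=B$, Eq.~\ref{eq:estimated-noise} and Eq.~\ref{equ: I-B} give
\[
\widehat\Zb^\tau=(I-\widehat B)\Xb^\tau=(I-B)(I-B)^{-1}\Zb^\tau=\Zb^\tau .
\]
The equality is exact, with no remaining permutation or scaling ambiguity. The reason is that Step~2 of LinADMG fixes the ordering and the unit diagonal through $B$, so that $\widehat Z^\tau_i$ is the noise attached to $X^\tau_i$ and not merely to some relabelled node.

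\textbf{Step 2: reverse inclusion.} Take any $X^\tau_i\leftrightarrow X^\tau_j\in E^\tau_{\leftrightarrow}$ with $i\neq j$. Asm.~\ref{assump: no-cancel} gives $Z^\tau_i\not\!\perp\!\!\!\perp Z^\tau_j$. By Step~1 this is the same as $\widehat Z^\tau_i\not\!\perp\!\!\!\perp \widehat Z^\tau_j$. The decision rule in Eq.~\ref{eq:bidirected-edges} then places $X^\tau_i\leftrightarrow X^\tau_j$ in $\widehat E^\tau_{\leftrightarrow}$. Combining this with the subset inclusion from Lemma~\ref{lem:directed_identification} yields $\widehat E^\tau_{\leftrightarrow}=E^\tau_{\leftrightarrow}$. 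Since $\tau$ was arbitrary, the full domain-specific ADMG $\Gcal^\tau$ is recovered in every domain.

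\textbf{Main subtlety.} The step that needs care is the unobserved-label case, where the dependence test is run on the pooled $\widehat\Zb$ rather than on each $\widehat\Zb^\tau$ separately. Here I would treat identification at the population level with the domain-specific distributions taken as given, which is the level at which the statement is phrased. Alternatively, I would note that Eq.~\ref{eq:bidirected-edges} is defined per domain, so per-domain dependence is exactly what the theorem concerns.

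I would also check that the edge rule and Asm.~\ref{assump: no-cancel} are applied under the same population-level notion of (in)dependence, with no finite-sample testing error. Under that reading the argument is purely set-theoretic once Step~1 is established.
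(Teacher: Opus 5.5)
Your proposal is correct and follows the paper's proof essentially step for step. You invoke Lemma~\ref{lem:directed_identification} for $\widehat B=B$, $\widehat E_{\rightarrow}=E_{\rightarrow}$ and $\widehat E^\tau_{\leftrightarrow}\subseteq E^\tau_{\leftrightarrow}$, then use $\widehat\Zb^\tau=(I-B)\Xb^\tau=\Zb^\tau$ together with Asm.~\ref{assump: no-cancel} and the dependence rule to obtain the reverse inclusion; your remarks on the pooled-label case and the population-level notion of (in)dependence are caveats the paper leaves implicit, not gaps.
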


\section{Implementation}
\label{sec: imple}

We implement LinADMG under four identifiable data-generating settings described in Sec.~\ref{sec: prob setting}. Each setting has a corresponding IRL/CRL method for Step 1, taking $\{\Xb^{\tau}\}_{\tau=1}^T$ as input and outputting $\widehat{A}$ or its inverse. We provide a broader discussion of IRL/CRL methods compatible with our framework in App.~\ref{app: IRL and CRL}.

\textbf{\texttt{LinADMG-MuDoNLL} with MD-Scalings.} 
Our implementation, following \citet{xu2026identifying}, first computes the empirical covariance of $\Xb^{\tau}$, denoted as $S^{\tau}$, and define the model-implied covariance as 
\begin{equation}
    \widehat{\Sigma}^{\tau}=A D^{\tau} (I-L)^{-1}
    \operatorname{diag}(\sigma^2)
    (I-L)^{-T} D^{\tau} A^T,
\end{equation}
where $\sigma$ and a zero-diagonal latent structural matrix $L$ is used to model the covariance structure of latent $\Zb$. 
While \citet{xu2026identifying} choose a covariance-matching objective, we instead optimize a Gaussian negative log-likelihood (NLL) since it is a scale-adaptive measure of covariance mismatch:
\begin{align}
    &\sum_{\tau=1}^{T}
    [
        \log\det\!\left(\widehat{\Sigma}^{\tau}+\epsilon I\right)
        +
        \operatorname{tr}\!\left(
            \left(\widehat{\Sigma}^{\tau}+\epsilon I\right)^{-1}S^{\tau}
        \right)
    ],
\end{align}
where $\epsilon$ is a small positive value for numerical stability. 

\textbf{\texttt{LinADMG-diagGMMs} with DiagGMM.}
\citet{kivva2022identifiability} proved that when we apply a Gaussian mixture estimator, identifiability can hold with a (piecewise-) linear decoder-encoder structure. Inspired by this, we implement the special case, linear model, by maximizing the likelihood of a $T$-component diagonal Gaussian mixture with weights $\pi^{\tau}$, latent means $\mu^{\tau}$, latent variances $v^{\tau}$, and a shared linear decoder $W$:
\begin{align}
    -\frac{1}{N}
    \sum_{i=1}^{N}
    &\log\!\left[
        \sum_{\tau=1}^{T}
        \pi^{\tau}\,
        \mathcal{N}\!\left(
            x_i;\,
            W\mu^{\tau},\,
            W\operatorname{diag}(v^{\tau})W^T+\epsilon I
        \right)
    \right].
\end{align}
We initialize the parameters using $k$-means on observed data and optimize them with Adam. The final estimator of $W$ is the inverse of $\widehat{A}$.

\textbf{\texttt{LinADMG-SP} with UP-Observations.}
We follow the implementation of \citet{xu2024sparsity}, using a linear autoencoder with a sparsity constraint. The encoder $g$ and decoder $f$ are MLPs without nonlinear activations. After training, we extract the linear part $F$ of the affine decoder as $F_{:j}=f(\eb_j)-f(\mathbf{0})$ where $\eb_j$ is the $j$th standard basis vector. Since $F$ maps the recovered noises to observations, we return $\widehat A=F$.

\textbf{\texttt{LinADMG-UMNI} with UMN-Hard Interventions.}

We follow the implementation for the case of hard-intervention from \citet{varici2024linear}, computing Gaussian score differences from sample covariances of each domain. The method recovers a linear encoder $\widehat H$ which maps observations to latent variables. We return $\widehat A=\widehat H^{-1}$ with Moore-Penrose pseudoinverse.
\section{Related Work}
Classical constraint-based causal discovery methods, most notably FCI \citep{spirtes2013causal}, handle latent confounding by using conditional-independence tests. While these methods do not have parametric assumptions, and can be used with non-parametric tests, they return a Markov equivalence class containing multiple causal graphs, rather than a unique underlying graph. 

A different line of work relies on additional functional, distributional, or graphical assumptions to obtain stronger identification than is possible from conditional independences alone. Focusing on linear models, prior work can be roughly divided into two groups. 

The first group develops differentiable or score-based methods for learning ADMGs through continuous constrained optimization. \citet{bhattacharya2021differentiable} consider linear Gaussian structural equation models with correlated errors and derive differentiable algebraic constraints for several restricted ADMG classes: ancestral graphs, which exclude a bidirected edge between any pair connected by a directed path; arid graphs, which exclude c-trees; and bow-free graphs, which exclude pairs connected simultaneously by a directed and a bidirected edge. Their consistency guarantee recovers the true graph up to equivalence with respect to the equality constraints implied by the observed distribution. More recently, \citet{morinishi2025differentiable} extend this approach by assuming multivariate generalized normal errors, allowing non-Gaussian information to further resolve edge orientations, and provide almost-everywhere parameter identifiability for bow-free ADMGs. However, these approaches either recover the graph only up to an observational equivalence class or rely on bow-freeness and specific error-distribution assumptions. 

A second group of work leverages non-Gaussianity or independent noise variables. This line of work is inspired by LiNGAM \citep{shimizu2006linear}, which assumes no latent confounding, but connects causal discovery with source separation: under linearity and independent non-Gaussian noises, recovering the ICA mixing matrix identifies the underlying DAG. This idea has been extended to time-series models \citep{hyvarinen2010estimation} and nonlinear causal models \citep{monti2020causal,reizinger2023jacobian,montagna2025identifiability}. Recently, \citet{heurtebise2026multiview} replace non-Gaussianity with variance diversity in a multi-view setting, while still assuming independent noises.

A few recent works extend LiNGAM to handle latent confounding by leveraging overcomplete independent component analysis (OICA), beginning with \citet{hoyer2008estimation}, which considers linear latent-variable models in which the latent variables and exogenous variables are mutually independent and non-Gaussian. \citet{salehkaleybar2020learning} extend this to latent confounding by identifying causal orders among observed variables, while
\citet{maeda2020rcd} additionally infer direct causal relations between unconfounded pairs and detect pairs sharing latent confounders. These results, however, do not in general result in exact recovery of the ADMG. Cumulant-based methods \citep{cai2023causal,schkoda2024causal} provide asymptotically correct recovery of causal structure under different specialized assumptions on the latent-variable model, such as at least three observed children per latent confounder and a local bound on the number of latent confounders, respectively. In addition, \citet{wang2023causal} provided full recovery of the graph under the bow-free assumption. In contrast, our method chooses a different perspective, identifiable representation learning, and does not require non-Gaussianity or the specific
graphical and latent-structural restrictions required by these approaches.

\section{Experimental Results}
\label{sec: experiment result}

We assess the performance of the proposed \texttt{LinADMG-} methods using synthetic data and then present an application to a semisynthetic dataset \texttt{causalAssembly}.

\begin{figure*}[t]
    \centering
    \includegraphics[width=1.0\textwidth]{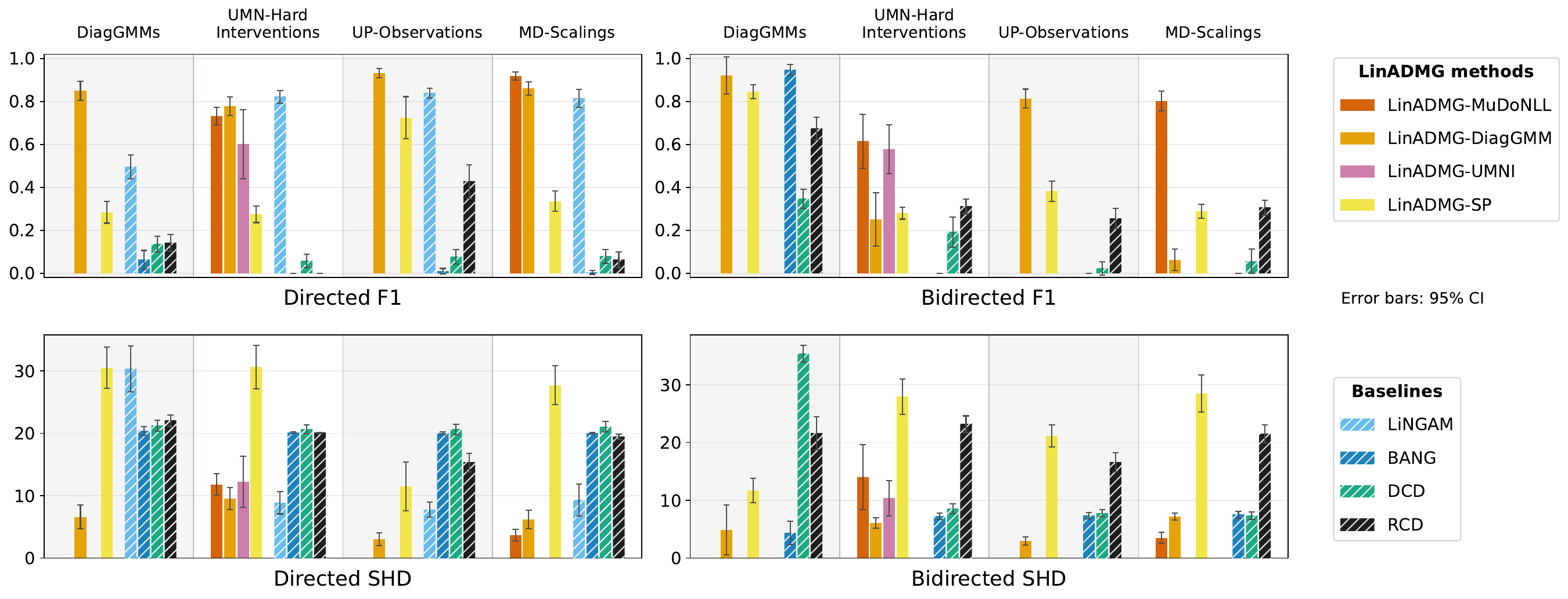}
    \caption{
        Estimation of the directed (left) and bidirected (right) edges of the ground-truth ADMG under four DGPs listed above. Bars report means over 20 random seeds, with  95\% confidence intervals. Warm colors are the proposed \texttt{LinADMG-} methods, while hatched cold colors are baselines. The top row reports F1 score ($\uparrow$), and the bottom row reports SHD ($\downarrow$). LiNGAM does not infer bidirected edges and is therefore omitted.
    }
    \label{fig:admg_f1_shd}
\end{figure*}

\textbf{Synthetic Data generation. }
We consider four linear data-generating processes (DGPs) described in Sec.~\ref{sec: prob setting}: MD-Scalings, UP-Observations, UMN-Hard Interventions, and Diag-GMMs. For the first three DGPs, we generate $m=\{5,10,20\}$ latent variables from a linear SCM, $\Ub=B_2\Ub+\Eb_2$, where the latent causal graph is sampled from an Erd\H{o}s-R\'enyi (ER) model with $k_U\in \{0, 1, 2, 3\}$, where each ER$-k_U$ graph contains $n\cdot k_U$ directed edges. The nonzero coefficients of $B_2$ are uniformly sampled from $[-0.5, -0.2]\cup [0.2, 0.5]$. We sample the scales of each component of $\Eb_2$ independently from $\mathrm{Unif}(0.2,0.5)$ and consider three noise distributions: Gaussian, Exponential, and Gumbel distributions. The three DGPs differ in how $\Ub$ generates $\Zb^\tau$ with domain number $T=\{1,2,3\}\times m$. 

For \emph{UMN-Hard Interventions}, we generate $T$ intervention domains and one additional observational reference domain. In each intervention domain $\tau$, we select a target set $\mathcal{I}^\tau$ and replace the structural equation of each targeted latent variable by an independent noise term. We sample target sets such that every latent variable is targeted in at least one domain and the binary domain-by-variable target matrix has full column rank. Specifically, we construct $B_2^\tau$ by setting the rows of $B_2$ corresponding to $\mathcal{I}^\tau$ to zero and generate $\Zb^\tau=(I-B_2^\tau)^{-1}\Eb_2$. 

For \emph{UP-Observations}, we generate a domain-specific binary mask: $M^\tau\in\{0,1\}^m$ with half active components. If a component is inactive in every masked domain, we activate it in a randomly selected domain. We additionally generate a reference domain with $M^\tau=I$, giving $T+1$ domains in total. Then apply it to $\Ub$ to get $\Zb^{\tau}=M^\tau\Ub$. 

For \emph{MD-Scalings}, we generate the domain-specific scaling matrix as
$D^\tau=\operatorname{diag}(d_1^\tau,\ldots,d_n^\tau)$, where the diagonal entries are sampled independently from $\mathrm{Unif}(0.2,1)$. We additionally generate an unscaled reference domain with $D^\tau=I$, giving $T+1$ domains in total. Then apply it to $\Ub$ to get $\Zb^{\tau}=D^\tau\Ub$. 

For \emph{Diag-GMMs}, we generate five domains indexed by an observed auxiliary variable $a^\tau\in\{-2,-1,0,1,2\}$. Conditional on $a^\tau$, we directly sample the composite noise from $\Zb^\tau\mid a^\tau \sim\mathcal{N}\bigl(\mu(a^\tau),\Sigma(a^\tau)\bigr)$, where $\mu(a)=\alpha+a\beta$ and $\Sigma(a)=(L_0+aL_1)(L_0+aL_1)^T+10^{-6}I$. $\alpha$ and $\beta$ are sampled independently from $\mathcal{N}(0,0.5^2)$. The matrices $L_0$ and $L_1$ are diagonal, and sampled independently from $\mathrm{Unif}(0.2,0.5)$ and $\mathcal{N}(0,0.08^2)$, respectively. Thus, though $\Zb^\tau$ are independent conditional on $a^{\tau}$, pooling the domains can induce dependence through the shared auxiliary variable. 

For all four DGPs, we consider $n=m$ observed variables $\Xb$. We sample the observed causal graph $\Dcal$ from an ER model with $k_X\in \{0, 1, 2, 3\}$, where each ER$-k_X$ graph contains $n \cdot k_X$ directed edges. Given $\Dcal$, we sample the linear edge weight $B$ uniformly from $[-0.5, -0.2]\cup [0.2, 0.5]$. For each domain $\tau$, we generate the observed variable $\Xb^\tau=(I-B)^{-1}\Zb^\tau$ with sample size $N\in\{1 000,5 000,50 000\}$. We run each setup over 20 random seeds.

\textbf{Baselines and Evaluation Metrics.} We 
compare our methods with DirectLiNGAM \citep{shimizu2006linear}, BANG \citep{wang2023causal}, DCD \citep{bhattacharya2021differentiable}, and RCD \citep{maeda2020rcd}. We pool observations generated via scaling and hard intervention DGPs since the baselines do not require the domain index. Due to computational constraints, we fit DCD using at most $10000$ pooled observations, while RCD uses at most $300$ observations. We evaluate directed and bidirected edges separately using F1 score and structural Hamming distance (SHD). Higher F1 and lower SHD mean better estimation.

\textbf{Synthetic Experiments Results.} Fig.~\ref{fig:admg_f1_shd} reports the accuracy with 95\% confidence intervals of directed- and bidirected-edge estimation across four DGPs with $n=m=10$, $ T=n$, $ k_X=2$, $k_U=0.5$, $N=50000$ for UMN-Hard Interventions dataset; otherwise, $N=5000$, and Gaussian noise. Overall, the proposed methods achieve the strongest identification, particularly \texttt{LinADMG-diagGMM} and \texttt{LinADMG-MuDoNLL} when their assumptions match the underlying domain-variation mechanism. Under UMN-Hard Interventions DGP, DirectLiNGAM remains competitive for directed-edge recovery. However, it does not support latent confounding estimation. We evaluate \texttt{LinADMG-UMNI} only under the UMN-Hard Interventions DGP because the UMNI representation-learning algorithm strictly requires hard interventions and a full-column-rank intervention design, which the other DGPs do not satisfy. Additional results in App.~\ref{app: all config} show that its reliable performance also requires a sufficiently large sample size within each domain. Finally, \texttt{LinADMG-SP} relies on structural sparsity and is therefore competitive under its matched dataset, but its recovery performance becomes worse when the domain variation does not produce the sparsity pattern assumed by the method.

In App.~\ref{app: B est}, we show near-perfect directed-edge weight estimation for \texttt{LinADMG-MuDoNLL} under MD-Scalings. In App.~\ref{app: all config}, we report the accuracy of directed- and bidirected-edge estimation across different configurations of $n$, sample numbers $N$, domain numbers $T$, graph density $k_X$, confounding density $k_U$, and noise distribution for all four DGPs. We also provide a comparison between the loss function proposed in Sec.~\ref{sec: imple} and those from the original papers, including comparing diagGMM (ours) with VaDE (used in \citet{kivva2022identifiability}) in App.~\ref{app: vade}, and comparing MuDoNLL (ours) with MuDoCov (proposed by \citet{xu2026identifying}) in App.~\ref{app: mudocov}, where our methods consistently outperform. 

We also compare our method with FCI \citep{spirtes1995} in App.~\ref{app: fci}. We first convert the estimated ADMG into PAG and then compute F1 for the skeleton, arrowhead, and tail. Our method significantly improves recovery of the correct tail and arrowhead, and shows comparable performance for skeleton accuracy. We also test our methods and baselines with bow-free underlying ADMG and non-Gaussian noise. The results in App.~\ref{app:bowfree} show that our methods maintain stable and robust performance, while the baselines show significant improvement due to the match of their methodological assumptions.

\textbf{\texttt{causalAssembly} with latent confounding. }
We use the causalAssembly benchmark \citep{gobler2024texttt}, a semi-synthetic benchmark derived from a real manufacturing assembly line, focusing on Stations~1 and~2. We first treat all $n_1=6$ Station~1 variables as latent and all $n_2=34$ Station~2 variables as observed. We generate 18 domains with 5000 samples each: one observational domain and 17 soft-intervention domains, each targeting 1-3 latent variables. We evaluate \texttt{LinADMG-diagGMM} and \texttt{LinADMG-MuDoNLL} over five optimization seeds. Both perform poorly in this setting, which we attribute partly to limited latent variability, since only one directed edge connects Station~1 to Station~2. We therefore also treat the first half of Station~2 variables in topological order as latent, leaving the remaining half observed. The resulting ADMG contains 25 directed and 54 bidirected edges. This results in a modest improvement. A major reason for the difficulty is the strong nonlinearity of the causalAssembly mechanisms, as \citet{gobler2024texttt} reports similarly poor performance of linear causal discovery methods on this benchmark. To separate nonlinear model misspecification from the latent-confounding challenge, we construct a linear version that preserves the graph topology but replaces the mechanisms with a linear SCM, with edge coefficients sampled uniformly from $[0.4,0.8]$. This results in substantially higher accuracy for both methods. We summarize the results in Tab.~\ref{tab:real-data-results}.

\begin{table}[t]
\centering
\footnotesize
\setlength{\tabcolsep}{2pt}
\renewcommand{\arraystretch}{0.95}
\caption{Average F1 scores $\pm$ standard deviation over 5 random seeds across different versions of causalAssembly with latent confounding with linear/nonlinear SCM and latent confounding settings.}
\label{tab:real-data-results}
\begin{tabular}{lllll}
\toprule
SCM & Latent & Method & Directed F1& Bidirected F1\\
\midrule

\multirow{2}{*}{Nonlinear}
& \multirow{2}{*}{$n_1$}
& \texttt{-diagGMM}
& $.020 \pm .000$
& $.004 \pm .000$ \\
&
&
\texttt{-MuDoNLL}
& $.074 \pm .000$
& $.004 \pm .000$ \\
\addlinespace[2pt]

\multirow{2}{*}{Nonlinear}
& \multirow{2}{*}{\makecell[l]{$n_1$ +\\50\% $n_2$}}
& \texttt{-diagGMM}
& $.000 \pm .000$
& $.554 \pm .000$ \\
&
&
\texttt{-MuDoNLL}
& $.085 \pm .000$
& $.576 \pm .000$ \\
\addlinespace[2pt]

\multirow{2}{*}{Linear}
& \multirow{2}{*}{\makecell[l]{$n_1$ +\\50\% $n_2$}}
& \texttt{-diagGMM}
& $.508 \pm .020$
& $.800 \pm .009$ \\
&
&
\texttt{-MuDoNLL}
& $.147 \pm .031$
& $.554 \pm .024$ \\

\bottomrule
\end{tabular}
\end{table}

\section{Conclusion and limitation}
\label{sec: discussion}
We introduced LinADMG, a general framework for learning linear ADMGs through identifiable representation learning. We show that LinADMG can recover the directed causal structure whenever the noise variables is identifiable up to permutation and scaling; it can further identify the full ADMG under a latent-confounding no-cancellation condition. Our results show how noise heterogeneity can provide an alternative source of identifiability for causal discovery with latent confounding. 

The current framework is limited to linear structural causal models. Extending the framework to nonlinear causal structural models is an important direction for future work. In addition, we currently consider only a selected set of identifiable representation-learning settings. Extending the framework to incorporate other sources of identifiability, such as temporal structure, could further broaden its applicability.

\newpage
\section*{Acknowledgements}
The research of DX and SM was supported by the Air Force Office of Scientific Research under award number FA8655-22-1-7155. Any opinions, findings, and conclusions or recommendations expressed in this material are those of the author(s) and do not necessarily reflect the views of the United States Air Force. We also thank SURF for the support in using the Dutch National Supercomputer Snellius.

\subsection*{AI use statement}

In this work, we used generative AI tools for generating synthetic data sets, polishing the theorem claim, assisting in the writing of proofs, implementing methods, cleaning and reformatting datasets, and interpreting results.
We have not used generative AI tools for helping develop theoretical models or conceptual frameworks, formulating mathematical claims, providing critical ingredients for proving mathematical claims, designing or providing feedback on research methodology or experiments, 
and proposing or refining hypotheses, assisting with translation, and supporting qualitative and thematic data analysis are not applicable to this work.
Additionally, we used generative AI tools for creating or modifying scientific figures or images, summarizing or analyzing existing literature, editing a research paper to improve readability, and identifying relevant literature.
We have reviewed all AI-assisted work.

We take responsibility for the final content of this work, including text, claims, or artifacts produced with the aid of generative AI.

\bibliography{reference}

@Article{Hyva00NN,
  author = 	 {A.~Hyv\"arinen and E.~Oja},
  title = 	 {Independent Component Analysis: Algorithms and
		    Applications}, 
  journal = 	 {Neural Networks},
  year = 	 {2000},
  volume = 	 {13},
  number = 	 {4-5},
  pages = 	 {411--430}
}

@Article{Hyva99NN,
  author = 	 {A.~Hyv\"arinen and P.~Pajunen},
  title = 	 {Nonlinear independent component analysis: Existence
		    and uniqueness results},
  journal = 	 {Neural Networks},
  year = 	 {1999},
  volume = 	 {12},
  number = 	 {3},
  pages = 	 {429--439}
}

@article{schkoda2024causal,
  title={Causal discovery of linear non-Gaussian causal models with unobserved confounding},
  author={Schkoda, Daniela and Robeva, Elina and Drton, Mathias},
  journal={arXiv preprint arXiv:2408.04907},
  year={2024}
}

@article{
morinishi2025differentiable,
title={Differentiable Causal Discovery of Linear Non-Gaussian Acyclic Models Under Unmeasured Confounding},
author={Yoshimitsu Morinishi and Shohei Shimizu},
journal={Transactions on Machine Learning Research},
issn={2835-8856},
year={2025}
}

@inproceedings{cai2023causal,
  title={Causal discovery with latent confounders based on higher-order cumulants},
  author={Cai, Ruichu and Huang, Zhiyi and Chen, Wei and Hao, Zhifeng and Zhang, Kun},
  booktitle={International conference on machine learning},
  pages={3380--3407},
  year={2023},
  organization={PMLR}
}

@inproceedings{bhattacharya2021differentiable,
  title={Differentiable causal discovery under unmeasured confounding},
  author={Bhattacharya, Rohit and Nagarajan, Tushar and Malinsky, Daniel and Shpitser, Ilya},
  booktitle={International Conference on Artificial Intelligence and Statistics},
  pages={2314--2322},
  year={2021},
  organization={PMLR}
}

@article{hoyer2008estimation,
  title={Estimation of causal effects using linear non-Gaussian causal models with hidden variables},
  author={Hoyer, Patrik O and Shimizu, Shohei and Kerminen, Antti J and Palviainen, Markus},
  journal={International Journal of Approximate Reasoning},
  volume={49},
  number={2},
  pages={362--378},
  year={2008},
  publisher={Elsevier}
}

@inproceedings{montagna2025identifiability,
  title={On the identifiability of causal graphs with multiple environments},
  author={Montagna, Francesco},
  booktitle={International Conference on Learning Representations},
  volume={2026},
  pages={13837--13868},
  year={2026}
}

@inproceedings{
xu2026identifying,
title={Identifying dependent components from multi-domain linear mixtures},
author={Danru Xu and Lauri Parkkonen and Sara Magliacane and Aapo Hyvarinen},
booktitle={Forty-third International Conference on Machine Learning},
year={2026}
}

@article{kivva2022identifiability,
  title={Identifiability of deep generative models without auxiliary information},
  author={Kivva, Bohdan and Rajendran, Goutham and Ravikumar, Pradeep and Aragam, Bryon},
  journal={Advances in Neural Information Processing Systems},
  volume={35},
  pages={15687--15701},
  year={2022}
}

@inproceedings{spirtes2013causal,
author = {Spirtes, Peter and Meek, Christopher and Richardson, Thomas},
title = {Causal inference in the presence of latent variables and selection bias},
year = {1995},
booktitle = {Proceedings of the Eleventh Conference on Uncertainty in Artificial Intelligence},
pages = {499–506},
numpages = {8}
}

@inproceedings{maeda2020rcd,
  title={RCD: Repetitive causal discovery of linear non-Gaussian acyclic models with latent confounders},
  author={Maeda, Takashi Nicholas and Shimizu, Shohei},
  booktitle={International Conference on artificial intelligence and statistics},
  pages={735--745},
  year={2020},
  organization={PMLR}
}

@article{shimizu2006linear,
  title={A linear non-Gaussian acyclic model for causal discovery.},
  author={Shimizu, Shohei and Hoyer, Patrik O and Hyv{\"a}rinen, Aapo and Kerminen, Antti and Jordan, Michael},
  journal={Journal of Machine Learning Research},
  volume={7},
  number={10},
  year={2006}
}

@article{salehkaleybar2020learning,
  title={Learning linear non-Gaussian causal models in the presence of latent variables},
  author={Salehkaleybar, Saber and Ghassami, AmirEmad and Kiyavash, Negar and Zhang, Kun},
  journal={Journal of Machine Learning Research},
  volume={21},
  number={39},
  pages={1--24},
  year={2020}
}

@article{wang2023causal,
  title={Causal discovery with unobserved confounding and non-Gaussian data},
  author={Wang, Y Samuel and Drton, Mathias},
  journal={Journal of Machine Learning Research},
  volume={24},
  number={271},
  pages={1--61},
  year={2023}
}

@InProceedings{xu2024sparsity,
  title = 	 {A Sparsity Principle for Partially Observable Causal Representation Learning},
  author =       {Xu, Danru and Yao, Dingling and Lachapelle, Sebastien and Taslakian, Perouz and Von K\"{u}gelgen, Julius and Locatello, Francesco and Magliacane, Sara},
  booktitle = 	 {Proceedings of the 41st International Conference on Machine Learning},
  pages = 	 {55389--55433},
  year = 	 {2024},
  volume = 	 {235}
}

@article{varici2024linear,
  title={Linear causal representation learning from unknown multi-node interventions},
  author={Var{\i}c{\i}, Burak and Acart{\"u}rk, Emre and Shanmugam, Karthikeyan and Tajer, Ali},
  journal={Advances in Neural Information Processing Systems},
  volume={37},
  pages={111614--111648},
  year={2024}
}

@article{hyvarinen2010estimation,
  title={Estimation of a structural vector autoregression model using non-Gaussianity.},
  author={Hyv{\"a}rinen, Aapo and Zhang, Kun and Shimizu, Shohei and Hoyer, Patrik O},
  journal={Journal of Machine Learning Research},
  volume={11},
  number={5},
  year={2010}
}

@inproceedings{monti2020causal,
  title={Causal discovery with general non-linear relationships using non-linear ICA},
  author={Monti, Ricardo Pio and Zhang, Kun and Hyv{\"a}rinen, Aapo},
  booktitle={Uncertainty in artificial intelligence},
  pages={186--195},
  year={2020},
  organization={PMLR}
}

@article{reizinger2023jacobian,
  title={Jacobian-based causal discovery with nonlinear ICA},
  author={Reizinger, Patrik and Sharma, Yash and Bethge, Matthias and Sch{\"o}lkopf, Bernhard and Husz{\'a}r, Ferenc and Brendel, Wieland},
  journal={Transactions on Machine Learning Research},
  year={2023}
}

@inproceedings{hyvarinen2017nonlinear,
  title={Nonlinear ICA of temporally dependent stationary sources},
  author={Hyvarinen, Aapo and Morioka, Hiroshi},
  booktitle={Artificial intelligence and statistics},
  pages={460--469},
  year={2017},
  organization={PMLR}
}

@inproceedings{hyvarinen2019nonlinear,
  title={Nonlinear ICA using auxiliary variables and generalized contrastive learning},
  author={Hyvarinen, Aapo and Sasaki, Hiroaki and Turner, Richard},
  booktitle={The 22nd international conference on artificial intelligence and statistics},
  pages={859--868},
  year={2019},
  organization={Pmlr}
}

@inproceedings{khemakhem2020variational,
  title={Variational autoencoders and nonlinear ica: A unifying framework},
  author={Khemakhem, Ilyes and Kingma, Diederik and Monti, Ricardo and Hyvarinen, Aapo},
  booktitle={International conference on artificial intelligence and statistics},
  pages={2207--2217},
  year={2020},
  organization={PMLR}
}

@article{brehmer2022weakly,
  title={Weakly supervised causal representation learning},
  author={Brehmer, Johann and De Haan, Pim and Lippe, Phillip and Cohen, Taco S},
  journal={Advances in Neural Information Processing Systems},
  volume={35},
  pages={38319--38331},
  year={2022}
}

@inproceedings{lippe2022citris,
  title={Citris: Causal identifiability from temporal intervened sequences},
  author={Lippe, Phillip and Magliacane, Sara and L{\"o}we, Sindy and Asano, Yuki M and Cohen, Taco and Gavves, Stratis},
  booktitle={International Conference on Machine Learning},
  pages={13557--13603},
  year={2022},
  organization={PMLR}
}

@inproceedings{lippe2023biscuit,
  title={Biscuit: Causal representation learning from binary interactions},
  author={Lippe, Phillip and Magliacane, Sara and L{\"o}we, Sindy and Asano, Yuki M and Cohen, Taco and Gavves, Efstratios},
  booktitle={Uncertainty in Artificial Intelligence},
  pages={1263--1273},
  year={2023},
  organization={PMLR}
}

@inproceedings{ahuja2023interventional,
  title={Interventional causal representation learning},
  author={Ahuja, Kartik and Mahajan, Divyat and Wang, Yixin and Bengio, Yoshua},
  booktitle={International conference on machine learning},
  pages={372--407},
  year={2023},
  organization={PMLR}
}

@article{von2023nonparametric,
  title={Nonparametric identifiability of causal representations from unknown interventions},
  author={von K{\"u}gelgen, Julius and Besserve, Michel and Wendong, Liang and Gresele, Luigi and Keki{\'c}, Armin and Bareinboim, Elias and Blei, David and Sch{\"o}lkopf, Bernhard},
  journal={Advances in Neural Information Processing Systems},
  volume={36},
  pages={48603--48638},
  year={2023}
}

@article{zhang2023identifiability,
  title={Identifiability guarantees for causal disentanglement from soft interventions},
  author={Zhang, Jiaqi and Greenewald, Kristjan and Squires, Chandler and Srivastava, Akash and Shanmugam, Karthikeyan and Uhler, Caroline},
  journal={Advances in Neural Information Processing Systems},
  volume={36},
  pages={50254--50292},
  year={2023}
}

@inproceedings{ahuja2024multi,
  title={Multi-domain causal representation learning via weak distributional invariances},
  author={Ahuja, Kartik and Mansouri, Amin and Wang, Yixin},
  booktitle={International Conference on Artificial Intelligence and Statistics},
  pages={865--873},
  year={2024},
  organization={PMLR}
}

@article{buchholz2023learning,
  title={Learning linear causal representations from interventions under general nonlinear mixing},
  author={Buchholz, Simon and Rajendran, Goutham and Rosenfeld, Elan and Aragam, Bryon and Sch{\"o}lkopf, Bernhard and Ravikumar, Pradeep},
  journal={Advances in Neural Information Processing Systems},
  volume={36},
  pages={45419--45462},
  year={2023}
}

@article{yao2023multi,
  title={Multi-View Causal Representation Learning with Partial Observability},
  author={Yao, Dingling and Xu, Danru and Lachapelle, S{\'e}bastien and Magliacane, Sara and Taslakian, Perouz and Martius, Georg and von K{\"u}gelgen, Julius and Locatello, Francesco},
  journal={International Conference on Learning Representations},
  year={2024}
}

@article{lachapelle2024nonparametric,
  title={Nonparametric Partial Disentanglement via Mechanism Sparsity: Sparse Actions, Interventions and Sparse Temporal Dependencies},
  author={Lachapelle, S{\'e}bastien and L{\'o}pez, Pau Rodr{\'\i}guez and Sharma, Yash and Everett, Katie and Le Priol, R{\'e}mi and Lacoste, Alexandre and Lacoste-Julien, Simon},
  journal={Journal of Machine Learning Research},
  volume={27},
  number={71},
  pages={1--90},
  year={2026}
}

@article{kong2023identification,
  title={Identification of nonlinear latent hierarchical models},
  author={Kong, Lingjing and Huang, Biwei and Xie, Feng and Xing, Eric and Chi, Yuejie and Zhang, Kun},
  journal={Advances in Neural Information Processing Systems},
  volume={36},
  pages={2010--2032},
  year={2023}
}

@article{kong2024learning,
  title={Learning discrete concepts in latent hierarchical models},
  author={Kong, Lingjing and Chen, Guangyi and Huang, Biwei and Xing, Eric and Chi, Yuejie and Zhang, Kun},
  journal={Advances in Neural Information Processing Systems},
  volume={37},
  pages={36938--36975},
  year={2024}
}

@article{dong2024on,
  title={On the parameter identifiability of partially observed linear causal models},
  author={Dong, Xinshuai and Ng, Ignavier and Huang, Biwei and Sun, Yuewen and Jin, Songyao and Legaspi, Roberto and Spirtes, Peter and Zhang, Kun},
  journal={Advances in Neural Information Processing Systems},
  volume={37},
  pages={30740--30771},
  year={2024}
}

@article{song2024causal,
  title={Causal temporal representation learning with nonstationary sparse transition},
  author={Song, Xiangchen and Li, Zijian and Chen, Guangyi and Zheng, Yujia and Fan, Yewen and Dong, Xinshuai and Zhang, Kun},
  journal={Advances in Neural Information Processing Systems},
  volume={37},
  pages={77098--77131},
  year={2024}
}

@InProceedings{morioka2023causal,
  title = 	 {Causal Representation Learning Made Identifiable by Grouping of Observational Variables},
  author =       {Morioka, Hiroshi and Hyvarinen, Aapo},
  booktitle = 	 {Proceedings of the 41st International Conference on Machine Learning},
  pages = 	 {36249--36293},
  year = 	 {2024},
  volume = 	 {235},
  publisher =    {PMLR},
  
}

@inproceedings{
li2025towards,
title={Towards Identifiability of Hierarchical Temporal Causal Representation Learning},
author={Zijian Li and Minghao Fu and Junxian Huang and Yifan Shen and Ruichu Cai and Yuewen Sun and Guangyi Chen and Kun Zhang},
booktitle={The Thirty-ninth Annual Conference on Neural Information Processing Systems},
year={2025}
}

@inproceedings{lachapelle2022disentanglement,
  title={Disentanglement via mechanism sparsity regularization: A new principle for nonlinear {ICA}},
  author={Lachapelle, S{\'e}bastien and Rodriguez, Pau and Sharma, Yash and Everett, Katie E and Le Priol, R{\'e}mi and Lacoste, Alexandre and Lacoste-Julien, Simon},
  booktitle={Conference on Causal Learning and Reasoning},
  pages={428--484},
  year={2022},
  organization={PMLR}
}

@InProceedings{zhang2024causal,
  title = 	 {Causal Representation Learning from Multiple Distributions: A General Setting},
  author =       {Zhang, Kun and Xie, Shaoan and Ng, Ignavier and Zheng, Yujia},
  booktitle = 	 {Proceedings of the 41st International Conference on Machine Learning},
  pages = 	 {60057--60075},
  year = 	 {2024},
  volume = 	 {235},
  publisher =    {PMLR}
}

@inproceedings{spirtes1995,
author = {Spirtes, Peter and Meek, Christopher and Richardson, Thomas},
title = {Causal inference in the presence of latent variables and selection bias},
year = {1995},
booktitle = {Proceedings of the Eleventh Conference on Uncertainty in Artificial Intelligence},
pages = {499–506},
numpages = {8},
series = {UAI'95}
}

@article{richardson2003markov,
  title={Markov properties for acyclic directed mixed graphs},
  author={Richardson, Thomas},
  journal={Scandinavian Journal of Statistics},
  volume={30},
  number={1},
  pages={145--157},
  year={2003},
  publisher={Wiley Online Library}
}

@article{comon1994independent,
  title={Independent component analysis, a new concept?},
  author={Comon, Pierre},
  journal={Signal processing},
  volume={36},
  number={3},
  pages={287--314},
  year={1994},
  publisher={Elsevier}
}

@inproceedings{locatello2019challenging,
  title={Challenging common assumptions in the unsupervised learning of disentangled representations},
  author={Locatello, Francesco and Bauer, Stefan and Lucic, Mario and Raetsch, Gunnar and Gelly, Sylvain and Sch{\"o}lkopf, Bernhard and Bachem, Olivier},
  booktitle={international conference on machine learning},
  pages={4114--4124},
  year={2019},
  organization={PMLR}
}

@article{hyvarinen2004blind,
  title={Blind separation of sources that have spatiotemporal variance dependencies},
  author={Hyv{\"a}rinen, Aapo and Hurri, Jarmo},
  journal={Signal processing},
  volume={84},
  number={2},
  pages={247--254},
  year={2004},
  publisher={Elsevier}
}

@inproceedings{zhang2011general,
  title={A general linear non-gaussian state-space model: Identifiability, identification, and applications},
  author={Zhang, Kun and Hyv{\"a}rinen, Aapo},
  booktitle={Asian Conference on Machine Learning},
  pages={113--128},
  year={2011},
  organization={PMLR}
}

@article{gomez2008measuring,
  title={Measuring directional coupling between EEG sources},
  author={G{\'o}mez-Herrero, Germ{\'a}n and Atienza, Mercedes and Egiazarian, Karen and Cantero, Jose L},
  journal={Neuroimage},
  volume={43},
  number={3},
  pages={497--508},
  year={2008},
  publisher={Elsevier}
}

@inproceedings{
heurtebise2026multiview,
title={Multi-View Causal Discovery without Non-Gaussianity: Identifiability and Algorithms},
author={Ambroise Heurtebise and Omar Chehab and Pierre Ablin and Alexandre Gramfort and Aapo Hyvarinen},
booktitle={Forty-third International Conference on Machine Learning},
year={2026}
}

@inproceedings{kingma2014adam,
  title     = {Adam: A Method for Stochastic Optimization},
  author    = {Kingma, Diederik P and Ba, Jimmy},
  booktitle = {International Conference on Learning Representations},
  year      = {2015}
}

@inproceedings{gobler2024texttt,
  title={causalAssembly: Generating Realistic Production Data for Benchmarking Causal Discovery},
  author={G{\"o}bler, Konstantin and Windisch, Tobias and Drton, Mathias and Pychynski, Tim and Roth, Martin and Sonntag, Steffen},
  booktitle={Causal Learning and Reasoning},
  pages={609--642},
  year={2024},
  organization={PMLR}
}

@article{scholkopf2021toward,
  title={Toward causal representation learning},
  author={Sch{\"o}lkopf, Bernhard and Locatello, Francesco and Bauer, Stefan and Ke, Nan Rosemary and Kalchbrenner, Nal and Goyal, Anirudh and Bengio, Yoshua},
  journal={Proceedings of the IEEE},
  volume={109},
  number={5},
  pages={612--634},
  year={2021},
  publisher={IEEE}
}

@book{spirtes2001causation,
  title={Causation, Prediction, and Search},
  author={Spirtes, Peter and Glymour, Clark N. and Scheines, Richard},
  year={2001},
  publisher={MIT Press}
}

@article{zheng2018dags,
  title={Dags with no tears: Continuous optimization for structure learning},
  author={Zheng, Xun and Aragam, Bryon and Ravikumar, Pradeep K and Xing, Eric P},
  journal={Advances in neural information processing systems},
  volume={31},
  year={2018}
}

@article{pearl2003causality,
  title={Causality: models, reasoning, and inference},
  author={Pearl, Judea and others},
  journal={Econometric Theory},
  volume={19},
  number={675-685},
  pages={46},
  year={2003},
  publisher={Cambridge university press}
}

@book{peters2017elements,
  title={Elements of causal inference: foundations and learning algorithms},
  author={Peters, Jonas and Janzing, Dominik and Sch{\"o}lkopf, Bernhard},
  year={2017},
  publisher={The MIT press}
}

@article{gallego2025cooper,
  title={Cooper: A library for constrained optimization in deep learning},
  author={Gallego-Posada, Jose and Ramirez, Juan and Hashemizadeh, Meraj and Lacoste-Julien, Simon},
  journal={arXiv preprint arXiv:2504.01212},
  year={2025}
}
\bibliographystyle{abbrvnat}

\clearpage
\appendix
\thispagestyle{empty}

\onecolumn
\aistatstitle{Causal Discovery for Linear Models with Latent Confounding via Multi-domain Identifiable Representation Learning: \\Supplementary Materials}

\section{Summary of Notation}
\label{app: notation}
Throughout this paper, we adopt the following conventions for mathematical notation: We use bold uppercase letters such as $\Xb$ to denote random vectors, and uppercase italic letters such as $X$ to denote matrices. We list all specific
notations in the following table:
\begin{table}[ht]
\label{tab: notation}
\begin{center}
\begin{tabular}{cc}
\hline
\textbf{Symbol}  & \textbf{Description} \\ \hline
$\tau$   & Domain index \\
$\Ub$   & Unobserved variables\\
$\Xb$   & Observed variables \\
$\Eb_1$   & Exogenous variables of $\Xb$\\
$\Eb_2$   & Exogenous variables of $\Ub$\\
$\Zb^\tau$   & Latent noise variables from domain $\tau$\\
$\Xb^\tau$   & Observed variables from domain $\tau$ \\
$A$   & Linear mixing matrix \\
$B$   & Weighted adjacency matrix of directed causal graph among observed variables\\
$\Gcal$   & ADMG\\
$E_{\rightarrow}$   & directed edge set\\
$E_{\leftrightarrow}$   & bidirected edge set\\
$n$ & Dimension of the observed variables \\
$m$ & Dimension of the unobserved variables \\
$T$   & Number of domains\\
$N$   & Number of samples per domain\\
\hline
\end{tabular}

\end{center}
\end{table}

\section{Details of LiNGAM-Based Directed Structure Recovery}
\label{app:lingam-postprocessing}
In this section, we provide the details of the LiNGAM-style post-processing procedure used to recover the directed structure from the estimated mixing matrix $\hat{A}$. We break it into 3 steps to resolve the permutation, scaling, and numerical error.

We start by introducing new notation. Let $W:=A^{-1}=I-B$. Inverting $\widehat A$ gives
\begin{equation}
\widetilde W
:=
\widehat A^{-1}
=
D^{-1}P^T W.
\label{eq:estimated-demixing}
\end{equation}
Hence, $\widetilde W$ recovers the structural demixing matrix $W$ up
to a row permutation and row-wise scaling.

\paragraph{Step i: Resolve the permutation ambiguity.}
By assuming $B$ is a DAG weighted adjacency matrix, we have $B_{ii}=0$, which implies $W=I-B$ must have exactly one diagonal entries, i.e. $W_{ii}=0$. We therefore search a row permutation of $\widetilde W$ that aligns the recovered components with the observed variables.

Specifically, we choose the permutation $\widehat P$ solving
\begin{equation}
\widehat P
\in
\argmax_{P\in\Pcal_n}
\sum_{i=1}^n
\left|
(P\widetilde W)_{ii}
\right|,
\label{eq:assignment}
\end{equation}
where $\Pcal_n$ denotes the set of $n\times n$ permutation matrices. This optimization is a linear assignment problem and can be solved efficiently using, for example, the Hungarian algorithm with complexity $O(n^3)$. We then define
\begin{equation}
\overline W
:=
\widehat P\widetilde W.
\end{equation}

\paragraph{Step ii: Remove the scaling ambiguity.}
After permutation alignment, the remaining ambiguity is row-wise scaling. Since $W=I-B$ satisfies $W_{ii}=1$, we normalize each row by its diagonal entry:
\begin{equation}
\widehat W_{ij}
:=
\frac{\overline W_{ij}}
{\overline W_{ii}},
\qquad i,j\in[n].
\label{eq:row-normalization}
\end{equation}

\paragraph{Step iii: Recover the directed structural coefficients.}
We obtain the estimated weighted adjacency matrix as
\begin{equation}
\widetilde B
:=
I-\widehat W.
\label{eq:B-estimate}
\end{equation}

\textbf{Sparsification.} However, finite-sample estimation generally produces small spurious nonzero coefficients. We therefore sparsify $\widetilde B$ using either an absolute threshold,
\begin{equation}
\widetilde B_{ij}
\leftarrow 0
\quad\text{if}\quad
|\widetilde B_{ij}|<\lambda,
\end{equation}
or a relative threshold defined with respect to the magnitude of the recovered coefficients.

\textbf{Acyclicity.} Thresholding alone need not yield an acyclic directed graph. Following \cite{shimizu2006linear}, if directed cycles remain, we iteratively remove the weakest edge participating in a directed cycle until it becomes acyclic. Denote the resulting matrix by $\widehat B$. The estimated directed edge set is then
\begin{equation}
\widehat E_{\to}
:=
\{
X_j\to X_i:
\widehat B_{ij}\neq0
\}.
\end{equation}

\section{Proof}
\label{app: proof}

\subsection{Proof of \cref{lem:directed_identification}}
\label{app: proof_lemma_1}
Before proving Lemma~\ref{lem:directed_identification}, we recall the following result underlying the LiNGAM postprocessing procedure.

\begin{lemma}[LiNGAM permutation lemma {\citealp[Lemma~1]{shimizu2006linear}}]
\label{lem: LiNGAM}
Let $M$ be a lower-triangular matrix whose diagonal entries are all nonzero.
Among matrices obtained from $M$ by independently permuting its rows and columns, the diagonal contains no zero entries if and only if the row and column permutations coincide.

Consequently, if a matrix $W$ can be made lower triangular with nonzero diagonal by applying the same permutation to its rows and columns, then, for any row-permuted version of $W$, there exists a unique row permutation that restores a matrix with an everywhere nonzero diagonal.
\end{lemma}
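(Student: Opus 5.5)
The plan is to prove the first claim with an index-sum argument, and then derive the consequence by conjugating with the permutation that makes $W$ lower triangular. The first claim is where the real content lies. The consequence is bookkeeping with permutation matrices; the only point needing care there is that conjugating by a permutation merely permutes the diagonal.

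\emph{First claim.} Let the row permutation be $\sigma$ and the column permutation be $\pi$. The permuted matrix then has $(i,i)$ entry $M_{\sigma(i)\pi(i)}$. If $\sigma=\pi$, these entries are $M_{\sigma(i)\sigma(i)}$, which are nonzero by assumption. Conversely, suppose every $M_{\sigma(i)\pi(i)}\neq 0$. Since $M$ is lower triangular, $M_{kl}\neq 0$ forces $k\ge l$, so $\sigma(i)\ge\pi(i)$ for all $i$. Summing over $i$ gives
\[
\sum_i \sigma(i)=\sum_i \pi(i)=\tfrac{n(n+1)}{2},
\]
so each of these nonnegative differences $\sigma(i)-\pi(i)$ must vanish. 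Hence $\sigma=\pi$.

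\emph{Consequence.} By hypothesis there is a permutation matrix $Q$ with $W=Q^T M Q$, where $M$ is lower triangular with nonzero diagonal. A row-permuted version of $W$ has the form $RW=RQ^TMQ$ for some permutation matrix $R$. Consider an arbitrary further row permutation $S$, and set $X:=SRQ^TMQ$.

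The key observation is that for any permutation matrix $Q$, the diagonal of $QXQ^T$ is a rearrangement of the diagonal of $X$. So $X$ has no zero diagonal entries if and only if
\[
QXQ^T=(QSRQ^T)\,M
\]
has none. The right-hand side is $M$ with its rows permuted by $QSRQ^T$ and its columns left unchanged, i.e.\ permuted by the identity. By the first claim, its diagonal is entirely nonzero if and only if $QSRQ^T=I$, that is, if and only if $S=R^{-1}$. This gives both existence and uniqueness of the restoring row permutation.
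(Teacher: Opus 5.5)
Your proof is correct and complete. The paper does not prove this lemma itself: the first part is quoted from \citet[Lemma~1]{shimizu2006linear}, and the ``consequently'' part is stated without justification. Your proposal therefore supplies an argument the paper leaves to the citation, rather than departing from a proof it gives.

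Two features of your route are worth noting. The index-sum step ($\sigma(i)\ge\pi(i)$ for all $i$, together with $\sum_i\sigma(i)=\sum_i\pi(i)$) settles the combinatorial core in one line: if every diagonal position falls in the lower triangle, the row and column permutations must agree.

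More useful is the second part. Conjugating by $Q$ reduces everything to a one-sided row permutation of $M$, and it identifies the unique restoring permutation explicitly as $S=R^{-1}$, so the restored matrix is $W$ itself. This is slightly more than the statement asserts, which is only existence and uniqueness of \emph{some} restoring permutation. It is exactly what the proof of \cref{lem:directed_identification} relies on when it concludes that the permutation step recovers the \emph{correct} correspondence between rows of $\widehat A^{-1}$ and observed variables.

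Because your argument uses only which entries vanish, it also applies unchanged to $\widehat A^{-1}=D^{-1}P^T(I-B)$. The invertible diagonal factor rescales rows without changing the zero pattern, so $S=P$ is the unique admissible row permutation there.
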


The second statement is the form of Lemma~\ref{lem: LiNGAM} used in the LiNGAM postprocessing procedure. After the correct row permutation is identified, row-wise scaling is removed by normalizing each row by its diagonal entry.
\directedIdentification*
\begin{proof}
From $\widehat A=APD$, and $A^{-1}=I-B$, we obtain $\widehat A^{-1}=D^{-1}P^T(I-B)$.

Thus, $\widehat A^{-1}$ differs from $I-B$ only by a row permutation and nonzero row-wise scaling.

Since $B$ represents a DAG, there exists a permutation of the variables under which $B$ is strictly lower triangular. Hence, under the same permutation, $I-B$ is lower triangular with all diagonal entries equal to one. By Lemma~\ref{lem: LiNGAM}, there is a unique row permutation of $\widehat A^{-1}$ whose diagonal entries are all nonzero. Therefore, the LiNGAM permutation step recovers the correct correspondence between the rows of $\widehat A^{-1}$ and the observed variables.

After correcting the row permutation, the resulting matrix differs from $I-B$ only by row-wise scaling. Since the diagonal of $I-B$ is equal to one, normalizing each row by its diagonal entry removes this scaling ambiguity exactly. Therefore, the normalized demixing matrix satisfies $\widehat W=I-B$, and consequently
\begin{equation}
\widehat B
=
I-\widehat W
=
B.
\end{equation}

It follows immediately that
\begin{equation}
\widehat E_{\rightarrow}
=
E_{\rightarrow}.
\end{equation}

Next, we show that $\widehat E^\tau_{\leftrightarrow}
\subseteq E^\tau_{\leftrightarrow}$. We prove this by contradiction. Suppose 
\begin{equation}
\label{equ: contra}
\exists\quad X_i^{\tau}\leftrightarrow X_j^{\tau} \in \widehat E^\tau_{\leftrightarrow}, \text{ such that }X_i^{\tau}\leftrightarrow X_j^{\tau}\not\in E^\tau_{\leftrightarrow}.
\end{equation}

Since the bidirected-edge recovery procedure adds an edge only when the corresponding estimated disturbance components are dependent, we must have
\begin{equation}
\widehat Z_i^\tau
\not\!\perp\!\!\!\perp
\widehat Z_j^\tau.
\end{equation}

From the previous result, we know $\widehat B=B$, and therefore
\begin{equation}
\widehat \Zb^\tau
=
(I-\widehat B)\Xb^\tau
=
(I-B)\Xb^\tau
=
\Zb^\tau.
\end{equation}
Thus we have,
\begin{equation}
Z_i^\tau
\not\!\perp\!\!\!\perp
Z_j^\tau.
\end{equation}

To obtain a contradiction, recall from Eq.~\ref{eq:base-sem} that
\begin{equation}
\Zb^\tau=B_1^\tau
(I-B_2^\tau)^{-1}
\Eb_2^\tau
+
\Eb_1^\tau.
\end{equation}
Define $M^\tau:=B_1^\tau(I-B_2^\tau)^{-1}$. We write it out component by component:
\begin{equation}
Z_i^\tau
=
\sum_k
M^\tau_{ik}\Eb_{2k}^\tau
+
\Eb_{1i}^\tau,
\qquad
Z_j^\tau
=
\sum_k
M^\tau_{jk}\Eb_{2k}^\tau
+
\Eb_{1j}^\tau.
\end{equation}
$M^\tau_{ik}$ represents the total linear effect of the latent exogenous component $\Eb_{2k}^\tau$ on the observed disturbance $Z_i^\tau$. Thus, a common latent source can induce dependence between $Z_i^\tau$ and $Z_j^\tau$ only if there exists some $k$ such that
\begin{equation}
M^\tau_{ik}\neq0
\qquad\text{and}\qquad
M^\tau_{jk}\neq0.
\end{equation}

By construction of the domain-specific ADMG, such a shared latent source induces the bidirected edge $X_i\leftrightarrow X_j \in E^\tau_{\leftrightarrow}$, which is contradictory with $
X_i\leftrightarrow X_j
\notin
E^\tau_{\leftrightarrow}
$
in Equ.~\ref{equ: contra}. Therefore, we conclude
\begin{equation}
\widehat E^\tau_{\leftrightarrow}
\subseteq
E^\tau_{\leftrightarrow}.
\end{equation}
\end{proof}

\subsection{Proof of Theorem~\ref{thm:linadmg-recovery}}
\label{app: proof_thm}
\linadMGrecovery*

\begin{proof}
By Lemma~\ref{lem:directed_identification}, the directed structural coefficient matrix is identified:
\begin{equation}
\widehat B=B.
\end{equation}
Therefore, $\widehat E_{\rightarrow}=E_{\rightarrow}$

Moreover, Lemma~\ref{lem:directed_identification} shows that the estimated bidirected edge set contains no false-positive edges:
\begin{equation}
\widehat E^\tau_{\leftrightarrow}
\subseteq
E^\tau_{\leftrightarrow}.
\label{eq:bidir-subset}
\end{equation}
Therefore, we only need to show the other way around: $ E^\tau_{\leftrightarrow}
\subseteq
\widehat E^\tau_{\leftrightarrow}$.

Since $\widehat B=B$, the composite disturbances are also recovered exactly:
\begin{equation}
\widehat{\Zb}^{\tau}
=
(I-\widehat B)\Xb^\tau
=
(I-B)\Xb^\tau
=
\Zb^\tau.
\label{eq:exact-Z-recovery}
\end{equation}

Now consider an arbitrary true bidirected edge $X_i^{\tau}\leftrightarrow X_j^{\tau}
\in
E^\tau_{\leftrightarrow}$.
By Assump.~\ref{assump: no-cancel}, this implies 
\begin{equation}
Z_i^\tau
\not\!\perp\!\!\!\perp
Z_j^\tau.
\label{eq:true-Z-dependence}
\end{equation}

By Equ.~\eqref{eq:exact-Z-recovery}, we then have
$
\widehat Z_i^\tau
=
Z_i^\tau$ and $\widehat Z_j^\tau=Z_j^\tau$,
and therefore $
\widehat Z_i^\tau
\not\!\perp\!\!\!\perp
\widehat Z_j^\tau$.

Thus, under the dependence criterion used by LinADMG, $X_i^{\tau}\leftrightarrow X_j^{\tau}
\in
\widehat E^\tau_{\leftrightarrow}$.

Since this is true for all $X_i^{\tau}\leftrightarrow X_j^{\tau}\in E^\tau_{\leftrightarrow}$, we obtain
\begin{equation}
E^\tau_{\leftrightarrow}
\subseteq
\widehat E^\tau_{\leftrightarrow}.
\label{eq:bidir-superset}
\end{equation}
Combining Equs.~\eqref{eq:bidir-subset} and~\eqref{eq:bidir-superset}, we can conclude
\begin{equation}
\widehat E^\tau_{\leftrightarrow}
=
E^\tau_{\leftrightarrow}.
\end{equation}

\end{proof}
\FloatBarrier

\section{Identifiable and causal representation learning.}
\label{app: IRL and CRL}

A broad literature studies conditions under which latent representations become identifiable from observed data. In blind source separation, source independence can sometimes be relaxed by assuming additional temporal structure \citep{hyvarinen2004blind,gomez2008measuring,zhang2011general}. Nonlinear ICA
further uses auxiliary variables, temporal structure, or nonstationarity to identify otherwise non-identifiable latent components \citep{hyvarinen2017nonlinear,hyvarinen2019nonlinear,
khemakhem2020variational,kivva2022identifiability}. More recent causal representation learning (CRL) methods leverage additional information such as interventions \citep{brehmer2022weakly,lippe2022citris,von2023nonparametric,
buchholz2023learning,zhang2023identifiability,ahuja2023interventional}, temporal structure \citep{lippe2023biscuit,lachapelle2022disentanglement,
lachapelle2024nonparametric,li2025towards}, multi-view observations
\citep{yao2023multi,ahuja2024multi,morioka2023causal}, hierarchical structure \citep{kong2023identification,kong2024learning}, or restrictions on the latent causal graph
\citep{zhang2024causal,dong2024on,song2024causal}. Particularly relevant to our setting are linear representation-learning methods based on hard interventions \citep{varici2024linear}, multi-domain scaling
\citep{xu2026identifying}, and sparse latent support \citep{xu2024sparsity}.

These works primarily aim to identify latent representations and, in some settings, causal relations among the latent variables. LinADMG only uses representation identifiability as an intermediate step toward recovering causal structure among the observed variables. Among them, Multi-view cannot be converted to ADMG learning under the LinADMG framework. Different views generally have distinct mixing functions, and the recovered representations may be identifiable only up to view-specific permutation and component-wise scaling ambiguities. As a result, there need not exist a single shared mixing matrix corresponding to an invariant directed structure, as required by our framework.

\FloatBarrier

\section{Implementation Details and Licenses}
\label{app: imple and license}
This section provides further details about the experiment implementation in \cref{sec: experiment result}. Our synthetic-data generation utilities build on the open-source NOTEARS implementation by \citet{zheng2018dags}, distributed under the Apache License 2.0. We use these utilities to sample random causal graphs, structural coefficients, and the latent variables used by our synthetic data generators. Our implementation of UMNI adapts from the repository released by \citet{varici2024linear}, which is distributed under
the Apache License 2.0. Our BANG implementation is a Python port of the R implementation from \citet{wang2023causal}, which is distributed under the
MIT License. The DirectLiNGAM and RCD baselines are evaluated using the implementations provided by the \texttt{lingam} Python package
\citep{shimizu2006linear,maeda2020rcd}, which is distributed under the
MIT License. The FCI baseline and the conversion of fitted ADMGs to PAGs use the \texttt{causal-learn} package, distributed under the MIT License. The
implementation released by \citet{xu2024sparsity} uses the constrained-optimization library \texttt{Cooper} \citep{gallego2025cooper}, also distributed under the MIT License. For CausalAssembly, we use version 1.2.1 of the \texttt{causalAssembly} package, distributed under the GNU Affero General Public License v3.0.

We list all the hyperparameters we used for training in Tab.~\ref{tab:method-hyperparameters}, and provide all the code for our method.
-MuDonll, -diagGMM, and VaDE-linear use Adam \citep{kingma2014adam}; -SP uses Cooper’s ExtraAdam for primal and dual updates; DCD uses L-BFGS-B within an augmented-Lagrangian procedure. Experiments were run on CPU nodes with dual AMD EPYC 7H12 processors (128 cores per node) and GPU nodes with NVIDIA A100 GPUs (40GB each).

\begin{table}[t]
\caption{
Hyperparameters used in the experiments for all methods and baselines.
}
\label{tab:method-hyperparameters}
\centering
\footnotesize
\begin{tabularx}{\textwidth}{
    @{}
    l
    >{\raggedright\arraybackslash}X
    @{}
}
\toprule
Method & Hyperparameters \\
\midrule

\texttt{LinADMG-SP} 
&

\texttt{lr}=$10^{-4}$;
\texttt{num\_steps}=15000;
\texttt{num\_initializations}=1;
\texttt{batch\_size}=5000;
\texttt{sparse\_level}=0.01;
\texttt{aug\_lag\_coef}=0
\\[2pt]

\texttt{LinADMG-MuDoNLL} 
&

\texttt{lr}=0.01;
\texttt{num\_steps}=5000;
\texttt{num\_initializations}=1;
\texttt{scheduler}=\texttt{cosine};
\texttt{grad\_clip}=1;
\\[2pt]

\texttt{LinADMG-UMNI} 
&

\texttt{atol\_eigv}=0.05;
\texttt{atol\_ci\_test}=0.05;
\texttt{kappa}=1;
\texttt{umni\_batch\_size}=2048;
\\[2pt]

\texttt{LinADMG-diagGMM} 
&

\texttt{lr}=0.001;
\texttt{num\_steps}=5000;
\texttt{num\_initializations}=1;
\texttt{batch\_size}=6144;
\\[2pt]

\midrule

LiNGAM
&
\texttt{lingam\_measure}=\texttt{pwling};
\\[2pt]

BANG
&
\texttt{bang\_moment\_degree}=3;
\texttt{bang\_level}=0.01;
\texttt{bang\_restriction}=1;
\texttt{bang\_max\_iterations}=10000;
\texttt{bang\_condition\_limit}=$10^{12}$;
\\[2pt]

DCD
&
\texttt{dcd\_lambda}=0.05;
\texttt{dcd\_threshold}=0.05;
\texttt{dcd\_num\_restarts}=5;
\\[2pt]

RCD
&
\texttt{rcd\_max\_samples}=300;
\texttt{rcd\_cor\_alpha}=0.01;
\texttt{rcd\_ind\_alpha}=0.01;
\texttt{rcd\_shapiro\_alpha}=0.01;
\texttt{rcd\_independence}=\texttt{hsic};
\\

\bottomrule
\end{tabularx}
\end{table}

\FloatBarrier

\section{Additional Experiments results}
\label{app: experiments}

\subsection{Esitmation of Directed-edge weights $B$}
\label{app: B est}
\begin{figure}
    \centering
    \includegraphics[width=0.3\linewidth]{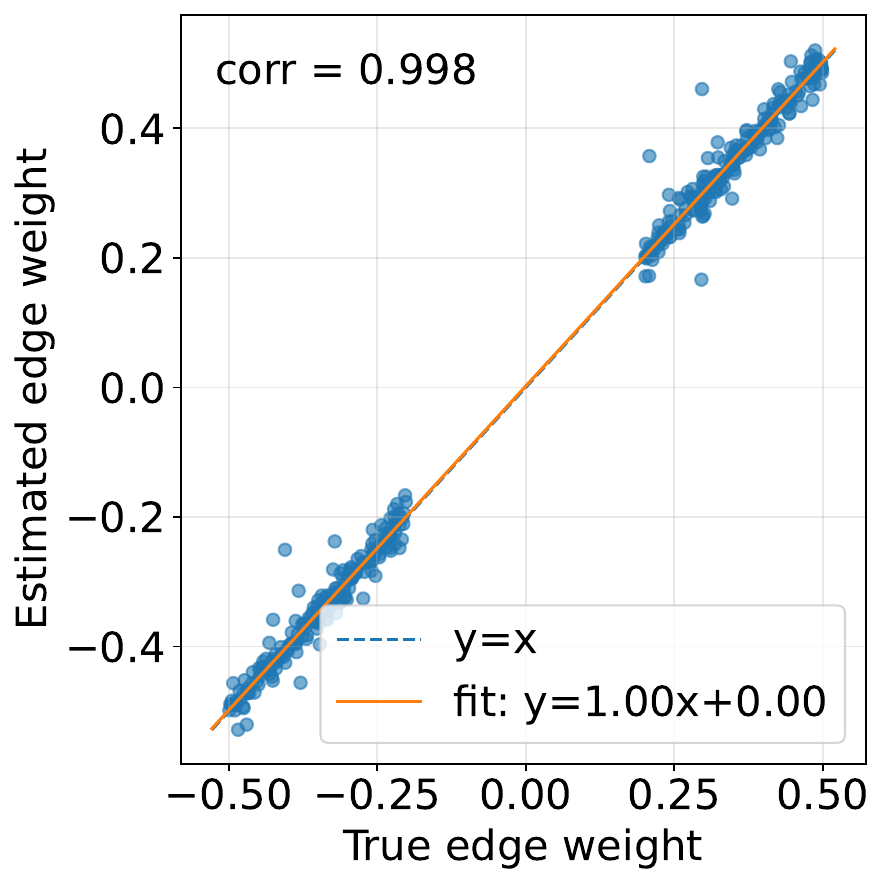}
    \caption{True vs. estimated directed-edge weights by \texttt{LinADMG-MuDoNLL} under multi-domain scalings DGP. 
    }
    \label{fig:nll_true_vs_estimated}
\end{figure}

We conduct the experiment with \texttt{LinADMG-MuDoNLL}  on Multi-domain Scalings DGP. In every experiment, we set $n=10, N=5000, T=1, k_X=2.0, k_U=0.5$ and Gaussian noise. Results are averaged over 20 random seeds. In Fig.~\ref{fig:nll_true_vs_estimated}, the estimated directed-edge weights closely follow their ground-truth values, concentrating around the identity line. The near-perfect correlation ($0.998$) and fitted relation $y=x$ show highly accurate estimation.

\subsection{All configurations of synthetic data experiments}
\label{app: all config}
We evaluate the estimation of both the directed and bidirected edges of the underlying ADMG under four multi-domain DGPs. The UP-Observation DGP applies domain-specific binary masks to the latent variables, the MD-Scalings DGP applies domain-specific continuous scaling factors, the hard-intervention DGP replaces the structural equations of intervened latent variables, and the Diag-GMMs DGP generates five groups whose latent distributions vary with an unobserved auxiliary variable. For each DGP, its designated method: \texttt{LinADMG-SP} , -MuDoNll, -UMNI, and -diagGMM, respectively, is compared with all four baselines LiNGAM, BANG, DCD, and RCD. 

The anchor configuration uses $n=10$, $T=n$ domains, ER graphs with directed-edge density $k_X=2$ and confounding density $k_U=0.5$, and Gaussian noise. Each domain contains $N=5000$ observations, except for the hard-intervention anchor, which uses $N=50000$. We conduct one-factor-at-a-time studies over dimension $n\in\{5,10,20\}$, number of domains $T/n\in\{1,2,3\}$, sample size $N\in\{1000,5000,50000\}$, directed-edge density $k_X\in\{1,2,3\}$, confounding density $k_U\in\{0,0.5,1\}$, and Gaussian, exponential, and Gumbel noise. The Diag-GMMs DGP always contains five Gaussian groups, so the domain, confounding-density, and noise sweeps do not apply to it. We report the mean and sample standard deviation of edge-wise F1, for which larger values are better, and structural Hamming distance (SHD), for which smaller values are better, over 20 independently generated random seeds.

\paragraph{MD-Scalings}
Tables~\ref{tab:scaling-f1} and~\ref{tab:scaling-shd} show that \texttt{LinADMG-MuDoNLL} benefits substantially from domain-specific scaling and is the strongest method in most configurations. Its performance improves systematically with the amount of multi-domain information: increasing the number of domains from $T=n$ to $T=3n$ raises directed F1 and bidirected F1, while increasing $N$ from $1000$ to $50000$ raises the respective scores as well. LiNGAM is strongest when there is no confounding. In contrast, BANG, DCD, and RCD recover little of the graph under Gaussian MD-Scalings, although BANG and RCD improve under non-Gaussian noise.

\paragraph{Diag-GMMs.}
Tables~\ref{tab:auxillary-f1} and~\ref{tab:auxillary-shd} show \texttt{LinADMG-diagGMM}  consistently provides the best directed estimation across all configurations. For bidirected edges, \texttt{LinADMG-diagGMM}  and BANG are the two strongest methods. BANG is slightly better at $n=5$ and $n=10$ and approaches perfect recovery at $N=50000$. Overall, \texttt{LinADMG-diagGMM}  offers the best overall recovery of the complete ADMG, while BANG is especially effective for the complete bidirected structure
when sufficient data are available. Results of BANG and DCD at $n=20$ are missing due to computational cost.

\paragraph{UP-Observations.} Tables~\ref{tab:mask-f1} and~\ref{tab:mask-shd} show that LiNGAM provides the strongest overall performance under the UP-Observations DGP for directed edge estimation. \texttt{LinADMG-SP}  is less accurate for bidirected-edge estimation and shows relatively large SHD, indicating that it tends to introduce additional bidirected edges. Nevertheless, \texttt{LinADMG-SP}  becomes competitive for directed recovery under non-Gaussian noise and achieves the best directed results under Gumbel noise, whereas BANG is best under exponential noise. Performance generally decreases as the dimension $n$ and confounding density $K_U$ increase, although the increase in bidirected F1 at higher confounding density partly reflects the larger number of true bidirected edges. Overall, LiNGAM in general performs the best across the configurations, since the masking layer introduces independence to the noises, while the main advantage of \texttt{LinADMG-SP} is limited to directed recovery under Gumbel noise.

\paragraph{UMN-Hard Interventions.} Tables~\ref{tab:hard-intervention-f1} and \ref{tab:hard-intervention-shd} shows when $n$ is relatively small, UMNI bidirected edges. At the $n=10$ Gaussian anchor, however, its accuracy decreases and becomes
highly variable. LiNGAM gives the best directed recovery, whereas UMNI better detects true confounders and avoids false positives. Under exponential and Gumbel noise, BANG becomes the strongest method for both directed and bidirected estimation, consistent with its assumption of non-Gaussianity. UMNI results are missing for the setting $n=20$, $T=3n$, $N=1000$, and $N=5000$ due to the requirement of sufficient sample numbers for precise score estimation from the algorithm perspective.
\begin{table*}[t]
\caption{MD-Scalings results: average F1 $\pm$ standard deviation over 20 random seeds. The varying parameter and the best method in each row are shown in bold.}
\label{tab:scaling-f1}
\centering
\begin{small}
\begin{tabular}{llccccc}
\toprule
Setting & Value & LinADMG-MuDoNLL & LiNGAM & BANG & DCD & RCD \\
\midrule
\multicolumn{7}{l}{\textit{Directed edges}} \\
\multicolumn{7}{l}{\emph{Dimension}} \\
$n$ & \textbf{5} & 0.927 $\pm$ 0.125 & \textbf{0.943 $\pm$ 0.107} & 0.000 $\pm$ 0.000 & 0.156 $\pm$ 0.139 & 0.197 $\pm$ 0.173 \\
$n$ & \textbf{10} & \textbf{0.883 $\pm$ 0.196} & 0.815 $\pm$ 0.096 & 0.005 $\pm$ 0.021 & 0.078 $\pm$ 0.073 & 0.064 $\pm$ 0.083 \\
$n$ & \textbf{20} & \textbf{0.854 $\pm$ 0.280} & 0.754 $\pm$ 0.069 & 0.005 $\pm$ 0.015 & 0.026 $\pm$ 0.034 & 0.033 $\pm$ 0.038 \\
\addlinespace
\multicolumn{7}{l}{\emph{Number of domains}} \\
$T/n$ & \textbf{1} & \textbf{0.883 $\pm$ 0.196} & 0.815 $\pm$ 0.096 & 0.005 $\pm$ 0.021 & 0.078 $\pm$ 0.073 & 0.064 $\pm$ 0.083 \\
$T/n$ & \textbf{2} & \textbf{0.911 $\pm$ 0.171} & 0.825 $\pm$ 0.063 & 0.000 $\pm$ 0.000 & 0.053 $\pm$ 0.066 & 0.041 $\pm$ 0.073 \\
$T/n$ & \textbf{3} & \textbf{0.976 $\pm$ 0.087} & 0.831 $\pm$ 0.041 & 0.005 $\pm$ 0.021 & 0.082 $\pm$ 0.078 & 0.052 $\pm$ 0.102 \\
\addlinespace
\multicolumn{7}{l}{\emph{Sample size}} \\
$N$ & \textbf{1000} & \textbf{0.843 $\pm$ 0.101} & 0.821 $\pm$ 0.072 & 0.010 $\pm$ 0.029 & 0.068 $\pm$ 0.062 & 0.079 $\pm$ 0.105 \\
$N$ & \textbf{5000} & \textbf{0.883 $\pm$ 0.196} & 0.815 $\pm$ 0.096 & 0.005 $\pm$ 0.021 & 0.078 $\pm$ 0.073 & 0.064 $\pm$ 0.083 \\
$N$ & \textbf{50000} & \textbf{0.967 $\pm$ 0.108} & 0.818 $\pm$ 0.081 & 0.000 $\pm$ 0.000 & 0.063 $\pm$ 0.051 & 0.063 $\pm$ 0.109 \\
\addlinespace
\multicolumn{7}{l}{\emph{Graph density}} \\
$k_X$ & \textbf{1.0} & \textbf{0.773 $\pm$ 0.111} & 0.671 $\pm$ 0.097 & 0.000 $\pm$ 0.000 & 0.083 $\pm$ 0.086 & 0.122 $\pm$ 0.125 \\
$k_X$ & \textbf{2.0} & \textbf{0.883 $\pm$ 0.196} & 0.815 $\pm$ 0.096 & 0.005 $\pm$ 0.021 & 0.078 $\pm$ 0.073 & 0.064 $\pm$ 0.083 \\
$k_X$ & \textbf{3.0} & 0.818 $\pm$ 0.268 & \textbf{0.876 $\pm$ 0.099} & 0.003 $\pm$ 0.014 & 0.066 $\pm$ 0.074 & 0.052 $\pm$ 0.082 \\
\addlinespace
\multicolumn{7}{l}{\emph{Confounding density}} \\
$k_U$ & \textbf{0.0} & 0.909 $\pm$ 0.063 & \textbf{0.999 $\pm$ 0.005} & 0.000 $\pm$ 0.000 & 0.089 $\pm$ 0.063 & 0.079 $\pm$ 0.104 \\
$k_U$ & \textbf{0.5} & \textbf{0.883 $\pm$ 0.196} & 0.815 $\pm$ 0.096 & 0.005 $\pm$ 0.021 & 0.078 $\pm$ 0.073 & 0.064 $\pm$ 0.083 \\
$k_U$ & \textbf{1.0} & \textbf{0.803 $\pm$ 0.277} & 0.674 $\pm$ 0.097 & 0.005 $\pm$ 0.020 & 0.053 $\pm$ 0.076 & 0.099 $\pm$ 0.080 \\
\addlinespace
\multicolumn{7}{l}{\emph{Noise distribution}} \\
Noise & \textbf{gauss} & \textbf{0.883 $\pm$ 0.196} & 0.815 $\pm$ 0.096 & 0.005 $\pm$ 0.021 & 0.078 $\pm$ 0.073 & 0.064 $\pm$ 0.083 \\
Noise & \textbf{exp} & \textbf{0.850 $\pm$ 0.176} & 0.788 $\pm$ 0.107 & 0.823 $\pm$ 0.070 & 0.098 $\pm$ 0.084 & 0.696 $\pm$ 0.096 \\
Noise & \textbf{gumbel} & \textbf{0.832 $\pm$ 0.218} & 0.763 $\pm$ 0.091 & 0.424 $\pm$ 0.144 & 0.066 $\pm$ 0.083 & 0.618 $\pm$ 0.142 \\
\midrule
\multicolumn{7}{l}{\textit{Bidirected edges}} \\
\multicolumn{7}{l}{\emph{Dimension}} \\
$n$ & \textbf{5} & \textbf{0.576 $\pm$ 0.142} & --  & 0.000 $\pm$ 0.000 & 0.190 $\pm$ 0.304 & 0.364 $\pm$ 0.128 \\
$n$ & \textbf{10} & \textbf{0.783 $\pm$ 0.142} & --  & 0.000 $\pm$ 0.000 & 0.056 $\pm$ 0.129 & 0.306 $\pm$ 0.080 \\
$n$ & \textbf{20} & \textbf{0.857 $\pm$ 0.156} & --  & 0.006 $\pm$ 0.028 & 0.018 $\pm$ 0.044 & 0.230 $\pm$ 0.048 \\
\addlinespace
\multicolumn{7}{l}{\emph{Number of domains}} \\
$T/n$ & \textbf{1} & \textbf{0.783 $\pm$ 0.142} & --  & 0.000 $\pm$ 0.000 & 0.056 $\pm$ 0.129 & 0.306 $\pm$ 0.080 \\
$T/n$ & \textbf{2} & \textbf{0.864 $\pm$ 0.225} & --  & 0.000 $\pm$ 0.000 & 0.011 $\pm$ 0.050 & 0.290 $\pm$ 0.102 \\
$T/n$ & \textbf{3} & \textbf{0.933 $\pm$ 0.156} & --  & 0.000 $\pm$ 0.000 & 0.012 $\pm$ 0.056 & 0.315 $\pm$ 0.069 \\
\addlinespace
\multicolumn{7}{l}{\emph{Sample size}} \\
$N$ & \textbf{1000} & \textbf{0.465 $\pm$ 0.092} & --  & 0.009 $\pm$ 0.041 & 0.036 $\pm$ 0.120 & 0.289 $\pm$ 0.088 \\
$N$ & \textbf{5000} & \textbf{0.783 $\pm$ 0.142} & --  & 0.000 $\pm$ 0.000 & 0.056 $\pm$ 0.129 & 0.306 $\pm$ 0.080 \\
$N$ & \textbf{50000} & \textbf{0.910 $\pm$ 0.185} & --  & 0.000 $\pm$ 0.000 & 0.035 $\pm$ 0.118 & 0.301 $\pm$ 0.082 \\
\addlinespace
\multicolumn{7}{l}{\emph{Graph density}} \\
$k_X$ & \textbf{1.0} & \textbf{0.777 $\pm$ 0.165} & --  & 0.000 $\pm$ 0.000 & 0.011 $\pm$ 0.050 & 0.402 $\pm$ 0.110 \\
$k_X$ & \textbf{2.0} & \textbf{0.783 $\pm$ 0.142} & --  & 0.000 $\pm$ 0.000 & 0.056 $\pm$ 0.129 & 0.306 $\pm$ 0.080 \\
$k_X$ & \textbf{3.0} & \textbf{0.658 $\pm$ 0.272} & --  & 0.000 $\pm$ 0.000 & 0.053 $\pm$ 0.112 & 0.265 $\pm$ 0.064 \\
\addlinespace
\multicolumn{7}{l}{\emph{Confounding density}} \\
$k_U$ & \textbf{0.0} & \textbf{0.000 $\pm$ 0.000} & --  & \textbf{0.000 $\pm$ 0.000} & \textbf{0.000 $\pm$ 0.000} & \textbf{0.000 $\pm$ 0.000} \\
$k_U$ & \textbf{0.5} & \textbf{0.783 $\pm$ 0.142} & --  & 0.000 $\pm$ 0.000 & 0.056 $\pm$ 0.129 & 0.306 $\pm$ 0.080 \\
$k_U$ & \textbf{1.0} & \textbf{0.823 $\pm$ 0.125} & --  & 0.000 $\pm$ 0.000 & 0.067 $\pm$ 0.083 & 0.555 $\pm$ 0.086 \\
\addlinespace
\multicolumn{7}{l}{\emph{Noise distribution}} \\
Noise & \textbf{gauss} & \textbf{0.783 $\pm$ 0.142} & --  & 0.000 $\pm$ 0.000 & 0.056 $\pm$ 0.129 & 0.306 $\pm$ 0.080 \\
Noise & \textbf{exp} & \textbf{0.764 $\pm$ 0.166} & --  & 0.581 $\pm$ 0.201 & 0.025 $\pm$ 0.112 & 0.241 $\pm$ 0.118 \\
Noise & \textbf{gumbel} & \textbf{0.714 $\pm$ 0.206} & --  & 0.376 $\pm$ 0.100 & 0.137 $\pm$ 0.162 & 0.243 $\pm$ 0.130 \\
\bottomrule
\end{tabular}
\end{small}
\end{table*}

\begin{table*}[t]
\caption{MD-Scalings results: average SHD $\pm$ standard deviation over 20 random seeds. The varying parameter and the best method in each row are shown in bold.}
\label{tab:scaling-shd}
\centering
\begin{small}
\begin{tabular}{llccccc}
\toprule
Setting & Value & LinADMG-MuDoNLL & LiNGAM & BANG & DCD & RCD \\
\midrule
\multicolumn{7}{l}{\textit{Directed edges}} \\
\multicolumn{7}{l}{\emph{Dimension}} \\
$n$ & \textbf{5} & 1.300 $\pm$ 2.227 & \textbf{1.100 $\pm$ 2.075} & 10.050 $\pm$ 0.224 & 9.900 $\pm$ 1.518 & 8.850 $\pm$ 1.089 \\
$n$ & \textbf{10} & \textbf{4.500 $\pm$ 5.708} & 9.300 $\pm$ 5.859 & 20.000 $\pm$ 0.324 & 21.050 $\pm$ 1.849 & 19.450 $\pm$ 0.999 \\
$n$ & \textbf{20} & \textbf{8.750 $\pm$ 12.586} & 26.550 $\pm$ 10.560 & 40.100 $\pm$ 0.447 & 42.850 $\pm$ 2.207 & 39.450 $\pm$ 0.759 \\
\addlinespace
\multicolumn{7}{l}{\emph{Number of domains}} \\
$T/n$ & \textbf{1} & \textbf{4.500 $\pm$ 5.708} & 9.300 $\pm$ 5.859 & 20.000 $\pm$ 0.324 & 21.050 $\pm$ 1.849 & 19.450 $\pm$ 0.999 \\
$T/n$ & \textbf{2} & \textbf{3.550 $\pm$ 6.419} & 8.400 $\pm$ 3.347 & 20.000 $\pm$ 0.000 & 20.550 $\pm$ 1.276 & 19.600 $\pm$ 0.754 \\
$T/n$ & \textbf{3} & \textbf{1.100 $\pm$ 4.012} & 8.050 $\pm$ 2.212 & 19.950 $\pm$ 0.224 & 20.200 $\pm$ 1.152 & 19.400 $\pm$ 1.231 \\
\addlinespace
\multicolumn{7}{l}{\emph{Sample size}} \\
$N$ & \textbf{1000} & \textbf{7.200 $\pm$ 4.360} & 8.700 $\pm$ 4.143 & 19.950 $\pm$ 0.394 & 20.250 $\pm$ 1.372 & 19.300 $\pm$ 1.174 \\
$N$ & \textbf{5000} & \textbf{4.500 $\pm$ 5.708} & 9.300 $\pm$ 5.859 & 20.000 $\pm$ 0.324 & 21.050 $\pm$ 1.849 & 19.450 $\pm$ 0.999 \\
$N$ & \textbf{50000} & \textbf{1.300 $\pm$ 4.207} & 9.100 $\pm$ 4.973 & 20.200 $\pm$ 0.410 & 20.750 $\pm$ 1.372 & 19.400 $\pm$ 1.142 \\
\addlinespace
\multicolumn{7}{l}{\emph{Graph density}} \\
$k_X$ & \textbf{1.0} & \textbf{6.050 $\pm$ 3.561} & 9.850 $\pm$ 3.937 & 10.000 $\pm$ 0.000 & 11.150 $\pm$ 1.631 & 9.400 $\pm$ 0.821 \\
$k_X$ & \textbf{2.0} & \textbf{4.500 $\pm$ 5.708} & 9.300 $\pm$ 5.859 & 20.000 $\pm$ 0.324 & 21.050 $\pm$ 1.849 & 19.450 $\pm$ 0.999 \\
$k_X$ & \textbf{3.0} & 9.000 $\pm$ 12.490 & \textbf{8.450 $\pm$ 7.529} & 29.950 $\pm$ 0.224 & 30.100 $\pm$ 1.683 & 29.150 $\pm$ 1.387 \\
\addlinespace
\multicolumn{7}{l}{\emph{Confounding density}} \\
$k_U$ & \textbf{0.0} & 4.050 $\pm$ 2.892 & \textbf{0.050 $\pm$ 0.224} & 20.000 $\pm$ 0.000 & 20.100 $\pm$ 1.165 & 19.250 $\pm$ 1.164 \\
$k_U$ & \textbf{0.5} & \textbf{4.500 $\pm$ 5.708} & 9.300 $\pm$ 5.859 & 20.000 $\pm$ 0.324 & 21.050 $\pm$ 1.849 & 19.450 $\pm$ 0.999 \\
$k_U$ & \textbf{1.0} & \textbf{7.350 $\pm$ 9.051} & 18.450 $\pm$ 6.395 & 20.100 $\pm$ 0.308 & 20.800 $\pm$ 1.609 & 19.200 $\pm$ 0.894 \\
\addlinespace
\multicolumn{7}{l}{\emph{Noise distribution}} \\
Noise & \textbf{gauss} & \textbf{4.500 $\pm$ 5.708} & 9.300 $\pm$ 5.859 & 20.000 $\pm$ 0.324 & 21.050 $\pm$ 1.849 & 19.450 $\pm$ 0.999 \\
Noise & \textbf{exp} & \textbf{6.050 $\pm$ 5.031} & 10.350 $\pm$ 5.706 & 7.300 $\pm$ 2.774 & 19.850 $\pm$ 1.461 & 10.350 $\pm$ 2.661 \\
Noise & \textbf{gumbel} & \textbf{6.650 $\pm$ 6.556} & 12.100 $\pm$ 5.389 & 17.500 $\pm$ 3.154 & 20.400 $\pm$ 1.465 & 12.050 $\pm$ 3.677 \\
\midrule
\multicolumn{7}{l}{\textit{Bidirected edges}} \\
\multicolumn{7}{l}{\emph{Dimension}} \\
$n$ & \textbf{5} & 3.650 $\pm$ 1.496 & --  & 2.400 $\pm$ 0.503 & \textbf{2.300 $\pm$ 0.865} & 5.700 $\pm$ 1.720 \\
$n$ & \textbf{10} & \textbf{4.500 $\pm$ 5.615} & --  & 7.500 $\pm$ 1.357 & 7.300 $\pm$ 1.490 & 21.500 $\pm$ 3.606 \\
$n$ & \textbf{20} & \textbf{5.550 $\pm$ 7.884} & --  & 16.000 $\pm$ 4.078 & 15.750 $\pm$ 3.932 & 64.750 $\pm$ 9.095 \\
\addlinespace
\multicolumn{7}{l}{\emph{Number of domains}} \\
$T/n$ & \textbf{1} & \textbf{4.500 $\pm$ 5.615} & --  & 7.500 $\pm$ 1.357 & 7.300 $\pm$ 1.490 & 21.500 $\pm$ 3.606 \\
$T/n$ & \textbf{2} & \textbf{3.000 $\pm$ 6.224} & --  & 7.050 $\pm$ 1.731 & 7.100 $\pm$ 1.832 & 21.050 $\pm$ 4.334 \\
$T/n$ & \textbf{3} & \textbf{1.600 $\pm$ 4.893} & -- & 7.150 $\pm$ 1.565 & 7.200 $\pm$ 1.673 & 22.200 $\pm$ 3.636 \\
\addlinespace
\multicolumn{7}{l}{\emph{Sample size}} \\
$N$ & \textbf{1000} & 15.250 $\pm$ 4.241 & --  & 7.500 $\pm$ 1.192 & \textbf{7.450 $\pm$ 1.432} & 22.150 $\pm$ 3.543 \\
$N$ & \textbf{5000} & \textbf{4.500 $\pm$ 5.615} & --  & 7.500 $\pm$ 1.357 & 7.300 $\pm$ 1.490 & 21.500 $\pm$ 3.606 \\
$N$ & \textbf{50000} & \textbf{2.300 $\pm$ 7.035} &--  & 7.500 $\pm$ 1.357 & 7.500 $\pm$ 1.433 & 22.650 $\pm$ 3.360 \\
\addlinespace
\multicolumn{7}{l}{\emph{Graph density}} \\
$k_X$ & \textbf{1.0} & \textbf{4.450 $\pm$ 5.558} & --  & 7.500 $\pm$ 1.277 & 7.450 $\pm$ 1.395 & 12.950 $\pm$ 2.328 \\
$k_X$ & \textbf{2.0} & \textbf{4.500 $\pm$ 5.615} & --  & 7.500 $\pm$ 1.357 & 7.300 $\pm$ 1.490 & 21.500 $\pm$ 3.606 \\
$k_X$ & \textbf{3.0} & 9.750 $\pm$ 10.543 & -- & \textbf{7.450 $\pm$ 1.317} & 7.750 $\pm$ 1.650 & 27.200 $\pm$ 3.679 \\
\addlinespace
\multicolumn{7}{l}{\emph{Confounding density}} \\
$k_U$ & \textbf{0.0} & 6.750 $\pm$ 3.985 & --  & \textbf{0.100 $\pm$ 0.308} & 0.250 $\pm$ 0.444 & 22.600 $\pm$ 5.185 \\
$k_U$ & \textbf{0.5} & \textbf{4.500 $\pm$ 5.615} & --  & 7.500 $\pm$ 1.357 & 7.300 $\pm$ 1.490 & 21.500 $\pm$ 3.606 \\
$k_U$ & \textbf{1.0} & \textbf{8.250 $\pm$ 6.766} & --  & 22.100 $\pm$ 4.128 & 21.450 $\pm$ 4.478 & 20.600 $\pm$ 3.119 \\
\addlinespace
\multicolumn{7}{l}{\emph{Noise distribution}} \\
Noise & \textbf{gauss} & \textbf{4.500 $\pm$ 5.615} & --  & 7.500 $\pm$ 1.357 & 7.300 $\pm$ 1.490 & 21.500 $\pm$ 3.606 \\
Noise & \textbf{exp} & 4.700 $\pm$ 4.438 & --  & \textbf{4.600 $\pm$ 1.875} & 7.500 $\pm$ 1.433 & 10.300 $\pm$ 2.342 \\
Noise & \textbf{gumbel} & \textbf{6.100 $\pm$ 6.095} &--  & 21.850 $\pm$ 5.896 & 8.750 $\pm$ 2.049 & 12.250 $\pm$ 3.401 \\
\bottomrule
\end{tabular}
\end{small}
\end{table*}

\begin{table*}[t]
\caption{Diag-GMMs results: average F1 $\pm$ standard deviation over 20 random seeds. The varying parameter and the best method in each row are shown in bold.}
\label{tab:auxillary-f1}
\centering
\begin{small}
\begin{tabular}{llccccc}
\toprule
Setting & Value & LinADMG-diagGMM & LiNGAM & BANG & DCD & RCD \\
\midrule
\multicolumn{7}{l}{\textit{Directed edges}} \\
\multicolumn{7}{l}{\emph{Dimension}} \\
$n$ & \textbf{5} & \textbf{0.873 $\pm$ 0.157} & 0.617 $\pm$ 0.193 & 0.000 $\pm$ 0.000 & 0.146 $\pm$ 0.132 & 0.148 $\pm$ 0.171 \\
$n$ & \textbf{10} & \textbf{0.817 $\pm$ 0.100} & 0.495 $\pm$ 0.128 & 0.062 $\pm$ 0.100 & 0.135 $\pm$ 0.083 & 0.141 $\pm$ 0.091 \\
$n$ & \textbf{20} & \textbf{0.743 $\pm$ 0.081} & 0.375 $\pm$ 0.058 & -- & -- & 0.134 $\pm$ 0.093 \\
\addlinespace
\multicolumn{7}{l}{\emph{Sample size}} \\
$N$ & \textbf{1000} & \textbf{0.806 $\pm$ 0.092} & 0.536 $\pm$ 0.126 & 0.109 $\pm$ 0.089 & 0.082 $\pm$ 0.079 & 0.154 $\pm$ 0.096 \\
$N$ & \textbf{5000} & \textbf{0.817 $\pm$ 0.100} & 0.495 $\pm$ 0.128 & 0.062 $\pm$ 0.100 & 0.135 $\pm$ 0.083 & 0.141 $\pm$ 0.091 \\
$N$ & \textbf{50000} & \textbf{0.821 $\pm$ 0.076} & 0.495 $\pm$ 0.108 & 0.005 $\pm$ 0.021 & 0.120 $\pm$ 0.083 & 0.160 $\pm$ 0.110 \\
\addlinespace
\multicolumn{7}{l}{\emph{Graph density}} \\
$k_X$ & \textbf{1.0} & \textbf{0.691 $\pm$ 0.107} & 0.313 $\pm$ 0.092 & 0.071 $\pm$ 0.108 & 0.087 $\pm$ 0.116 & 0.124 $\pm$ 0.112 \\
$k_X$ & \textbf{2.0} & \textbf{0.817 $\pm$ 0.100} & 0.495 $\pm$ 0.128 & 0.062 $\pm$ 0.100 & 0.135 $\pm$ 0.083 & 0.141 $\pm$ 0.091 \\
$k_X$ & \textbf{3.0} & \textbf{0.833 $\pm$ 0.109} & 0.653 $\pm$ 0.113 & 0.052 $\pm$ 0.061 & 0.125 $\pm$ 0.064 & 0.104 $\pm$ 0.084 \\
\midrule
\multicolumn{7}{l}{\textit{Bidirected edges}} \\
\multicolumn{7}{l}{\emph{Dimension}} \\
$n$ & \textbf{5} & 0.956 $\pm$ 0.077 & -- & \textbf{0.995 $\pm$ 0.016} & 0.538 $\pm$ 0.218 & 0.709 $\pm$ 0.211 \\
$n$ & \textbf{10} & 0.930 $\pm$ 0.198 &  -- & \textbf{0.946 $\pm$ 0.059} & 0.347 $\pm$ 0.101 & 0.673 $\pm$ 0.121 \\
$n$ & \textbf{20} & \textbf{0.951 $\pm$ 0.082} &  -- & -- & -- & 0.648 $\pm$ 0.104 \\
\addlinespace
\multicolumn{7}{l}{\emph{Sample size}} \\
$N$ & \textbf{1000} & \textbf{0.971 $\pm$ 0.049} & -- & 0.811 $\pm$ 0.134 & 0.393 $\pm$ 0.103 & 0.691 $\pm$ 0.121 \\
$N$ & \textbf{5000} & 0.930 $\pm$ 0.198 &  -- & \textbf{0.946 $\pm$ 0.059} & 0.347 $\pm$ 0.101 & 0.673 $\pm$ 0.121 \\
$N$ & \textbf{50000} & 0.973 $\pm$ 0.064 &  --& \textbf{0.996 $\pm$ 0.008} & 0.337 $\pm$ 0.065 & 0.637 $\pm$ 0.127 \\
\addlinespace
\multicolumn{7}{l}{\emph{Graph density}} \\
$k_X$ & \textbf{1.0} & 0.920 $\pm$ 0.151 &  -- & \textbf{0.940 $\pm$ 0.079} & 0.324 $\pm$ 0.094 & 0.658 $\pm$ 0.138 \\
$k_X$ & \textbf{2.0} & 0.930 $\pm$ 0.198 & -- & \textbf{0.946 $\pm$ 0.059} & 0.347 $\pm$ 0.101 & 0.673 $\pm$ 0.121 \\
$k_X$ & \textbf{3.0} & \textbf{0.921 $\pm$ 0.137} &  --& 0.918 $\pm$ 0.072 & 0.358 $\pm$ 0.087 & 0.751 $\pm$ 0.105 \\
\bottomrule
\end{tabular}
\end{small}
\end{table*}

\begin{table*}[t]
\caption{Diag-GMMs results: average SHD $\pm$ standard deviation over 20 random seeds. The varying parameter and the best method in each row are shown in bold.}
\label{tab:auxillary-shd}
\centering
\begin{small}
\begin{tabular}{llccccc}
\toprule
Setting & Value & LinADMG-diagGMM & LiNGAM & BANG & DCD & RCD \\
\midrule
\multicolumn{7}{l}{\textit{Directed edges}} \\
\multicolumn{7}{l}{\emph{Dimension}} \\
$n$ & \textbf{5} & \textbf{2.200 $\pm$ 2.546} & 7.550 $\pm$ 3.720 & 10.000 $\pm$ 0.000 & 9.950 $\pm$ 1.432 & 9.800 $\pm$ 1.508 \\
$n$ & \textbf{10} & \textbf{7.650 $\pm$ 3.897} & 30.300 $\pm$ 8.417 & 20.350 $\pm$ 1.599 & 21.200 $\pm$ 1.989 & 22.000 $\pm$ 2.026 \\
$n$ & \textbf{20} & \textbf{24.800 $\pm$ 9.339} & 114.450 $\pm$ 18.715 & -- & -- & 48.450 $\pm$ 4.548 \\
\addlinespace
\multicolumn{7}{l}{\emph{Sample size}} \\
$N$ & \textbf{1000} & \textbf{7.950 $\pm$ 3.634} & 26.450 $\pm$ 7.844 & 21.900 $\pm$ 2.049 & 21.900 $\pm$ 1.861 & 21.750 $\pm$ 2.807 \\
$N$ & \textbf{5000} & \textbf{7.650 $\pm$ 3.897} & 30.300 $\pm$ 8.417 & 20.350 $\pm$ 1.599 & 21.200 $\pm$ 1.989 & 22.000 $\pm$ 2.026 \\
$N$ & \textbf{50000} & \textbf{7.600 $\pm$ 3.331} & 31.600 $\pm$ 7.430 & 20.050 $\pm$ 0.394 & 21.650 $\pm$ 2.346 & 22.900 $\pm$ 2.827 \\
\addlinespace
\multicolumn{7}{l}{\emph{Graph density}} \\
$k_X$ & \textbf{1.0} & \textbf{8.550 $\pm$ 3.832} & 33.500 $\pm$ 6.747 & 10.800 $\pm$ 1.361 & 12.850 $\pm$ 2.346 & 13.800 $\pm$ 2.726 \\
$k_X$ & \textbf{2.0} & \textbf{7.650 $\pm$ 3.897} & 30.300 $\pm$ 8.417 & 20.350 $\pm$ 1.599 & 21.200 $\pm$ 1.989 & 22.000 $\pm$ 2.026 \\
$k_X$ & \textbf{3.0} & \textbf{9.550 $\pm$ 5.698} & 24.950 $\pm$ 8.532 & 31.050 $\pm$ 1.638 & 30.450 $\pm$ 2.305 & 31.200 $\pm$ 2.331 \\
\midrule
\multicolumn{7}{l}{\textit{Bidirected edges}} \\
\multicolumn{7}{l}{\emph{Dimension}} \\
$n$ & \textbf{5} & 0.750 $\pm$ 1.251 &  -- & \textbf{0.100 $\pm$ 0.308} & 6.050 $\pm$ 1.932 & 4.150 $\pm$ 2.323 \\
$n$ & \textbf{10} & \textbf{4.200 $\pm$ 9.876} &  -- & 4.350 $\pm$ 4.591 & 35.350 $\pm$ 3.360 & 21.600 $\pm$ 6.516 \\
$n$ & \textbf{20} & \textbf{16.050 $\pm$ 23.712} &  -- & -- & -- & 97.400 $\pm$ 21.960 \\
\addlinespace
\multicolumn{7}{l}{\emph{Sample size}} \\
$N$ & \textbf{1000} & \textbf{2.350 $\pm$ 3.951} &  -- & 13.450 $\pm$ 8.281 & 33.800 $\pm$ 3.427 & 20.700 $\pm$ 6.165 \\
$N$ & \textbf{5000} & \textbf{4.200 $\pm$ 9.876} &  -- & 4.350 $\pm$ 4.591 & 35.350 $\pm$ 3.360 & 21.600 $\pm$ 6.516 \\
$N$ & \textbf{50000} & 2.100 $\pm$ 4.767 &  -- & \textbf{0.350 $\pm$ 0.745} & 35.800 $\pm$ 2.093 & 23.450 $\pm$ 5.942 \\
\addlinespace
\multicolumn{7}{l}{\emph{Graph density}} \\
$k_X$ & \textbf{1.0} & 5.500 $\pm$ 8.593 &  -- & \textbf{4.650 $\pm$ 5.887} & 36.150 $\pm$ 3.066 & 22.300 $\pm$ 6.658 \\
$k_X$ & \textbf{2.0} & \textbf{4.200 $\pm$ 9.876} &  -- & 4.350 $\pm$ 4.591 & 35.350 $\pm$ 3.360 & 21.600 $\pm$ 6.516 \\
$k_X$ & \textbf{3.0} & \textbf{5.650 $\pm$ 8.061} &  -- & 6.500 $\pm$ 5.395 & 35.050 $\pm$ 2.929 & 17.500 $\pm$ 5.862 \\
\bottomrule
\end{tabular}
\end{small}
\end{table*}

\begin{table*}[t]
\caption{UP-Observations results: average F1 $\pm$ standard deviation over 20 random seeds. The varying parameter and the best method in each row are shown in bold.}
\label{tab:mask-f1}
\centering
\begin{small}
\begin{tabular}{llccccc}
\toprule
Setting & Value & LinADMG-SP & LiNGAM & BANG & DCD & RCD \\
\midrule
\multicolumn{7}{l}{\textit{Directed edges}} \\
\multicolumn{7}{l}{\emph{Dimension}} \\
$n$ & \textbf{5} & 0.746 $\pm$ 0.242 & \textbf{0.980 $\pm$ 0.052} & 0.000 $\pm$ 0.000 & 0.156 $\pm$ 0.139 & 0.350 $\pm$ 0.177 \\
$n$ & \textbf{10} & 0.724 $\pm$ 0.224 & \textbf{0.838 $\pm$ 0.051} & 0.010 $\pm$ 0.029 & 0.076 $\pm$ 0.078 & 0.427 $\pm$ 0.179 \\
$n$ & \textbf{20} & 0.557 $\pm$ 0.149 & \textbf{0.750 $\pm$ 0.072} & 0.000 $\pm$ 0.000 & 0.019 $\pm$ 0.023 & 0.532 $\pm$ 0.112 \\
\addlinespace
\multicolumn{7}{l}{\emph{Number of domains}} \\
$T/n$ & \textbf{1} & 0.724 $\pm$ 0.224 & \textbf{0.838 $\pm$ 0.051} & 0.010 $\pm$ 0.029 & 0.076 $\pm$ 0.078 & 0.427 $\pm$ 0.179 \\
$T/n$ & \textbf{2} & 0.690 $\pm$ 0.294 & \textbf{0.841 $\pm$ 0.041} & 0.005 $\pm$ 0.021 & 0.058 $\pm$ 0.059 & 0.470 $\pm$ 0.162 \\
$T/n$ & \textbf{3} & 0.784 $\pm$ 0.179 & \textbf{0.837 $\pm$ 0.044} & 0.005 $\pm$ 0.020 & 0.070 $\pm$ 0.064 & 0.565 $\pm$ 0.175 \\
\addlinespace
\multicolumn{7}{l}{\emph{Sample size}} \\
$N$ & \textbf{1000} & 0.679 $\pm$ 0.224 & \textbf{0.847 $\pm$ 0.062} & 0.000 $\pm$ 0.000 & 0.071 $\pm$ 0.081 & 0.386 $\pm$ 0.174 \\
$N$ & \textbf{5000} & 0.724 $\pm$ 0.224 & \textbf{0.838 $\pm$ 0.051} & 0.010 $\pm$ 0.029 & 0.076 $\pm$ 0.078 & 0.427 $\pm$ 0.179 \\
$N$ & \textbf{50000} & 0.729 $\pm$ 0.213 & \textbf{0.821 $\pm$ 0.058} & 0.005 $\pm$ 0.021 & 0.065 $\pm$ 0.088 & 0.373 $\pm$ 0.161 \\
\addlinespace
\multicolumn{7}{l}{\emph{Graph density}} \\
$k_X$ & \textbf{1.0} & 0.573 $\pm$ 0.194 & \textbf{0.702 $\pm$ 0.087} & 0.009 $\pm$ 0.041 & 0.093 $\pm$ 0.102 & 0.479 $\pm$ 0.206 \\
$k_X$ & \textbf{2.0} & 0.724 $\pm$ 0.224 & \textbf{0.838 $\pm$ 0.051} & 0.010 $\pm$ 0.029 & 0.076 $\pm$ 0.078 & 0.427 $\pm$ 0.179 \\
$k_X$ & \textbf{3.0} & 0.602 $\pm$ 0.241 & \textbf{0.923 $\pm$ 0.031} & 0.003 $\pm$ 0.014 & 0.072 $\pm$ 0.062 & 0.347 $\pm$ 0.178 \\
\addlinespace
\multicolumn{7}{l}{\emph{Confounding density}} \\
$k_U$ & \textbf{0.0} & 0.671 $\pm$ 0.223 & \textbf{1.000 $\pm$ 0.000} & 0.000 $\pm$ 0.000 & 0.100 $\pm$ 0.073 & 0.422 $\pm$ 0.151 \\
$k_U$ & \textbf{0.5} & 0.724 $\pm$ 0.224 & \textbf{0.838 $\pm$ 0.051} & 0.010 $\pm$ 0.029 & 0.076 $\pm$ 0.078 & 0.427 $\pm$ 0.179 \\
$k_U$ & \textbf{1.0} & 0.549 $\pm$ 0.262 & \textbf{0.738 $\pm$ 0.045} & 0.000 $\pm$ 0.000 & 0.057 $\pm$ 0.065 & 0.402 $\pm$ 0.189 \\
\addlinespace
\multicolumn{7}{l}{\emph{Noise distribution}} \\
Noise & \textbf{gauss} & 0.724 $\pm$ 0.224 & \textbf{0.838 $\pm$ 0.051} & 0.010 $\pm$ 0.029 & 0.076 $\pm$ 0.078 & 0.427 $\pm$ 0.179 \\
Noise & \textbf{exp} & 0.761 $\pm$ 0.182 & 0.837 $\pm$ 0.045 & \textbf{0.849 $\pm$ 0.071} & 0.063 $\pm$ 0.059 & 0.610 $\pm$ 0.142 \\
Noise & \textbf{gumbel} & \textbf{0.792 $\pm$ 0.138} & 0.702 $\pm$ 0.039 & 0.343 $\pm$ 0.143 & 0.071 $\pm$ 0.061 & 0.624 $\pm$ 0.138 \\
\midrule
\multicolumn{7}{l}{\textit{Bidirected edges}} \\
\multicolumn{7}{l}{\emph{Dimension}} \\
$n$ & \textbf{5} & \textbf{0.397 $\pm$ 0.132} & -- & 0.000 $\pm$ 0.000 & 0.125 $\pm$ 0.259 & 0.330 $\pm$ 0.156 \\
$n$ & \textbf{10} & \textbf{0.383 $\pm$ 0.109} & -- & 0.000 $\pm$ 0.000 & 0.022 $\pm$ 0.070 & 0.255 $\pm$ 0.110 \\
$n$ & \textbf{20} & \textbf{0.236 $\pm$ 0.066} & -- & 0.000 $\pm$ 0.000 & 0.010 $\pm$ 0.030 & 0.184 $\pm$ 0.082 \\
\addlinespace
\multicolumn{7}{l}{\emph{Number of domains}} \\
$T/n$ & \textbf{1} & \textbf{0.383 $\pm$ 0.109} & -- & 0.000 $\pm$ 0.000 & 0.022 $\pm$ 0.070 & 0.255 $\pm$ 0.110 \\
$T/n$ & \textbf{2} & \textbf{0.395 $\pm$ 0.111} & -- & 0.011 $\pm$ 0.050 & 0.010 $\pm$ 0.045 & 0.273 $\pm$ 0.138 \\
$T/n$ & \textbf{3} & \textbf{0.391 $\pm$ 0.093} & -- & 0.000 $\pm$ 0.000 & 0.010 $\pm$ 0.045 & 0.254 $\pm$ 0.099 \\
\addlinespace
\multicolumn{7}{l}{\emph{Sample size}} \\
$N$ & \textbf{1000} & \textbf{0.337 $\pm$ 0.093} & -- & 0.000 $\pm$ 0.000 & 0.012 $\pm$ 0.056 & 0.271 $\pm$ 0.103 \\
$N$ & \textbf{5000} & \textbf{0.383 $\pm$ 0.109} & -- & 0.000 $\pm$ 0.000 & 0.022 $\pm$ 0.070 & 0.255 $\pm$ 0.110 \\
$N$ & \textbf{50000} & \textbf{0.366 $\pm$ 0.116} & -- & 0.000 $\pm$ 0.000 & 0.012 $\pm$ 0.056 & 0.270 $\pm$ 0.127 \\
\addlinespace
\multicolumn{7}{l}{\emph{Graph density}} \\
$k_X$ & \textbf{1.0} & \textbf{0.421 $\pm$ 0.113} & -- & 0.000 $\pm$ 0.000 & 0.000 $\pm$ 0.000 & 0.332 $\pm$ 0.148 \\
$k_X$ & \textbf{2.0} & \textbf{0.383 $\pm$ 0.109} & -- & 0.000 $\pm$ 0.000 & 0.022 $\pm$ 0.070 & 0.255 $\pm$ 0.110 \\
$k_X$ & \textbf{3.0} & \textbf{0.298 $\pm$ 0.066} & -- & 0.000 $\pm$ 0.000 & 0.054 $\pm$ 0.096 & 0.285 $\pm$ 0.111 \\
\addlinespace
\multicolumn{7}{l}{\emph{Confounding density}} \\
$k_U$ & \textbf{0.0} & \textbf{0.000 $\pm$ 0.000} & -- & \textbf{0.000 $\pm$ 0.000} & \textbf{0.000 $\pm$ 0.000} & \textbf{0.000 $\pm$ 0.000} \\
$k_U$ & \textbf{0.5} & \textbf{0.383 $\pm$ 0.109} & -- & 0.000 $\pm$ 0.000 & 0.022 $\pm$ 0.070 & 0.255 $\pm$ 0.110 \\
$k_U$ & \textbf{1.0} & \textbf{0.618 $\pm$ 0.100} & -- & 0.010 $\pm$ 0.030 & 0.039 $\pm$ 0.057 & 0.441 $\pm$ 0.136 \\
\addlinespace
\multicolumn{7}{l}{\emph{Noise distribution}} \\
Noise & \textbf{gauss} & \textbf{0.383 $\pm$ 0.109} & -- & 0.000 $\pm$ 0.000 & 0.022 $\pm$ 0.070 & 0.255 $\pm$ 0.110 \\
Noise & \textbf{exp} & 0.395 $\pm$ 0.101 & -- & \textbf{0.726 $\pm$ 0.183} & 0.010 $\pm$ 0.045 & 0.274 $\pm$ 0.169 \\
Noise & \textbf{gumbel} & \textbf{0.387 $\pm$ 0.073} & -- & 0.328 $\pm$ 0.084 & 0.139 $\pm$ 0.140 & 0.278 $\pm$ 0.149 \\
\bottomrule
\end{tabular}
\end{small}
\end{table*}

\begin{table*}[t]
\caption{UP-Observations results: average SHD $\pm$ standard deviation over 20 random seeds. The varying parameter and the best method in each row are shown in bold.}
\label{tab:mask-shd}
\centering
\begin{small}
\begin{tabular}{llccccc}
\toprule
Setting & Value & LinADMG-SP & LiNGAM & BANG & DCD & RCD \\
\midrule
\multicolumn{7}{l}{\textit{Directed edges}} \\
\multicolumn{7}{l}{\emph{Dimension}} \\
$n$ & \textbf{5} & 4.400 $\pm$ 4.309 & \textbf{0.400 $\pm$ 1.046} & 10.000 $\pm$ 0.000 & 9.800 $\pm$ 1.361 & 7.750 $\pm$ 1.251 \\
$n$ & \textbf{10} & 11.500 $\pm$ 8.930 & \textbf{7.750 $\pm$ 2.751} & 20.000 $\pm$ 0.459 & 20.600 $\pm$ 1.930 & 15.300 $\pm$ 3.373 \\
$n$ & \textbf{20} & 39.650 $\pm$ 12.032 & 27.650 $\pm$ 12.398 & 40.200 $\pm$ 0.410 & 41.450 $\pm$ 1.395 & \textbf{27.400 $\pm$ 4.358} \\
\addlinespace
\multicolumn{7}{l}{\emph{Number of domains}} \\
$T/n$ & \textbf{1} & 11.500 $\pm$ 8.930 & \textbf{7.750 $\pm$ 2.751} & 20.000 $\pm$ 0.459 & 20.600 $\pm$ 1.930 & 15.300 $\pm$ 3.373 \\
$T/n$ & \textbf{2} & 13.800 $\pm$ 13.836 & \textbf{7.650 $\pm$ 2.390} & 20.000 $\pm$ 0.324 & 20.600 $\pm$ 1.188 & 14.500 $\pm$ 3.086 \\
$T/n$ & \textbf{3} & 9.500 $\pm$ 8.513 & \textbf{7.900 $\pm$ 2.532} & 20.250 $\pm$ 0.716 & 20.500 $\pm$ 1.100 & 12.650 $\pm$ 3.924 \\
\addlinespace
\multicolumn{7}{l}{\emph{Sample size}} \\
$N$ & \textbf{1000} & 13.850 $\pm$ 9.869 & \textbf{7.300 $\pm$ 3.358} & 20.000 $\pm$ 0.000 & 20.400 $\pm$ 1.536 & 16.250 $\pm$ 3.492 \\
$N$ & \textbf{5000} & 11.500 $\pm$ 8.930 & \textbf{7.750 $\pm$ 2.751} & 20.000 $\pm$ 0.459 & 20.600 $\pm$ 1.930 & 15.300 $\pm$ 3.373 \\
$N$ & \textbf{50000} & 11.600 $\pm$ 9.355 & \textbf{8.800 $\pm$ 3.286} & 19.950 $\pm$ 0.224 & 20.650 $\pm$ 1.631 & 16.100 $\pm$ 2.404 \\
\addlinespace
\multicolumn{7}{l}{\emph{Graph density}} \\
$k_X$ & \textbf{1.0} & 12.300 $\pm$ 5.110 & 8.850 $\pm$ 3.558 & 10.050 $\pm$ 0.394 & 10.450 $\pm$ 1.234 & \textbf{7.400 $\pm$ 1.875} \\
$k_X$ & \textbf{2.0} & 11.500 $\pm$ 8.930 & \textbf{7.750 $\pm$ 2.751} & 20.000 $\pm$ 0.459 & 20.600 $\pm$ 1.930 & 15.300 $\pm$ 3.373 \\
$k_X$ & \textbf{3.0} & 24.450 $\pm$ 16.504 & \textbf{5.050 $\pm$ 2.139} & 30.100 $\pm$ 0.308 & 30.250 $\pm$ 1.832 & 23.650 $\pm$ 4.380 \\
\addlinespace
\multicolumn{7}{l}{\emph{Confounding density}} \\
$k_U$ & \textbf{0.0} & 14.600 $\pm$ 10.252 & \textbf{0.000 $\pm$ 0.000} & 20.000 $\pm$ 0.000 & 20.200 $\pm$ 1.576 & 14.800 $\pm$ 2.687 \\
$k_U$ & \textbf{0.5} & 11.500 $\pm$ 8.930 & \textbf{7.750 $\pm$ 2.751} & 20.000 $\pm$ 0.459 & 20.600 $\pm$ 1.930 & 15.300 $\pm$ 3.373 \\
$k_U$ & \textbf{1.0} & 21.250 $\pm$ 13.325 & \textbf{14.200 $\pm$ 3.071} & 20.000 $\pm$ 0.000 & 20.450 $\pm$ 0.887 & 15.500 $\pm$ 3.317 \\
\addlinespace
\multicolumn{7}{l}{\emph{Noise distribution}} \\
Noise & \textbf{gauss} & 11.500 $\pm$ 8.930 & \textbf{7.750 $\pm$ 2.751} & 20.000 $\pm$ 0.459 & 20.600 $\pm$ 1.930 & 15.300 $\pm$ 3.373 \\
Noise & \textbf{exp} & 10.050 $\pm$ 7.222 & 7.850 $\pm$ 2.540 & \textbf{6.050 $\pm$ 2.856} & 20.650 $\pm$ 1.387 & 12.100 $\pm$ 3.712 \\
Noise & \textbf{gumbel} & \textbf{8.950 $\pm$ 5.698} & 17.050 $\pm$ 3.017 & 19.100 $\pm$ 2.292 & 20.500 $\pm$ 1.235 & 11.700 $\pm$ 3.294 \\
\midrule
\multicolumn{7}{l}{\textit{Bidirected edges}} \\
\multicolumn{7}{l}{\emph{Dimension}} \\
$n$ & \textbf{5} & 5.800 $\pm$ 1.542 & -- & 2.250 $\pm$ 0.444 & \textbf{2.150 $\pm$ 0.671} & 5.550 $\pm$ 1.761 \\
$n$ & \textbf{10} & 21.150 $\pm$ 4.416 & -- & \textbf{7.300 $\pm$ 1.261} & 7.750 $\pm$ 1.372 & 16.600 $\pm$ 3.733 \\
$n$ & \textbf{20} & 103.250 $\pm$ 39.034 & -- & 16.800 $\pm$ 4.200 & \textbf{16.700 $\pm$ 3.600} & 38.000 $\pm$ 9.414 \\
\addlinespace
\multicolumn{7}{l}{\emph{Number of domains}} \\
$T/n$ & \textbf{1} & 21.150 $\pm$ 4.416 & -- & \textbf{7.300 $\pm$ 1.261} & 7.750 $\pm$ 1.372 & 16.600 $\pm$ 3.733 \\
$T/n$ & \textbf{2} & 20.150 $\pm$ 5.566 & -- & \textbf{7.250 $\pm$ 1.251} & 7.650 $\pm$ 1.089 & 15.500 $\pm$ 4.310 \\
$T/n$ & \textbf{3} & 20.100 $\pm$ 4.667 & -- & \textbf{7.250 $\pm$ 1.293} & 7.600 $\pm$ 1.231 & 14.150 $\pm$ 3.815 \\
\addlinespace
\multicolumn{7}{l}{\emph{Sample size}} \\
$N$ & \textbf{1000} & 23.550 $\pm$ 4.639 & -- & \textbf{7.250 $\pm$ 1.293} & 7.650 $\pm$ 1.309 & 16.950 $\pm$ 3.486 \\
$N$ & \textbf{5000} & 21.150 $\pm$ 4.416 & -- & \textbf{7.300 $\pm$ 1.261} & 7.750 $\pm$ 1.372 & 16.600 $\pm$ 3.733 \\
$N$ & \textbf{50000} & 21.600 $\pm$ 5.205 & -- & \textbf{7.300 $\pm$ 1.302} & 7.850 $\pm$ 1.424 & 18.200 $\pm$ 5.227 \\
\addlinespace
\multicolumn{7}{l}{\emph{Graph density}} \\
$k_X$ & \textbf{1.0} & 18.000 $\pm$ 5.161 & -- & \textbf{7.250 $\pm$ 1.293} & 7.500 $\pm$ 1.318 & 10.100 $\pm$ 2.972 \\
$k_X$ & \textbf{2.0} & 21.150 $\pm$ 4.416 & -- & \textbf{7.300 $\pm$ 1.261} & 7.750 $\pm$ 1.372 & 16.600 $\pm$ 3.733 \\
$k_X$ & \textbf{3.0} & 26.550 $\pm$ 3.913 & -- & \textbf{7.250 $\pm$ 1.293} & 7.750 $\pm$ 1.333 & 20.100 $\pm$ 4.941 \\
\addlinespace
\multicolumn{7}{l}{\emph{Confounding density}} \\
$k_U$ & \textbf{0.0} & 24.000 $\pm$ 2.753 & -- & \textbf{0.050 $\pm$ 0.224} & 0.200 $\pm$ 0.410 & 16.100 $\pm$ 4.844 \\
$k_U$ & \textbf{0.5} & 21.150 $\pm$ 4.416 & -- & \textbf{7.300 $\pm$ 1.261} & 7.750 $\pm$ 1.372 & 16.600 $\pm$ 3.733 \\
$k_U$ & \textbf{1.0} & \textbf{19.900 $\pm$ 4.128} & -- & 21.150 $\pm$ 4.545 & 21.050 $\pm$ 4.186 & 21.000 $\pm$ 4.104 \\
\addlinespace
\multicolumn{7}{l}{\emph{Noise distribution}} \\
Noise & \textbf{gauss} & 21.150 $\pm$ 4.416 & -- & \textbf{7.300 $\pm$ 1.261} & 7.750 $\pm$ 1.372 & 16.600 $\pm$ 3.733 \\
Noise & \textbf{exp} & 19.150 $\pm$ 3.787 & -- & \textbf{3.300 $\pm$ 2.179} & 7.750 $\pm$ 1.164 & 13.050 $\pm$ 4.513 \\
Noise & \textbf{gumbel} & 20.950 $\pm$ 3.605 & -- & 25.350 $\pm$ 5.143 & \textbf{9.250 $\pm$ 1.860} & 12.450 $\pm$ 2.911 \\
\bottomrule
\end{tabular}
\end{small}
\end{table*}

\begin{table*}[t]
\caption{UMN-Hard Interventions results: average F1 $\pm$ standard deviation over 20 random seeds. The varying parameter and the best method in each row are shown in bold.}
\label{tab:hard-intervention-f1}
\centering
\begin{small}
\begin{tabular}{llccccc}
\toprule
Setting & Value & LinADMG-UMNI & LiNGAM & BANG & DCD & RCD \\
\midrule
\multicolumn{7}{l}{\textit{Directed edges}} \\
\multicolumn{7}{l}{\emph{Dimension}} \\
$n$ & \textbf{5} & \textbf{0.994 $\pm$ 0.025} & 0.960 $\pm$ 0.082 & 0.000 $\pm$ 0.000 & 0.209 $\pm$ 0.126 & 0.000 $\pm$ 0.000 \\
$n$ & \textbf{10} & 0.601 $\pm$ 0.366 & \textbf{0.821 $\pm$ 0.069} & 0.000 $\pm$ 0.000 & 0.058 $\pm$ 0.071 & 0.000 $\pm$ 0.000 \\
$n$ & \textbf{20} & -- & \textbf{0.705 $\pm$ 0.083} & 0.002 $\pm$ 0.011 & 0.019 $\pm$ 0.028 & 0.000 $\pm$ 0.000 \\
\addlinespace
\multicolumn{7}{l}{\emph{Number of domains}} \\
$T/n$ & \textbf{1} & 0.601 $\pm$ 0.366 & \textbf{0.821 $\pm$ 0.069} & 0.000 $\pm$ 0.000 & 0.058 $\pm$ 0.071 & 0.000 $\pm$ 0.000 \\
$T/n$ & \textbf{2} & 0.377 $\pm$ 0.161 & \textbf{0.801 $\pm$ 0.073} & 0.000 $\pm$ 0.000 & 0.069 $\pm$ 0.068 & 0.000 $\pm$ 0.000 \\
$T/n$ & \textbf{3} & -- & \textbf{0.814 $\pm$ 0.072} & 0.000 $\pm$ 0.000 & 0.071 $\pm$ 0.070 & 0.000 $\pm$ 0.000 \\
\addlinespace
\multicolumn{7}{l}{\emph{Sample size}} \\
$N$ & \textbf{1000} & -- & \textbf{0.724 $\pm$ 0.138} & 0.005 $\pm$ 0.021 & 0.084 $\pm$ 0.072 & 0.000 $\pm$ 0.000 \\
$N$ & \textbf{5000} & -- & \textbf{0.823 $\pm$ 0.069} & 0.000 $\pm$ 0.000 & 0.066 $\pm$ 0.073 & 0.000 $\pm$ 0.000 \\
$N$ & \textbf{50000} & 0.601 $\pm$ 0.366 & \textbf{0.821 $\pm$ 0.069} & 0.000 $\pm$ 0.000 & 0.058 $\pm$ 0.071 & 0.000 $\pm$ 0.000 \\
\addlinespace
\multicolumn{7}{l}{\emph{Graph density}} \\
$k_X$ & \textbf{1.0} & 0.576 $\pm$ 0.403 & \textbf{0.656 $\pm$ 0.116} & 0.009 $\pm$ 0.041 & 0.049 $\pm$ 0.094 & 0.000 $\pm$ 0.000 \\
$k_X$ & \textbf{2.0} & 0.601 $\pm$ 0.366 & \textbf{0.821 $\pm$ 0.069} & 0.000 $\pm$ 0.000 & 0.058 $\pm$ 0.071 & 0.000 $\pm$ 0.000 \\
$k_X$ & \textbf{3.0} & 0.720 $\pm$ 0.286 & \textbf{0.920 $\pm$ 0.033} & 0.000 $\pm$ 0.000 & 0.076 $\pm$ 0.057 & 0.000 $\pm$ 0.000 \\
\addlinespace
\multicolumn{7}{l}{\emph{Confounding density}} \\
$k_U$ & \textbf{0.0} & 0.574 $\pm$ 0.338 & \textbf{1.000 $\pm$ 0.000} & 0.000 $\pm$ 0.000 & 0.068 $\pm$ 0.050 & 0.000 $\pm$ 0.000 \\
$k_U$ & \textbf{0.5} & 0.601 $\pm$ 0.366 & \textbf{0.821 $\pm$ 0.069} & 0.000 $\pm$ 0.000 & 0.058 $\pm$ 0.071 & 0.000 $\pm$ 0.000 \\
$k_U$ & \textbf{1.0} & 0.519 $\pm$ 0.367 & \textbf{0.677 $\pm$ 0.086} & 0.000 $\pm$ 0.000 & 0.085 $\pm$ 0.066 & 0.000 $\pm$ 0.000 \\
\addlinespace
\multicolumn{7}{l}{\emph{Noise distribution}} \\
Noise & \textbf{gauss} & 0.601 $\pm$ 0.366 & \textbf{0.821 $\pm$ 0.069} & 0.000 $\pm$ 0.000 & 0.058 $\pm$ 0.071 & 0.000 $\pm$ 0.000 \\
Noise & \textbf{exp} & 0.618 $\pm$ 0.280 & 0.808 $\pm$ 0.068 & \textbf{0.842 $\pm$ 0.077} & 0.085 $\pm$ 0.078 & 0.753 $\pm$ 0.118 \\
Noise & \textbf{gumbel} & 0.723 $\pm$ 0.328 & 0.805 $\pm$ 0.068 & \textbf{0.826 $\pm$ 0.073} & 0.066 $\pm$ 0.074 & 0.442 $\pm$ 0.177 \\
\midrule
\multicolumn{7}{l}{\textit{Bidirected edges}} \\
\multicolumn{7}{l}{\emph{Dimension}} \\
$n$ & \textbf{5} & \textbf{0.969 $\pm$ 0.104} & --& 0.000 $\pm$ 0.000 & 0.238 $\pm$ 0.286 & 0.332 $\pm$ 0.116 \\
$n$ & \textbf{10} & \textbf{0.578 $\pm$ 0.259} & -- & 0.000 $\pm$ 0.000 & 0.192 $\pm$ 0.160 & 0.311 $\pm$ 0.079 \\
$n$ & \textbf{20} & --& -- & 0.000 $\pm$ 0.000 & 0.158 $\pm$ 0.121 & \textbf{0.263 $\pm$ 0.055} \\
\addlinespace
\multicolumn{7}{l}{\emph{Number of domains}} \\
$T/n$ & \textbf{1} & \textbf{0.578 $\pm$ 0.259} & -- & 0.000 $\pm$ 0.000 & 0.192 $\pm$ 0.160 & 0.311 $\pm$ 0.079 \\
$T/n$ & \textbf{2} & \textbf{0.315 $\pm$ 0.110} & -- & 0.000 $\pm$ 0.000 & 0.182 $\pm$ 0.120 & 0.308 $\pm$ 0.091 \\
$T/n$ & \textbf{3} & --& -- & 0.000 $\pm$ 0.000 & 0.166 $\pm$ 0.127 & \textbf{0.312 $\pm$ 0.076} \\
\addlinespace
\multicolumn{7}{l}{\emph{Sample size}} \\
$N$ & \textbf{1000} & -- & -- & 0.000 $\pm$ 0.000 & 0.194 $\pm$ 0.134 & \textbf{0.297 $\pm$ 0.092} \\
$N$ & \textbf{5000} & -- & -- & 0.000 $\pm$ 0.000 & 0.182 $\pm$ 0.140 & \textbf{0.318 $\pm$ 0.088} \\
$N$ & \textbf{50000} & \textbf{0.578 $\pm$ 0.259} & -- & 0.000 $\pm$ 0.000 & 0.192 $\pm$ 0.160 & 0.311 $\pm$ 0.079 \\
\addlinespace
\multicolumn{7}{l}{\emph{Graph density}} \\
$k_X$ & \textbf{1.0} & \textbf{0.672 $\pm$ 0.252} & -- & 0.000 $\pm$ 0.000 & 0.086 $\pm$ 0.113 & 0.400 $\pm$ 0.089 \\
$k_X$ & \textbf{2.0} & \textbf{0.578 $\pm$ 0.259} & -- & 0.000 $\pm$ 0.000 & 0.192 $\pm$ 0.160 & 0.311 $\pm$ 0.079 \\
$k_X$ & \textbf{3.0} & \textbf{0.556 $\pm$ 0.253} & -- & 0.012 $\pm$ 0.056 & 0.173 $\pm$ 0.152 & 0.257 $\pm$ 0.073 \\
\addlinespace
\multicolumn{7}{l}{\emph{Confounding density}} \\
$k_U$ & \textbf{0.0} & \textbf{0.000 $\pm$ 0.000} & -- & \textbf{0.000 $\pm$ 0.000} & \textbf{0.000 $\pm$ 0.000} & \textbf{0.000 $\pm$ 0.000} \\
$k_U$ & \textbf{0.5} & \textbf{0.578 $\pm$ 0.259} & --& 0.000 $\pm$ 0.000 & 0.192 $\pm$ 0.160 & 0.311 $\pm$ 0.079 \\
$k_U$ & \textbf{1.0} & \textbf{0.678 $\pm$ 0.165} & --& 0.000 $\pm$ 0.000 & 0.171 $\pm$ 0.104 & 0.587 $\pm$ 0.078 \\
\addlinespace
\multicolumn{7}{l}{\emph{Noise distribution}} \\
Noise & \textbf{gauss} & \textbf{0.578 $\pm$ 0.259} & -- & 0.000 $\pm$ 0.000 & 0.192 $\pm$ 0.160 & 0.311 $\pm$ 0.079 \\
Noise & \textbf{exp} & 0.470 $\pm$ 0.179 & --& \textbf{0.779 $\pm$ 0.189} & 0.147 $\pm$ 0.112 & 0.205 $\pm$ 0.171 \\
Noise & \textbf{gumbel} & 0.670 $\pm$ 0.235 & -- & \textbf{0.763 $\pm$ 0.211} & 0.276 $\pm$ 0.125 & 0.305 $\pm$ 0.108 \\
\bottomrule
\end{tabular}
\end{small}
\end{table*}

\begin{table*}[t]
\caption{UMN-Hard Interventions results: average SHD $\pm$ standard deviation over 20 random seeds. The varying parameter and the best method in each row are shown in bold.}
\label{tab:hard-intervention-shd}
\centering
\begin{small}
\begin{tabular}{llccccc}
\toprule
Setting & Value & LinADMG-UMNI & LiNGAM & BANG & DCD & RCD \\
\midrule
\multicolumn{7}{l}{\textit{Directed edges}} \\
\multicolumn{7}{l}{\emph{Dimension}} \\
$n$ & \textbf{5} & \textbf{0.100 $\pm$ 0.447} & 0.800 $\pm$ 1.642 & 10.000 $\pm$ 0.000 & 9.450 $\pm$ 1.146 & 10.000 $\pm$ 0.000 \\
$n$ & \textbf{10} & 12.200 $\pm$ 9.350 & \textbf{8.850 $\pm$ 4.069} & 20.100 $\pm$ 0.308 & 20.650 $\pm$ 1.663 & 20.050 $\pm$ 0.224 \\
$n$ & \textbf{20} & -- & \textbf{34.250 $\pm$ 14.407} & 40.000 $\pm$ 0.324 & 41.050 $\pm$ 1.731 & 40.000 $\pm$ 0.000 \\
\addlinespace
\multicolumn{7}{l}{\emph{Number of domains}} \\
$T/n$ & \textbf{1} & 12.200 $\pm$ 9.350 & \textbf{8.850 $\pm$ 4.069} & 20.100 $\pm$ 0.308 & 20.650 $\pm$ 1.663 & 20.050 $\pm$ 0.224 \\
$T/n$ & \textbf{2} & 23.556 $\pm$ 6.070 & \textbf{9.900 $\pm$ 4.436} & 20.100 $\pm$ 0.308 & 20.550 $\pm$ 1.356 & 20.000 $\pm$ 0.000 \\
$T/n$ & \textbf{3} & -- & \textbf{9.300 $\pm$ 4.566} & 20.050 $\pm$ 0.224 & 20.400 $\pm$ 1.273 & 20.000 $\pm$ 0.000 \\
\addlinespace
\multicolumn{7}{l}{\emph{Sample size}} \\
$N$ & \textbf{1000} & -- & \textbf{14.150 $\pm$ 8.487} & 20.000 $\pm$ 0.324 & 20.350 $\pm$ 1.755 & 20.000 $\pm$ 0.000 \\
$N$ & \textbf{5000} & -- & \textbf{8.650 $\pm$ 4.030} & 20.000 $\pm$ 0.000 & 20.450 $\pm$ 1.276 & 20.000 $\pm$ 0.000 \\
$N$ & \textbf{50000} & 12.200 $\pm$ 9.350 & \textbf{8.850 $\pm$ 4.069} & 20.100 $\pm$ 0.308 & 20.650 $\pm$ 1.663 & 20.050 $\pm$ 0.224 \\
\addlinespace
\multicolumn{7}{l}{\emph{Graph density}} \\
$k_X$ & \textbf{1.0} & \textbf{6.800 $\pm$ 5.307} & 11.000 $\pm$ 5.400 & 9.950 $\pm$ 0.224 & 10.950 $\pm$ 1.050 & 10.000 $\pm$ 0.000 \\
$k_X$ & \textbf{2.0} & 12.200 $\pm$ 9.350 & \textbf{8.850 $\pm$ 4.069} & 20.100 $\pm$ 0.308 & 20.650 $\pm$ 1.663 & 20.050 $\pm$ 0.224 \\
$k_X$ & \textbf{3.0} & 12.850 $\pm$ 10.752 & \textbf{5.250 $\pm$ 2.337} & 30.050 $\pm$ 0.224 & 29.850 $\pm$ 1.424 & 30.000 $\pm$ 0.000 \\
\addlinespace
\multicolumn{7}{l}{\emph{Confounding density}} \\
$k_U$ & \textbf{0.0} & 12.150 $\pm$ 7.748 & \textbf{0.000 $\pm$ 0.000} & 20.050 $\pm$ 0.224 & 20.500 $\pm$ 1.357 & 20.000 $\pm$ 0.000 \\
$k_U$ & \textbf{0.5} & 12.200 $\pm$ 9.350 & \textbf{8.850 $\pm$ 4.069} & 20.100 $\pm$ 0.308 & 20.650 $\pm$ 1.663 & 20.050 $\pm$ 0.224 \\
$k_U$ & \textbf{1.0} & \textbf{14.200 $\pm$ 9.328} & 18.300 $\pm$ 5.904 & 20.050 $\pm$ 0.224 & 20.250 $\pm$ 1.410 & 20.000 $\pm$ 0.000 \\
\addlinespace
\multicolumn{7}{l}{\emph{Noise distribution}} \\
Noise & \textbf{gauss} & 12.200 $\pm$ 9.350 & \textbf{8.850 $\pm$ 4.069} & 20.100 $\pm$ 0.308 & 20.650 $\pm$ 1.663 & 20.050 $\pm$ 0.224 \\
Noise & \textbf{exp} & 12.950 $\pm$ 7.015 & 9.250 $\pm$ 3.552 & \textbf{6.400 $\pm$ 3.152} & 20.100 $\pm$ 1.553 & 8.750 $\pm$ 3.640 \\
Noise & \textbf{gumbel} & 8.550 $\pm$ 8.319 & 9.450 $\pm$ 3.531 & \textbf{7.100 $\pm$ 3.042} & 20.500 $\pm$ 1.357 & 15.050 $\pm$ 3.561 \\
\midrule
\multicolumn{7}{l}{\textit{Bidirected edges}} \\
\multicolumn{7}{l}{\emph{Dimension}} \\
$n$ & \textbf{5} & \textbf{0.200 $\pm$ 0.696} & -- & 2.250 $\pm$ 0.444 & 2.750 $\pm$ 1.446 & 6.750 $\pm$ 1.293 \\
$n$ & \textbf{10} & 10.350 $\pm$ 7.043 & -- & \textbf{7.250 $\pm$ 1.293} & 8.500 $\pm$ 1.960 & 23.200 $\pm$ 3.238 \\
$n$ & \textbf{20} & -- & -- & \textbf{16.500 $\pm$ 3.940} & 19.000 $\pm$ 4.531 & 60.850 $\pm$ 8.273 \\
\addlinespace
\multicolumn{7}{l}{\emph{Number of domains}} \\
$T/n$ & \textbf{1} & 10.350 $\pm$ 7.043 & -- & \textbf{7.250 $\pm$ 1.293} & 8.500 $\pm$ 1.960 & 23.200 $\pm$ 3.238 \\
$T/n$ & \textbf{2} & 21.167 $\pm$ 5.021 & -- & \textbf{7.250 $\pm$ 1.293} & 8.700 $\pm$ 1.720 & 23.150 $\pm$ 4.030 \\
$T/n$ & \textbf{3} & -- & -- & \textbf{7.250 $\pm$ 1.293} & 8.750 $\pm$ 1.803 & 22.750 $\pm$ 2.936 \\
\addlinespace
\multicolumn{7}{l}{\emph{Sample size}} \\
$N$ & \textbf{1000} & -- & -- & \textbf{7.250 $\pm$ 1.293} & 8.400 $\pm$ 1.729 & 23.650 $\pm$ 4.056 \\
$N$ & \textbf{5000} & -- & -- & \textbf{7.250 $\pm$ 1.293} & 8.550 $\pm$ 2.064 & 23.100 $\pm$ 3.597 \\
$N$ & \textbf{50000} & 10.350 $\pm$ 7.043 & -- & \textbf{7.250 $\pm$ 1.293} & 8.500 $\pm$ 1.960 & 23.200 $\pm$ 3.238 \\
\addlinespace
\multicolumn{7}{l}{\emph{Graph density}} \\
$k_X$ & \textbf{1.0} & \textbf{6.950 $\pm$ 5.916} & -- & 7.250 $\pm$ 1.293 & 8.500 $\pm$ 1.762 & 14.300 $\pm$ 4.105 \\
$k_X$ & \textbf{2.0} & 10.350 $\pm$ 7.043 & -- & \textbf{7.250 $\pm$ 1.293} & 8.500 $\pm$ 1.960 & 23.200 $\pm$ 3.238 \\
$k_X$ & \textbf{3.0} & 10.400 $\pm$ 6.402 & -- & \textbf{7.200 $\pm$ 1.322} & 9.900 $\pm$ 1.683 & 28.900 $\pm$ 3.076 \\
\addlinespace
\multicolumn{7}{l}{\emph{Confounding density}} \\
$k_U$ & \textbf{0.0} & 10.800 $\pm$ 6.910 & -- & \textbf{0.000 $\pm$ 0.000} & 2.800 $\pm$ 1.704 & 24.650 $\pm$ 3.631 \\
$k_U$ & \textbf{0.5} & 10.350 $\pm$ 7.043 & -- & \textbf{7.250 $\pm$ 1.293} & 8.500 $\pm$ 1.960 & 23.200 $\pm$ 3.238 \\
$k_U$ & \textbf{1.0} & \textbf{13.750 $\pm$ 7.440} & -- & 21.200 $\pm$ 4.514 & 21.050 $\pm$ 3.953 & 20.000 $\pm$ 2.974 \\
\addlinespace
\multicolumn{7}{l}{\emph{Noise distribution}} \\
Noise & \textbf{gauss} & 10.350 $\pm$ 7.043 & -- & \textbf{7.250 $\pm$ 1.293} & 8.500 $\pm$ 1.960 & 23.200 $\pm$ 3.238 \\
Noise & \textbf{exp} & 13.000 $\pm$ 4.600 & -- & \textbf{2.800 $\pm$ 2.353} & 9.050 $\pm$ 1.905 & 9.550 $\pm$ 2.819 \\
Noise & \textbf{gumbel} & 7.650 $\pm$ 6.831 & -- & \textbf{3.150 $\pm$ 2.720} & 11.400 $\pm$ 3.560 & 15.500 $\pm$ 3.269 \\
\bottomrule
\end{tabular}
\end{small}
\end{table*}

\FloatBarrier

\subsection{VaDE vs. diagGMM likelihood loss}
\label{app: vade}
In this section, we compare \texttt{LinADMG-diagGMM}  and VaDE, as in \citet{kivva2022identifiability}, under the DiagGMMs data-generating process. Starting from the anchor configuration $n=10, N=5000, k_X=2$, it varies the number of variables $n\in\{5,10,20\}$, sample size $N\in\{1000,5000,50000\}$, and graph density $k_X\in\{1,2,3\}$. We evaluate each configuration over 20 random seeds and report performance separately for directed and bidirected edges using mean F1 and SHD, with standard deviations. \texttt{LinADMG-diagGMM}  clearly outperforms VaDE in estimating directed edges across every configuration, achieving significantly higher F1 and lower SHD. The performance of both methods decreases as dimension increases, whereas increasing sample size has relatively little effect. Higher graph density improves directed F1 but generally increases SHD. For bidirected edges, both methods perform well, and their F1 scores are much closer: VaDE is slightly better in several low- or moderate-complexity settings, while \texttt{LinADMG-diagGMM}  performs better for $n=20$, $N=1000$, $N=50000$, and $k_X=3$. 
\begin{table}[h]
\caption{Diag-GMMs results: average F1 and SHD $\pm$ standard deviation over 20 random seeds. The varying parameter and the best method for each metric in each row are shown in bold.}
\label{tab:auxillary-f1-shd}
\centering
\begin{small}
\begin{tabular}{llcccc}
\toprule
Setting & Value & \multicolumn{2}{c}{F1} & \multicolumn{2}{c}{SHD} \\
\cmidrule(lr){3-4}\cmidrule(lr){5-6}
 &  & LinADMG-diagGMM & VaDE-linear & LinADMG-diagGMM & VaDE-linear \\
\midrule
\multicolumn{6}{l}{\textit{Directed edges}} \\
\multicolumn{6}{l}{\emph{Dimension}} \\
$n$ & \textbf{5} & \textbf{0.873 $\pm$ 0.157} & 0.421 $\pm$ 0.175 & \textbf{2.200 $\pm$ 2.546} & 8.700 $\pm$ 2.342 \\
$n$ & \textbf{10} & \textbf{0.817 $\pm$ 0.100} & 0.254 $\pm$ 0.112 & \textbf{7.650 $\pm$ 3.897} & 26.050 $\pm$ 3.900 \\
$n$ & \textbf{20} & \textbf{0.743 $\pm$ 0.081} & 0.144 $\pm$ 0.043 & \textbf{24.800 $\pm$ 9.339} & 68.000 $\pm$ 4.702 \\
\addlinespace
\multicolumn{6}{l}{\emph{Sample size}} \\
$N$ & \textbf{1000} & \textbf{0.806 $\pm$ 0.092} & 0.218 $\pm$ 0.084 & \textbf{7.950 $\pm$ 3.634} & 26.950 $\pm$ 2.911 \\
$N$ & \textbf{5000} & \textbf{0.817 $\pm$ 0.100} & 0.254 $\pm$ 0.112 & \textbf{7.650 $\pm$ 3.897} & 26.050 $\pm$ 3.900 \\
$N$ & \textbf{50000} & \textbf{0.821 $\pm$ 0.076} & 0.243 $\pm$ 0.108 & \textbf{7.600 $\pm$ 3.331} & 26.600 $\pm$ 3.691 \\
\addlinespace
\multicolumn{6}{l}{\emph{Graph density}} \\
$k_X$ & \textbf{1.0} & \textbf{0.691 $\pm$ 0.107} & 0.178 $\pm$ 0.092 & \textbf{8.550 $\pm$ 3.832} & 20.150 $\pm$ 2.925 \\
$k_X$ & \textbf{2.0} & \textbf{0.817 $\pm$ 0.100} & 0.254 $\pm$ 0.112 & \textbf{7.650 $\pm$ 3.897} & 26.050 $\pm$ 3.900 \\
$k_X$ & \textbf{3.0} & \textbf{0.833 $\pm$ 0.109} & 0.269 $\pm$ 0.107 & \textbf{9.550 $\pm$ 5.698} & 32.650 $\pm$ 4.534 \\
\midrule
\multicolumn{6}{l}{\textit{Bidirected edges}} \\
\multicolumn{6}{l}{\emph{Dimension}} \\
$n$ & \textbf{5} & 0.956 $\pm$ 0.077 & \textbf{0.959 $\pm$ 0.052} & \textbf{0.750 $\pm$ 1.251} & \textbf{0.750 $\pm$ 0.910} \\
$n$ & \textbf{10} & 0.930 $\pm$ 0.198 & \textbf{0.939 $\pm$ 0.050} & \textbf{4.200 $\pm$ 9.876} & 5.000 $\pm$ 3.853 \\
$n$ & \textbf{20} & \textbf{0.951 $\pm$ 0.082} & 0.882 $\pm$ 0.061 & \textbf{16.050 $\pm$ 23.712} & 39.200 $\pm$ 17.981 \\
\addlinespace
\multicolumn{6}{l}{\emph{Sample size}} \\
$N$ & \textbf{1000} & \textbf{0.971 $\pm$ 0.049} & 0.940 $\pm$ 0.049 & \textbf{2.350 $\pm$ 3.951} & 4.900 $\pm$ 3.698 \\
$N$ & \textbf{5000} & 0.930 $\pm$ 0.198 & \textbf{0.939 $\pm$ 0.050} & \textbf{4.200 $\pm$ 9.876} & 5.000 $\pm$ 3.853 \\
$N$ & \textbf{50000} & \textbf{0.973 $\pm$ 0.064} & 0.937 $\pm$ 0.053 & \textbf{2.100 $\pm$ 4.767} & 5.150 $\pm$ 4.017 \\
\addlinespace
\multicolumn{6}{l}{\emph{Graph density}} \\
$k_X$ & \textbf{1.0} & 0.920 $\pm$ 0.151 & \textbf{0.934 $\pm$ 0.038} & \textbf{5.500 $\pm$ 8.593} & \textbf{5.500 $\pm$ 3.017} \\
$k_X$ & \textbf{2.0} & 0.930 $\pm$ 0.198 & \textbf{0.939 $\pm$ 0.050} & \textbf{4.200 $\pm$ 9.876} & 5.000 $\pm$ 3.853 \\
$k_X$ & \textbf{3.0} & \textbf{0.921 $\pm$ 0.137} & 0.918 $\pm$ 0.045 & \textbf{5.650 $\pm$ 8.061} & 6.650 $\pm$ 3.528 \\
\bottomrule
\end{tabular}
\end{small}
\end{table}

\subsection{MuDo-NLL vs. MuDo-Cov}
\label{app: mudocov}
In this section, we compare the training objective in MuDo-Cov used in \citet{xu2026identifying} with MuDo-NLL, which we use in this work. MuDoNLL minimizes the Gaussian negative log-likelihood, while MuDo-Cov minimizes the covariance matching error. We hold all other optimization and evaluation settings fixed, and evaluate every condition using the same 20 random seeds. Starting from the anchor configuration $n=10, N=5000,T=n,k_X=2,k_U=0.5$, and Gaussian noise, we vary one factor at a time. We evaluate performance separately for directed and bidirected edges using F1 (higher is better) and SHD (lower is better). The results show that MuDo-NLL outperforms MuDo-Cov in nearly every matched condition, with the largest advantage occurring in bidirected-edge recovery. Both objectives benefit from additional domains and larger sample sizes, but MuDo-NLL shows a clear advantage.
\begin{table}[h]
\caption{Multi-domain Scalings results: average F1 and SHD $\pm$ standard deviation over 20 random seeds. The varying parameter and the best method for each metric in each row are shown in bold.}
\label{tab:scaling-f1-shd}
\centering
\begin{small}
\begin{tabular}{llcccc}
\toprule
Setting & Value & \multicolumn{2}{c}{F1} & \multicolumn{2}{c}{SHD} \\
\cmidrule(lr){3-4}\cmidrule(lr){5-6}
 &  & MuDo-NLL & MuDo-Cov & MuDo-NLL & MuDo-Cov \\
\midrule
\multicolumn{6}{l}{\textit{Directed edges}} \\
\multicolumn{6}{l}{\emph{Dimension}} \\
$n$ & \textbf{5} & \textbf{0.927 $\pm$ 0.125} & 0.736 $\pm$ 0.182 & \textbf{1.300 $\pm$ 2.227} & 4.650 $\pm$ 2.925 \\
$n$ & \textbf{10} & \textbf{0.883 $\pm$ 0.196} & 0.821 $\pm$ 0.076 & \textbf{4.500 $\pm$ 5.708} & 7.900 $\pm$ 3.227 \\
$n$ & \textbf{20} & \textbf{0.854 $\pm$ 0.280} & 0.676 $\pm$ 0.261 & \textbf{8.750 $\pm$ 12.586} & 27.400 $\pm$ 15.091 \\
\addlinespace
\multicolumn{6}{l}{\emph{Number of domains}} \\
$T/n$ & \textbf{1} & \textbf{0.883 $\pm$ 0.196} & 0.821 $\pm$ 0.076 & \textbf{4.500 $\pm$ 5.708} & 7.900 $\pm$ 3.227 \\
$T/n$ & \textbf{2} & \textbf{0.911 $\pm$ 0.171} & 0.805 $\pm$ 0.219 & \textbf{3.550 $\pm$ 6.419} & 7.350 $\pm$ 5.373 \\
$T/n$ & \textbf{3} & \textbf{0.976 $\pm$ 0.087} & 0.880 $\pm$ 0.130 & \textbf{1.100 $\pm$ 4.012} & 5.350 $\pm$ 5.842 \\
\addlinespace
\multicolumn{6}{l}{\emph{Sample size}} \\
$N$ & \textbf{1000} & \textbf{0.843 $\pm$ 0.101} & 0.738 $\pm$ 0.126 & \textbf{7.200 $\pm$ 4.360} & 11.700 $\pm$ 5.151 \\
$N$ & \textbf{5000} & \textbf{0.883 $\pm$ 0.196} & 0.821 $\pm$ 0.076 & \textbf{4.500 $\pm$ 5.708} & 7.900 $\pm$ 3.227 \\
$N$ & \textbf{50000} & \textbf{0.967 $\pm$ 0.108} & 0.837 $\pm$ 0.064 & \textbf{1.300 $\pm$ 4.207} & 7.450 $\pm$ 3.086 \\
\addlinespace
\multicolumn{6}{l}{\emph{Graph density}} \\
$k_X$ & \textbf{1.0} & \textbf{0.773 $\pm$ 0.111} & 0.618 $\pm$ 0.089 & \textbf{6.050 $\pm$ 3.561} & 11.400 $\pm$ 3.719 \\
$k_X$ & \textbf{2.0} & \textbf{0.883 $\pm$ 0.196} & 0.821 $\pm$ 0.076 & \textbf{4.500 $\pm$ 5.708} & 7.900 $\pm$ 3.227 \\
$k_X$ & \textbf{3.0} & \textbf{0.818 $\pm$ 0.268} & 0.761 $\pm$ 0.204 & \textbf{9.000 $\pm$ 12.490} & 13.500 $\pm$ 11.062 \\
\addlinespace
\multicolumn{6}{l}{\emph{Confounding density}} \\
$k_U$ & \textbf{0.5} & \textbf{0.883 $\pm$ 0.196} & 0.821 $\pm$ 0.076 & \textbf{4.500 $\pm$ 5.708} & 7.900 $\pm$ 3.227 \\
$k_U$ & \textbf{1.0} & \textbf{0.803 $\pm$ 0.277} & 0.793 $\pm$ 0.166 & \textbf{7.350 $\pm$ 9.051} & 7.950 $\pm$ 4.161 \\
\addlinespace
\multicolumn{6}{l}{\emph{Noise distribution}} \\
Noise & \textbf{gauss} & \textbf{0.883 $\pm$ 0.196} & 0.821 $\pm$ 0.076 & \textbf{4.500 $\pm$ 5.708} & 7.900 $\pm$ 3.227 \\
Noise & \textbf{exp} & \textbf{0.850 $\pm$ 0.176} & 0.816 $\pm$ 0.070 & \textbf{6.050 $\pm$ 5.031} & 8.150 $\pm$ 3.133 \\
Noise & \textbf{gumbel} & 0.832 $\pm$ 0.218 & \textbf{0.837 $\pm$ 0.065} & \textbf{6.650 $\pm$ 6.556} & 7.600 $\pm$ 3.283 \\
\midrule
\multicolumn{6}{l}{\textit{Bidirected edges}} \\
\multicolumn{6}{l}{\emph{Dimension}} \\
$n$ & \textbf{5} & \textbf{0.576 $\pm$ 0.142} & 0.419 $\pm$ 0.088 & \textbf{3.650 $\pm$ 1.496} & 6.700 $\pm$ 1.174 \\
$n$ & \textbf{10} & \textbf{0.783 $\pm$ 0.142} & 0.444 $\pm$ 0.107 & \textbf{4.500 $\pm$ 5.615} & 17.900 $\pm$ 5.418 \\
$n$ & \textbf{20} & \textbf{0.857 $\pm$ 0.156} & 0.322 $\pm$ 0.099 & \textbf{5.550 $\pm$ 7.884} & 70.050 $\pm$ 46.479 \\
\addlinespace
\multicolumn{6}{l}{\emph{Number of domains}} \\
$T/n$ & \textbf{1} & \textbf{0.783 $\pm$ 0.142} & 0.444 $\pm$ 0.107 & \textbf{4.500 $\pm$ 5.615} & 17.900 $\pm$ 5.418 \\
$T/n$ & \textbf{2} & \textbf{0.864 $\pm$ 0.225} & 0.556 $\pm$ 0.208 & \textbf{3.000 $\pm$ 6.224} & 12.650 $\pm$ 8.940 \\
$T/n$ & \textbf{3} & \textbf{0.933 $\pm$ 0.156} & 0.707 $\pm$ 0.184 & \textbf{1.600 $\pm$ 4.893} & 6.550 $\pm$ 5.463 \\
\addlinespace
\multicolumn{6}{l}{\emph{Sample size}} \\
$N$ & \textbf{1000} & \textbf{0.465 $\pm$ 0.092} & 0.305 $\pm$ 0.057 & \textbf{15.250 $\pm$ 4.241} & 29.150 $\pm$ 3.360 \\
$N$ & \textbf{5000} & \textbf{0.783 $\pm$ 0.142} & 0.444 $\pm$ 0.107 & \textbf{4.500 $\pm$ 5.615} & 17.900 $\pm$ 5.418 \\
$N$ & \textbf{50000} & \textbf{0.910 $\pm$ 0.185} & 0.496 $\pm$ 0.143 & \textbf{2.300 $\pm$ 7.035} & 14.250 $\pm$ 5.609 \\
\addlinespace
\multicolumn{6}{l}{\emph{Graph density}} \\
$k_X$ & \textbf{1.0} & \textbf{0.777 $\pm$ 0.165} & 0.451 $\pm$ 0.107 & \textbf{4.450 $\pm$ 5.558} & 17.200 $\pm$ 5.672 \\
$k_X$ & \textbf{2.0} & \textbf{0.783 $\pm$ 0.142} & 0.444 $\pm$ 0.107 & \textbf{4.500 $\pm$ 5.615} & 17.900 $\pm$ 5.418 \\
$k_X$ & \textbf{3.0} & \textbf{0.658 $\pm$ 0.272} & 0.369 $\pm$ 0.098 & \textbf{9.750 $\pm$ 10.543} & 23.650 $\pm$ 7.184 \\
\addlinespace
\multicolumn{6}{l}{\emph{Confounding density}} \\
$k_U$ & \textbf{0.5} & \textbf{0.783 $\pm$ 0.142} & 0.444 $\pm$ 0.107 & \textbf{4.500 $\pm$ 5.615} & 17.900 $\pm$ 5.418 \\
$k_U$ & \textbf{1.0} & \textbf{0.823 $\pm$ 0.125} & 0.704 $\pm$ 0.116 & \textbf{8.250 $\pm$ 6.766} & 15.100 $\pm$ 5.803 \\
\addlinespace
\multicolumn{6}{l}{\emph{Noise distribution}} \\
Noise & \textbf{gauss} & \textbf{0.783 $\pm$ 0.142} & 0.444 $\pm$ 0.107 & \textbf{4.500 $\pm$ 5.615} & 17.900 $\pm$ 5.418 \\
Noise & \textbf{exp} & \textbf{0.764 $\pm$ 0.166} & 0.399 $\pm$ 0.082 & \textbf{4.700 $\pm$ 4.438} & 19.950 $\pm$ 4.395 \\
Noise & \textbf{gumbel} & \textbf{0.714 $\pm$ 0.206} & 0.476 $\pm$ 0.099 & \textbf{6.100 $\pm$ 6.095} & 15.350 $\pm$ 5.324 \\
\bottomrule
\end{tabular}
\end{small}
\end{table}

\FloatBarrier

\subsection{Comparing with FCI}
\label{app: fci}

We compare \texttt{LinADMG-MuDoNLL} , \texttt{LinADMG-diagGMM} , \texttt{LinADMG-UMNI} , \texttt{LinADMG-SP} , and FCI on four data-generating processes: DiagGMMs, UMN-Hard Interventions, UP-Observations, and MD-Scalings. In every experiment, we set $n=10, N=5000, T=1, k_X=2.0, k_U=0.5$ and Gaussian noise. Results are averaged over 20 random seeds. Because FCI returns a partial ancestral graph (PAG), the ADMG estimated by each LinADMG method, as well as the ground-truth ADMG, is converted to its corresponding PAG, using a maximal ancestral projection where necessary. All methods are then evaluated in the common PAG representation using skeleton F1, arrowhead F1, and tail F1. Each bar reports the mean over the 20 seeds, while the error bar represents one standard deviation.

\texttt{LinADMG-diagGMM}  provides the strongest overall performance, particularly for the DiagGMMs and UP-Observations DGPs, where it accurately recovers both the skeleton and the endpoint marks. \texttt{LinADMG-MuDoNLL} performs best in the MD-Scalings DGP. Under UMNI-Hard Interventions DGP, FCI is competitive in skeleton recovery, but the LinADMG methods generally recover arrowheads and tails more accurately. \texttt{LinADMG-SP}  often recovers a reasonable skeleton but is less successful in finding endpoint orientation, while \texttt{LinADMG-UMNI}  shows greater variability, especially for tail recovery. 
\begin{figure}[h]
    \centering
    \includegraphics[width=0.9\textwidth]{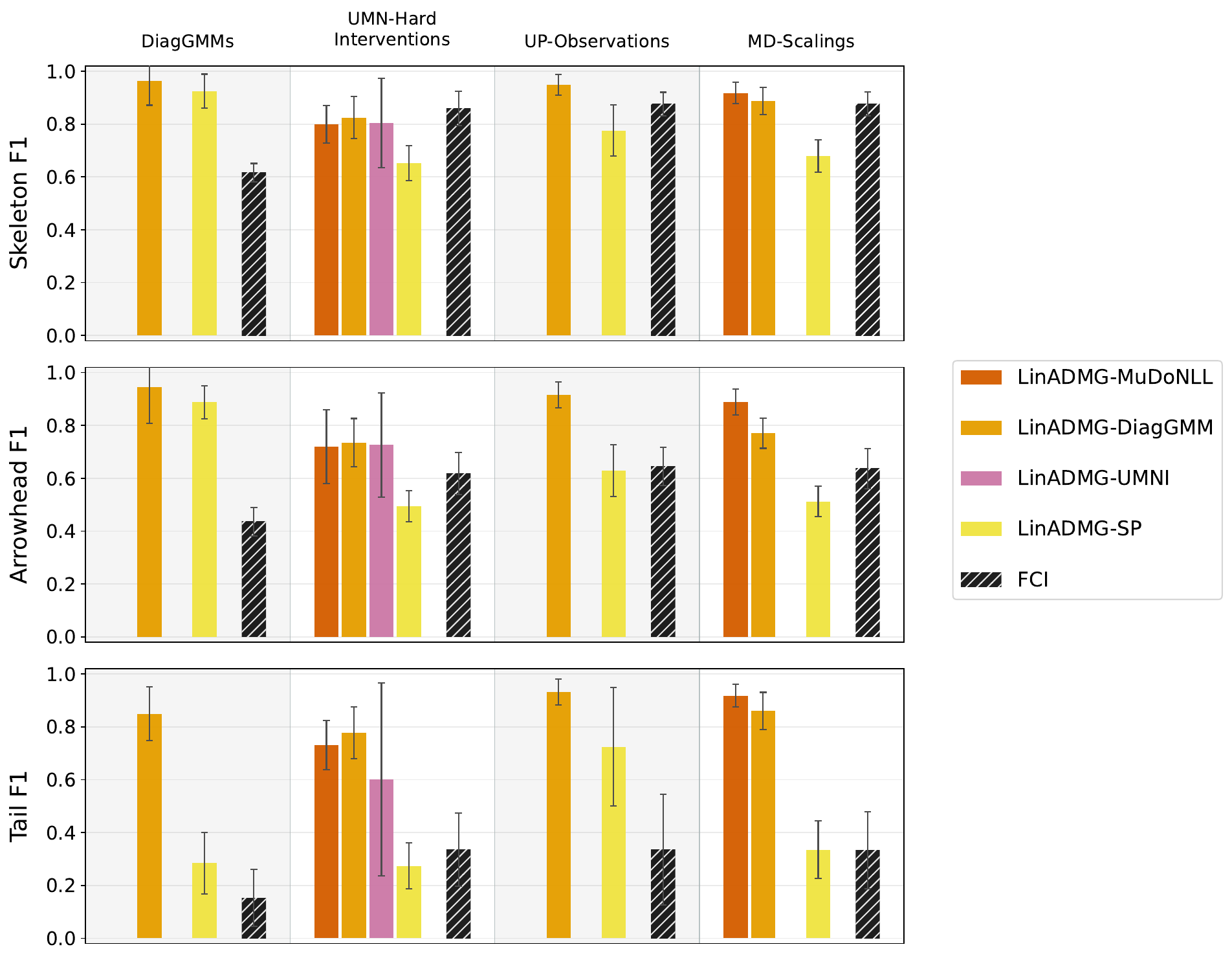}
    \caption{
       PAG recovery performance of \texttt{LinADMG-MuDoNLL} , \texttt{LinADMG-diagGMM} , \texttt{LinADMG-UMNI} , \texttt{LinADMG-SP} and FCI across four data-generating processes. The estimated and ground-truth ADMGs are converted to PAGs before evaluation, while FCI is evaluated using its native PAG output. The three panels report skeleton, arrowhead, and tail F1 scores. Bars show the mean and error bars indicate one standard deviation over 20 seeds. All experiments use $n=10, N=5000,T=n, k_X=2, k_U=0.5$.
    }
    \label{fig:FCI}
\end{figure}

\FloatBarrier

\subsection{Bow-free ADMG with exponential exogenous variables}
\label{app:bowfree}
\begin{figure}[t]
    \centering
    \includegraphics[width=1.0\textwidth]{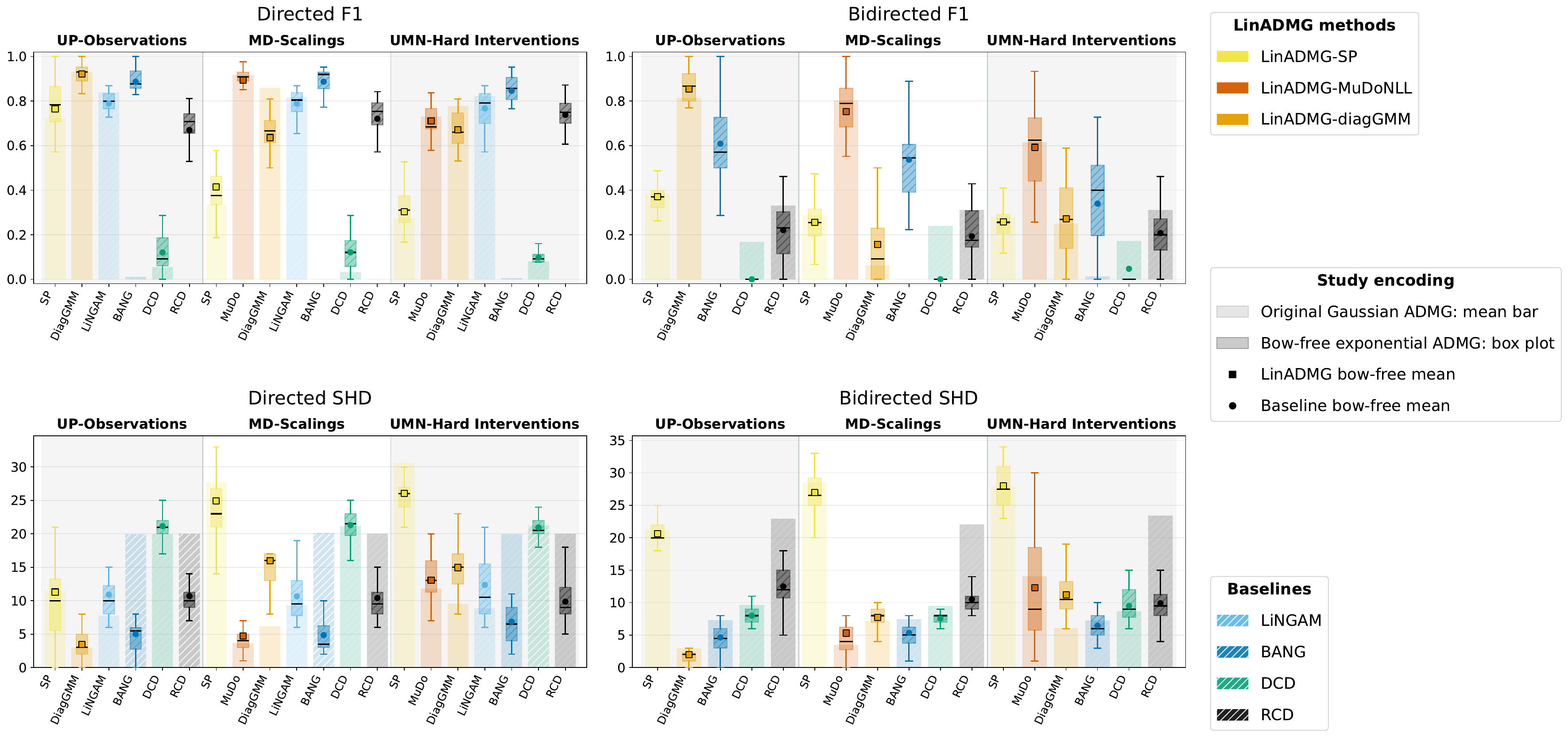}
    \caption{
        Comparison of ADMG estimation performance across the 20 method-DGP configurations shared by the original Gaussian-ADMG study and the bow-free exponential-noise study (20 seeds per configuration). Translucent bars show the mean for the original study as in Fig.\ref{fig:admg_f1_shd}, while box plots show the bow-free study. Filled markers denote bow-free means: squares for LinADMG- methods and circles for baselines. LiNGAM is omitted from bidirected panels because it does not estimate bidirected edges.
    }
    \label{fig:bowfree}
\end{figure}

LinADMG- methods are comparatively stable on all three data-generating processes, with \texttt{LinADMG-diagGMM}  remaining strongest in general. As expected, the most pronounced changes occur for BANG and RCD as they required for non-Gaussian and bow-free ADMG: BANG’s directed F1 rises from approximately zero to 0.886-0.846 across the three DGPs, while RCD rises from zero to 0.670-0.738. BANG also gains substantial bidirected accuracy. DCD remains weak for directed recovery and selects almost no bidirected edges in the bow-free UP-Observations and MD-Scalings settings.

\end{document}